\theoremstyle{plain}
\newtheorem{thm}{Theorem}[section]
\newtheorem{lem}[thm]{Lemma}
\newtheorem{prop}[thm]{Proposition}
\newtheorem{cor}[thm]{Corollary}
\theoremstyle{definition}
\newtheorem{dfn}{Definition}[section]
\newtheorem{algorithm}{Algorithm}[section]
\numberwithin{equation}{section}
\theoremstyle{remark}
\newtheorem{rem}{Remark}
\newcommand{\R}{\mathbb{R}}
\newcommand{\N}{\mathbb{N}}
\newcommand{\A}{\mathcal{A}}
\newcommand{\linspan}{\operatornamewithlimits{span}}
\newcommand{\change}[1]{{#1}}
\newcommand{\rechange}[1]{{\color{black}{#1}}}
\title{{Robust and Resource Efficient Identification\\of Shallow Neural Networks by Fewest  Samples}}
\author{
Massimo Fornasier\footnote{Department of Mathematics, TU M\"unchen, Boltzmannstr. 3, D-85748 Garching bei M\"unchen, Germany,
{\tt massimo.fornasier@ma.tum.de}; this author was supported by the DFG Grant FO767/6-2 ``Lern- und Wiederherstellungsalgorithmen fuer Multi-Sensor Datenfusion und spektrales Entmischen in der Erdbeobachtung''.},
Jan Vyb\'\i ral\footnote{Department of Mathematics FNSPE, Czech Technical University in Prague, Trojanova 13, 12000 Prague, Czech Republic,
{\tt jan.vybiral@fjfi.cvut.cz}; this author was supported by the grant P201/18/00580S of the Grant Agency of the Czech Republic,
by the Neuron Fund for Support of Science
and by the European Regional Development Fund-Project ``Center for Advanced Applied Science'' (No. CZ.02.1.01/0.0/0.0/16\_019/0000778)},
and Ingrid Daubechies\footnote{{Department of Mathematics, Duke University, NC, USA, {\tt ingrid.daubechies@duke.edu}}}
}
\begin{document}
\maketitle

\begin{abstract}

We address the structure identification and the uniform approximation of sums of ridge functions
$f(x)=\sum_{i=1}^m g_i(\langle a_i,x\rangle)$ on $\R^d$, representing a general form of a shallow feed-forward neural network,
from a small number of  query samples. 
Higher order differentiation, as used in our constructive approximations, of sums of ridge functions  or of their compositions, as in deeper neural network,
yields a natural connection between neural network weight identification and tensor product decomposition identification. In the case
of the shallowest feed-forward neural network, second order differentiation and tensors of order two (i.e., matrices) suffice as
we prove in this paper. 
We use two sampling schemes to perform approximate differentiation -  active sampling, where the sampling points are universal, actively, and randomly designed,
and  passive sampling, where  sampling points were preselected at random from a distribution
with known density. Based on multiple gathered approximated first and second order differentials, our general approximation strategy is developed as a sequence
of algorithms to perform individual sub-tasks. We first perform an active subspace search by approximating the span 
of the weight vectors $a_1,\dots,a_m$.  Then we use a straightforward substitution, which reduces the dimensionality
of the problem from $d$ to $m$. The core of the construction is then the stable and efficient approximation of weights expressed
in terms of rank-$1$ matrices $a_i \otimes a_i$, realized by formulating their individual identification as
a suitable nonlinear program. 
We prove the successful identification by this program of weight vectors
being close to orthonormal and we also show how 
we can constructively reduce to this case by a whitening procedure, without loss of any generality.
We finally discuss  the implementation and the performance of the proposed algorithmic pipeline with extensive numerical experiments,
which illustrate and confirm the theoretical results.
\end{abstract}

{\bf Keywords:} training shallow neural networks, breaking the curse of dimensionality, randomized algorithms, whitening,
nonlinear programming for optimizations in matrix subspaces

{\bf AMS subject classification (MSC 2010):}  82C32, 92B20, 65D15, 60B20

\tableofcontents


\section{Introduction and main results}

\subsection{Introduction}

In the last decade, deep neural networks (NN) outperformed other pattern recognition methods, 
achieving even superhuman skills in some domains \cite{Intro1, Intro3, Intro2}.
In the meanwhile, the success of NNs has been further confirmed in speech recognition \cite{Intro4}, optical character recognition \cite{Intro5},
games solution \cite{Intro7,Intro6} and many other areas. Unfortunately, training a neural network usually involves a non-convex optimization
and the process may get stuck at one of the many of its local minimizers. Furthermore, due to the huge number of parameters of multi-layer NNs
and the multitude of local minimizers, the performance of a neural network is often difficult to explain and interpret.
This black-box feature makes often NNs not the first-choice machine learning method in those areas,
where interpretability is a crucial issue (like security, cf. \cite{Intro11}) or for those applications where one wants to extract new insights from data \cite{Intro12}.

It is therefore of interest to know, which neural networks can be uniquely determined in a stable way by finitely many training points.
In fact, the unique identifiability is clearly a form of interpretability. The motivating  problem of this paper is the robust and resource efficient training of feed forward neural networks \cite{HOT06,HS06}.
Unfortunately, it is known that training a very simple (but general enough) neural network is indeed NP-hard \cite{BR92,Judd}.
Even without invoking fully connected neural networks, recent work \cite{FSV,maulvyXX} showed that even the training of one single
neuron (ridge function or single index model) can show any possible degree of intractability. Recent results \cite{ba17,Kaw16,memimo19,SoCa16,rova18},
on the other hand, are more encouraging, and  show that minimizing a square loss of a (deep) neural network does not have in general
or asymptotically (for large number of neurons)
poor local minima, although it may retain the presence of critical saddle points.

In this paper we present conditions for a shallow neural network to be provably and constructively identifiable with a number of samples,
which is polynomially depending  on the dimension of the network. Moreover, we prove that our procedure is robust to perturbations.
Our results hold with uniform approximation.
For the implementation we do not require high dimensional optimization methods and no concerns about complex energy loss landscapes
need to be addressed, but only classical and relatively simple calculus and linear algebra tools are used
(mostly function differentiation and singular value decompositions).

\change{The notation used throughout the paper is rather standard. For $0<p<\infty$, we denote by $\|x\|_p=\Bigl(\sum_{j=1}^d |x_j|^p\Bigr)^{1/p}$
the $p$-(quasi)-norm of a vector $x\in\R^d$. This notation is complemented by setting $\|x\|_\infty=\max_{j=1,\dots,d}|x_j|$. 
If $M\in\R^{m\times d}$ is an $m\times d$ matrix, 
we denote by $\|M\|_F$ 
the Frobenius norm and by $\|M\|$ 
the spectral norm of $M$. 
The inner product of two vectors $x,y\in\R^d$ is denoted by $\langle x,y\rangle=x^Ty.$
Their tensor product is a rank-1 matrix denoted by $x\otimes y=xy^T.$
For any subspace $A$ of vectors or matrices, we denote $P_A$ the orthogonal projection onto $A$.}
More specific notation is introduced along the way, when needed.

\subsection{Ridge functions and tensor decompositions}

We focus in particular on shallow feed forward neural networks
\begin{equation}\label{NNone}
\sum_{i=1}^m \alpha_i \sigma \left (\sum_{j=1}^m w_{i j} x_j + \theta_i \right),
\end{equation}
which constitute the main building blocks of deeper nets. To approach this problem, we study the more general task of the identification from minimal point queries of sums of ridge functions of the type
\begin{equation}\label{sumsridge}
\sum_{i=1}^m g_i(\langle a_i,x\rangle),\quad x\in\R^d,
\end{equation}
for some functions $g_i:\R\to\R$ and some non-zero vectors $a_i\in\R^d$.
We assume that the functions $g_i$ and the weights (or ridge directions)  $a_i$'s are both unknown. Ridge function approximation has been  extensively studied in mathematical statistics under the name of {\it projection pursuit},
see for instance \cite{dojo89,hu85,kli92}. The identification of sums of ridge functions has also been thoroughly considered in the approximation theory \cite{ca03,codadekepi12,co15,co14,deospe97,FSV,li02,li92,losh75,maulvyXX,pe99,pi97}, in particular we mention the work \cite{BP}, where  higher order differentiation was used to ``extract''  from the function $f$ and ``test'' its principal directions $a_i$'s against some given vectors $c_j$'s: 
\begin{equation*}
D^{\alpha_1}_{c_1}\dots D^{\alpha_k}_{c_k}f(x)=\sum_{i=1}^m g_i^{(\alpha_1+\dots+\alpha_k)}(\langle a_i,x\rangle)\langle a_i,c_1\rangle^{\alpha_1}\dots
\langle a_i,c_k\rangle^{\alpha_k},
\end{equation*}
where $k\in\N$, $c_i\in\R^d$, $\alpha_i\in\N$ for all $i=1,\dots,k$ and $D^{\alpha_i}_{c_i}$
is the $\alpha_i$-th derivative in the direction $c_i$. Hence, differentiation establishes a direct link between identification of the weights $a_i$'s and tensor decompositions \cite{HA12}. Interestingly, \cite{momo} shows that learning the weights of a simple neural network (which essentially coincides with \eqref{sumsridge})
is as hard as the problem of decomposition of a tensor built up from these weights.

In order to avoid instability due to numerical differentiation and active sampling (active choice of point queries),  ``weak differentiation'' approaches have been proposed. Let us describe the main ideas: Given an empirical sampling of points $x_k \sim \mu$ according to a probability distribution $\mu$, several methods, such as Stein's lemma and differentiation by parts with respect
to known density $p(x)$ of $\mu$ \cite{jasean,kli92}, have been considered to build from point queries empirical approximations to  tensors  corresponding to the expected value of higher order derivatives,
for instance 
\begin{eqnarray}
\Delta^k_N(f)=\frac{1}{N} \sum_{l=1}^N f(x_{l})(-1)^k  \frac{\nabla^k   p(x_{l})}{p(x_{l})} &\approx& \int_{\mathbb R^d}  f(x) (-1)^k \frac{\nabla^k p(x)}{p(x)} p(x) dx \nonumber \\
&=& \int_{\mathbb R^d} \nabla^k f(x) d \mu(x) =\mathbb E_{x\sim \mu} [\nabla^k f] \label{passivelearn} \\
&=& \sum_{i=1}^m  \left ( \int_{\mathbb R^d} g^{(k)}(\langle a_i,x\rangle) d \mu(x)\right ) \underbrace{a_i \otimes \dots \otimes a_i}_{\mbox{$k$-times}}.\nonumber 
\end{eqnarray}
In the case of second order tensors, i.e., $k=2$ this approach comes under the name of {\it principal Hessian directions} \cite{kli92}. This case is particularly relevant, because it deals with empirical approximation of matrices of the type
$$
\Delta^2_N(f)\approx \sum_{i=1}^m  \int_{\mathbb R^d} g''(\langle a_i,x\rangle) d \mu(x) a_i \otimes a_i.
$$
In case of orthogonal weights $a_i$'s and $m\leq d$, the identification of the weights is in principle solvable by spectral decomposition. However, this method leaves open the issue of dealing with non-orthogonal weights and the overdetermined case of $m > d$. In order to tackle both these issues the idea has been extended to third order tensors ($k=3$) and tensor decompositions. Using precisely the approximation \eqref{passivelearn}, in the recent paper \cite{jasean} the authors proposed and analyzed the algorithm NN-LIFT, which learns a two-layer feed-forward neural network, where the second layer has a linear activation function.
These results build upon the work \cite{angeja} where symmetric non-orthogonal tensor decompositions are shown to be tractably computable.  The non-orthogonal case is in fact addressed by reducing it via an orthogonalization procedure, called {\it whitening}, to the symmetric orthogonal tensor decomposition, which is known to be tractable \cite{kolda,rob14}.

The approaches based on the approximation \eqref{passivelearn}, e.g., both the principal Hessian directions \cite{kli92} and the recent one in \cite{jasean}
for third order tensors,  {may} suffer from a significant drawback: they are based on the decomposition of {\it one single instance matrix/tensor}, which is the empirical approximation to the expected value of (higher order) weak derivatives. In fact, it is well known that spectral and tensor decompositions are in general unstable processes, unless spectral gaps  and well-conditioning are guaranteed \cite{st90}. The error estimates appearing in \cite{kli92,jasean} look very similar, see, e.g., \cite[Theorem 4.1]{kli92} and \cite[Theorem 3, formula (12) or Lemma 9, Lemma 10]{jasean}, and contain inverse proportional terms with respect to eigenvalues or tensor coefficients {(with higher order power in the case of tensors)}, on which no control can be  provided, unless one assumes well-conditioning a priori.
In other words, if the one matrix/tensor at hand happens to have unstable decomposition, then one is simply left with bad luck. One may argue that this situation  may generically not occur, but no proof is provided so far.

Another drawback of \cite{kli92}, as mentioned above, is that the approach via principal Hessian directions cannot deal with non-orthogonal weights. In the paper \cite{jasean} the authors claim that, while a matrix decomposition is only identifiable up to orthogonal components, tensors can have identifiable non-orthogonal components and use this argument to motivate the necessity of the complexity and potential higher instability of third order tensors. Unfortunately, the low-rank tensor decomposition problem is usually not well-posed \cite{deli08}. (This is
based on the result that there are rank-$(r+1)$ tensors 
in the closure of rank-$r$ tensors.) 
Nonetheless, in certain regimes and under certain assumptions, also for $m$-rank tensors with $m >d$ and without assumptions of near-orthonormality, it is
uniquely solvable. A very helpful characterization of the regime where tensor decomposition is a
well-posed problem can be given in terms of the generic rank $r_{gen}(d,s)$: if 
$m \leq r_{gen}(d,s) := \frac{1}{d+1}\change{{d+s\choose s}}$ 
then with the exception of nongeneric cases, every $m$-rank $s$-tensor in $\mathbb R^d$ has a unique rank-$m$ decomposition (up to rescaling); moreover there are algorithms to find such a decomposition, mainly based on algebraic methods \cite{chci02,me06,oeot13}, whose stability under perturbation is presently not known.

In this paper we approach the problem of the weight identification by using a {robust} procedure, which does not suffer from the potential instabilities of being based on a single matrix/tensor instance as in \cite{jasean,kli92}. Moreover, we disprove the claim  that it is necessary to use higher order tensors in order to deal with non-orthogonal weights:
in fact, by developing an appropriate whitening procedure, we will exclusively build our identification procedure on matrices, making our approach  resource efficient and {potentially} more stable than tensor technology, which is in general more susceptible to intractability and instabilities \cite{deli08,hastad,hilim}. 
{Also in our error estimates, see, e.g., \eqref{eq:estnorm1}, \eqref{eq:estnorm2-1}, we require inverse proportional terms with respect to conditioning of the problem encoded by constants $\alpha, \alpha_2$, which we will introduce below; however, as we use a lower order differentiation and matrices (not tensors), the power magnitude of these terms is smaller than for higher order differentiations and tensors as in, e.g., \cite[Theorem 3, formula (12) or Lemma 9, Lemma 10]{jasean}, where such terms appear even at the sixth power.}
In this paper we focus on the case of $m\leq d$, \change{i.e., when the number of neurons $m$ is at most equal to the underlying dimension $d$}.
In \cite{FKRV} we are addressing the overdetermined case of $m>d$ and of two hidden layer\footnote{In view of a certain ambiguity in the literature, we clarify that, for two hidden layers, we mean here one more fully nonlinear layer with respect to \eqref{NNone}.} feed-forward neural networks. 

\change{
\subsection{Outline of the approach}

The aim of this paper is the structure identification and uniform approximation of sums of ridge functions
\begin{equation}\label{eq:sec2_sum}
f(x)=\sum_{i=1}^m g_i(\langle a_i,x\rangle),\quad x\in B_1^d=\{x\in\R^d: \|x\|_2\le 1\}.
\end{equation}
We assume throughout that the vectors $a_1,\dots,a_m\in\R^d$ are linearly independent and, therefore, $m\le d.$
Nevertheless, the typical setting we have in mind is that the number $d\gg 1$ of variables is very large
and the number $m$ of summands in \eqref{eq:sec2_sum} might be \rechange{much} smaller than $d$, i.e. $m\ll d.$

\rechange{Sections \ref{dimred}-\ref{sec:funct} address the identification of \eqref{eq:sec2_sum} under the assumption that $\{a_1,\dots,a_m\}$
are close to an orthonormal basis. In Section \ref{whitening} (see also Remark \ref{rem1}) we show how this assumption can be removed
without any loss of generality.
Our approach is based on the following fundamental steps,
which will be realized in a robust constructive/algorithmic way:}

\begin{itemize}
\item[1.] ({\bf Active subspace}) By using pointwise evaluations of the network $f$ we approximate (strong or weak) gradients, e.g., 
$$
\nabla f(x) = \sum^m_{i=1} g_i'(\langle a_i, x\rangle)a_i \in  A = \operatorname{span} \left\lbrace a_1, \dots, a_m \right\rbrace,
$$ 
at different points $x$, and we construct by  Algorithm \ref{alg1} an approximating space $\tilde A \approx A$, see Theorem \ref{thm3} or Theorem \ref{thm3-1} in Section \ref{dimred}.

\item[2. ] ({\bf Dimensionality reduction}) We recall that in this paper $m \leq d$. First of all, we show that we can reduce the problem to the case of $d=m$. Let us choose any orthonormal basis of $\tilde A$ and 
arrange it as the columns of a matrix $\tilde A \in \R^{d \times m}$ with some abuse of notation. Then
\[
f(x)  \approx 
f(P_{\tilde A} x) = f(\tilde A \tilde A^T x).
\]
We define the lower dimensional network
$$
\tilde f(y) := f(\tilde A y ) : \R^m \to \R,
$$
which has weights $\alpha_1 = \tilde A^T a_1, \dots, \alpha_m = \tilde A^T a_m$. 
Note that $\tilde A \alpha_i = P_{\tilde A} a_i \approx a_i$, and therefore $a_i$ can be approximately recovered from $\alpha_i$.
In summary, if the active subspace $A$ of $f$ is approximately known, then we can construct $\tilde f$, such that
the identification of $f$ and $\tilde f$ are equivalent. Hence, as we show in more details in Theorem \ref{initthm} of Section \ref{dimred2}, without loss of generality we can assume that $f$ maps $\R^m$ to $\R$ and that $a_i \in \R^m$.

\item [3. ] ({\bf Principal Hessian space}) While first order differentiation provides information about the active subspace
$A = \operatorname{span} \left\lbrace a_1, \dots, a_m \right\rbrace$, we need to query higher order derivatives in order
to access the individual weights $a_1, \dots, a_m$. Again by pointwise evaluations of the network $f$ we approximate
(strong or weak) Hessians and we construct by Algorithm \ref{alg3} an  approximating space $\widetilde {\mathcal A}$
of $ {\mathcal A}=\operatorname{span}\{a_i\otimes a_i, i=1,\dots,m\}\subset \R^{m\times m}$. The approximation results
are collected in Theorem \ref{thm:projapp} and Theorem \ref{thm:projapp2} in Section \ref{princhess}.

\item[4. ] ({\bf Individual weight recovery})
\rechange{Once the space $\widetilde {\mathcal A} \approx {\mathcal A}$ is constructed, the robust approximation of the weights $a_1, \dots, a_m$ is reduced
to the problem of identifying near rank-$1$ matrices in $\widetilde {\mathcal A}$.
In Section \ref{locmaxima} we solve the problem under the assumption that the weights $\{a_i:i=1,\dots,m\}$
are close to orthonormal.
In that case,}
the geometry of the space $\widetilde {\mathcal A} \approx {\mathcal A}=\operatorname{span}\{a_i\otimes a_i, i=1,\dots,m\}\subset \R^{m\times m}$ can be described by the following Euclidean representation:
\begin{figure}[h!]
  \centering
\includegraphics[width=0.4\textwidth]{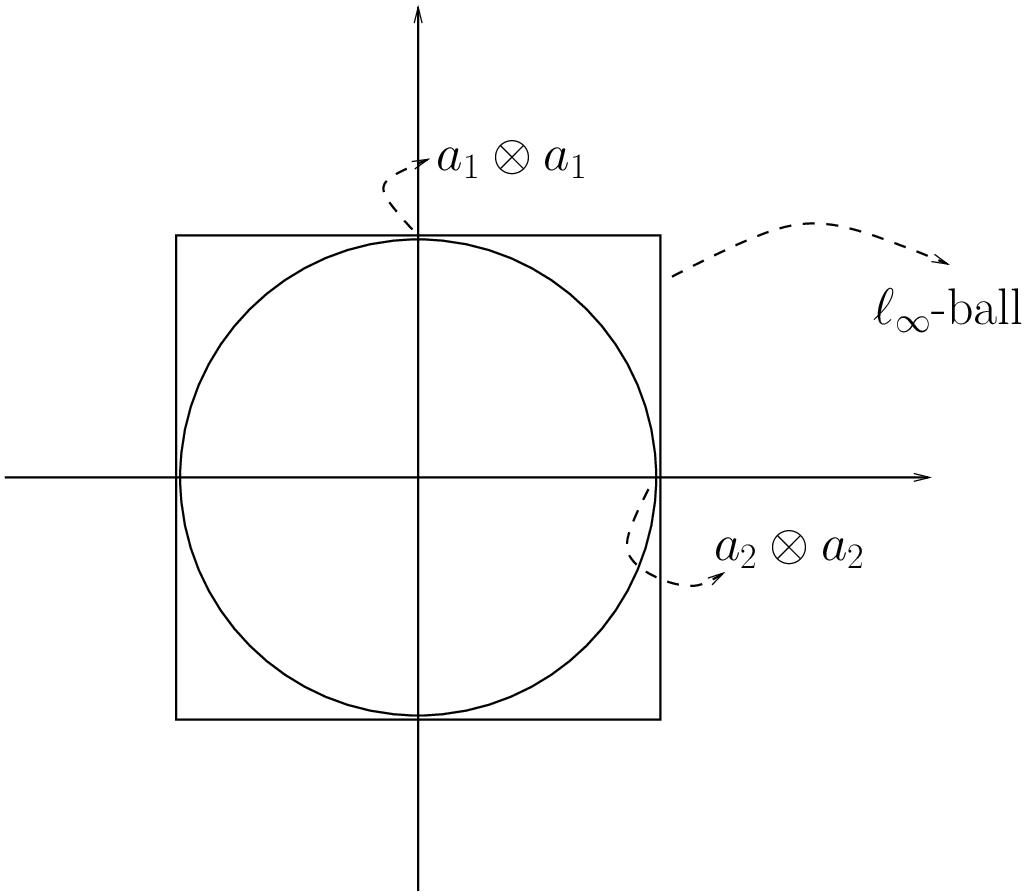}
\caption{If $a_i\otimes a_i$ were orthonormal then they would also be the extremal points of the matrix operator norm in the Frobenius ball of $\A$.}
\label{fig:00}
\end{figure}

\rechange{We therefore consider} the following robust nonconvex program
$$
{\rm arg\ max} \ \|M\|,\quad {\rm s.t.} \quad M\in\widetilde {\mathcal A}, \|M\|_F\le 1,
$$
searching for extremal matrices of the operator norm within the Frobenius ball, 
and we prove with Theorem \ref{thm:recoveridge} that its (local) maximizing solutions have first singular vectors $\hat a_i$ which are approximations to any one $a_i$ up to sign. Conversely, we also show in Proposition \ref{converse} that every $a_i \otimes a_i$ can be approximated by a local maximizer.
The solution of this minimization problem is approached by an iterative  gradient ascent-type  algorithm, Algorithm \ref{alg7},
with Theorem \ref{thm:recovery2} establishing its local convergence.

\item[5. ] ({\bf Identification of activation functions or ridge profiles}) Once the weights are recovered (up to sign) it is not hard to build approximating ridge profiles $\hat g_i$ and construct an approximation $\hat f$ of the full network. The construction is summarized as follows: Let $(\hat b_j)_{j=1}^m$ be the dual basis to $(\hat a_j)_{j=1}^m$, then one can define $\hat g_j(t):=f(t\hat b_j)$, $t\in (-1/\|\hat b_j\|_2,1/\|\hat b_j\|_2)$ and $\displaystyle\hat f(x):=\sum_{j=1}^m \hat g_j(\langle\hat a_j,x\rangle)$.
We show in the concluding Theorem \ref{identact} that $\hat f$ is a good uniform approximation to the original network.
\end{itemize}

\begin{rem}\label{rem1}({\bf Whitening}) As we have seen in the step 4, the problem of the identification of near rank-$1$ matrices in
$\widetilde {\mathcal A} \approx {\mathcal A}=\operatorname{span}\{a_i\otimes a_i, i=1,\dots,m\}\subset \R^{m\times m}$
is greatly simplified if one can assume that $\{a_1, \dots, a_m \}$
are nearly orthonormal vectors.
\rechange{If this condition is not fulfilled, we propose an additional step called \emph{whitening}, which may be evoked between
the steps 3 and 4 above. Indeed, we prove in Section \ref{whitening} that}, without loss of generality, we can always assume that the weights are nearly orthonormal.
For this sake, we consider any positive definite matrix $\widetilde G \in \widetilde {\mathcal A}$ of $\|\widetilde G\|_F=1$ with maximal smallest eigenvalue
and compute its singular value or spectral decomposition
$
\widetilde G= \widetilde  U \widetilde D  \widetilde U^T.$
If we now denote $ \widetilde W =  \widetilde D^{-\frac{1}{2}}  \widetilde U^T$ the so-called \emph{whitening matrix}, then 
we show in Theorem \ref{thm:review1} that the system of vectors $\{ \widetilde W a_i: i=1,\dots,m \}$ defines a near-orthogonal basis.
In view of the simple reformulation
$$
f(\widetilde W^T x)= \sum_{i=1}^m g_i(\langle a_i,  \widetilde W^T x\rangle)=\sum_{i=1}^m \tilde g_i(\langle \widetilde W a_i/\|\widetilde W a_i\|_2, x\rangle)=\tilde f(x),
$$
for $\tilde g_i(t) = g_i(\|\widetilde W a_i\|_2 t)$, we can further assume without loss of generality that the vectors
$\{a_i : i = 1, \dots, m\}$ are nearly orthonormal in first place.
\end{rem}

\subsection{Main result of the paper}

At this point, it is worth to summarize all the construction through the different algorithms and intermediate results outlined above in a single higher level result, which we specify for the case of {\it active sampling}, exclusively for the sake of simplicity. A similar statement would hold also for the case of {\it passive sampling}, which allows for more generic distributions.

We introduce two matrices. First we shall assume that the matrix 
\begin{equation}\label{eq:Jf_1}
J[f]:=\int_{{\mathbb S}^{d-1}}\nabla f(x) \nabla f(x)^T d \mu_{{\mathbb S}^{d-1}}(x)
\end{equation}
has maximal rank $m$, being $\mu_{{\mathbb S}^{d-1}}$\footnote{The use of the uniform measure on the sphere for active sampling is by no means a restriction,
more general distributions could be used with no significant difference in the results.} the uniform measure on the sphere ${\mathbb S}^{d-1}$. The matrix $J[f]$ ensures that sampling approximate gradients of the network $f$ uniformly on the sphere is gathering enough information for the active subspace $A = \operatorname{span}\{a_1,\dots,a_m\}$ to be fully, robustly, and constructively identifiable by simple PCA. 
Then we consider also the matrix 
$$
H_2[f] := \int_{\mathbb S^{m-1}} \operatorname{vec}(\nabla^2 f (x)) \otimes   \operatorname{vec}(\nabla^2 f(x)) d \mu_{\mathbb S^{m-1}}(x)
$$
of rank $m$ (the symbol $\operatorname{vec}(\nabla^2 f (x))$ stands for the vectorization of the Hessian of $f$, see \eqref{eq:vec'} for its precise definition). The matrix $H_2[f]$ also ensures that sampling approximate Hessians of the network $f$ uniformly on the sphere is gathering enough information for the active subspace $\mathcal A = \operatorname{span}\{a_1\otimes a_1 ,\dots,a_m \otimes a_m\}$ to be fully, robustly, and constructively identifiable by simple PCA.
}
\begin{thm}\label{mainresult}
Let $m\le d$ and let 
$f$ be a real-valued function defined on the neighborhood of $B_1^d$, which takes the form
$$
f(x) = \sum_{i=1}^m g_i(\langle a_i,x\rangle),
$$
where $g_i,i=1,\dots,m,$ are three times continuously differentiable on a neighborhood of $[-1,1]$
and $\{a_1,\dots,a_m\}\subset\R^d$ are linearly independent. \rechange{We additionally assume both $J[f]$ and $H_2[f]$ of maximal rank $m$ and well-conditioned.}
Let $\epsilon>0$. Then Algorithms \ref{alg1}-\ref{alg6}
use at most $m_{\mathcal X} [(d+1)+ (m+1)(m+2)/2]$ random exact point evaluations of $f$,
which correspond to numerical differentiation of $f$ with step-size $\epsilon$,
and they construct approximations $\{\hat a_1,\dots,\hat a_m\}$ of the ridge directions $\{a_1,\dots,a_m\}$
up to a sign change for which
\begin{equation}\label{mainres1}
\bigg ( \sum_{i=1}^m \|\hat a_i-a_i\|_2^2 \bigg )^{1/2} \lesssim \epsilon,
\end{equation}
with probability at least $1 - m \exp\Bigl(-\frac{m_{\mathcal X}C}{2 m^2}\Bigr)$ for suitable $C>0$.
Moreover, Algorithm \ref{alg8} constructs an approximating function $\hat f:B_1^d \to \mathbb R$  of the form
$$
\hat f(x) = \sum_{i=1}^m \hat g_i(\langle \hat a_i,x\rangle),
$$
such that
\begin{equation}\label{mainres2}
\| f- \hat f\|_{L_\infty(B^d_1)} \lesssim \epsilon.
\end{equation}
\rechange{The constant $C>0$ as well as the asymptotic constants in \eqref{mainres1} and \eqref{mainres2} may depend on the smoothness properties
of $g_i$ and the $m^{th}$ singular value of $J[f]$ and $H_2[f]$. Furthermore, the constants in \eqref{mainres1} and \eqref{mainres2}
may depend also polynomially on $m$.}
\end{thm}
\rechange{Let us remark, that Algorithms \ref{alg1}-\ref{alg6} realize the statement of Theorem \ref{mainresult}
only when the vectors $a_1,\dots,a_m$ are close to an orthonormal basis in the sense of Definition \ref{def:S}. If this is not the case,
it might be necessary to perform also the whitening step of Section \ref{whitening}.}

In absence of noise on the point evaluations of $f$ as in Theorem \ref{mainresult}, the usage of more point evaluations
\rechange{improves the accuracy in \eqref{mainres1} and \eqref{mainres2} only up to a limit posed by the numerical differentiation,
but it increases the probability of success arbitrarily close to one.}
The result would need to be significantly modified in case of noise on the {\it active} point evaluations of $f$ in order to deal with stability issues determined by employing finite differences in order to approximate the gradient and the Hessian of $f$. 
Contrary to most results available in the literature \cite{ba17,jasean,memimo19,rova18}, our final estimate \eqref{mainres2} holds in the uniform norm, which is deterministic once the weights are correctly identified. In fact, probabilistic least squares error estimates usually investigated in the literature may appear to successfully circumvent the curse of dimensionality, but they are obtained at the practically relevant price of not ensuring uniform error bounds. Not only we avoid the curse of dimensionality, but we also do not compromise on the uniform bound.

In the setting of passive sampling, we assume that the values $f(x_1),\dots,f(x_{m_{\mathcal X}})$ were sampled in points $x_1,\dots,x_{m_{\mathcal X}}$,
which are chosen independently with respect to a probability distribution $\mu$ on $B_1^d.$ Here, we assume that its density $p(x)$ is smooth and known in advance.
Using Stein's lemma \cite{Stein} or integration by parts in a way similar to \cite{jasean} and \cite{kli92}, we transfer our analysis also to the setting of passive sampling, leading first to the reduction
of dimension from $d$ to $m\le d$ and then to the approximation of ${\mathcal A}=\linspan\{a_1\otimes a_1,\dots,a_m\otimes a_m\}$.
Due to the non-local nature of the sampling process, it is rather easy to incorporate noise. Similarly to the active sampling,
our main tools are the matrix concentration inequalities \cite{ahwi02,ol10,ruve07,tr10}.

We conclude this introduction by mentioning that this paper, besides the specific application on identification of shallow neural networks, contains results
of independent mathematical and computational interest. First of all, we proved that stable whitening of matrices is constructively attainable,
see Section \ref{whitening}. This seems to disprove a common belief in the literature, see, e.g., \cite{jasean}, that the use of third or higher order tensors
can not be avoided.
Moreover, we introduced a new nonconvex optimization \eqref{eq:alg} with no spurious local minimizers,
which allows identifying robustly $1$-rank matrices in linear subspaces of symmetric matrices, see Section \ref{locmaxima} and Section \ref{finapprox}. 
We provide a simple and efficient iterative algorithm to perform such an optimization, Section \ref{simpalg}. 
Similar problems appeared recently in the literature and are of independent interest \cite{yusousXX,qusuwrXX}.



\section{Active subspace identification}\label{dimred}

As previously mentioned in the introduction, as soon as we can produce a subspace $\tilde A \subset \mathbb R^d$ approximating  $A=\operatorname{span}\{a_1,\dots,a_m\}$,
we can eventually reduce the problem of approximating a sum of ridge functions in $\R^d$ to the same problem in $\R^m$, preserving
even the near-orthonormality, cf. Remark \ref{rem:orth}.
In this section we describe two different methods of identification of $A$. {The first one applies to the setting of active sampling. It is
motivated by the results in \cite{FSV} and makes use of
first order differences. The second method is inspired by \cite{jasean}, and implements
passive sampling under the assumption that we dispose of the probability density $p(x)$ of the point distribution of the samples.
}

\subsection{Active sampling}\label{sec:idM}

We observe that the vector
\begin{equation}\label{eq:dif:1}
\nabla f(x)=\sum_{i=1}^m g_i'(\langle a_i,x\rangle) a_i
\end{equation}
lies in $A$ for every $x\in\R^d$. We consider \eqref{eq:dif:1}
for different $x_1,\dots,x_{m_{\mathcal X}}\in\R^d$, where $m_{\mathcal X}\ge m$.
In a generic situation for the points $x_i$'s, $A$ is likely given as the span of $\{\nabla f(x_1),\dots,\nabla f(x_{m_{\mathcal X}})\}$.

As we would like to use only function values of $f$ in our algorithms, we use for every $j=1,\dots,d$ and every $k=1,\dots,m_{\mathcal X}$ the Taylor's expansion
\begin{equation}\label{eq:tay1}
\frac{\partial}{\partial e_j}f(x_k)=\frac{f(x_k+\epsilon e_j)-f(x_k)}{\epsilon}-\Bigl[\frac{\partial}{\partial e_j}f(x_k+\eta_{j,k} e_j)-\frac{\partial}{\partial e_j}f(x_k)\Bigr]
\end{equation}
for some $\eta_{j,k}\in[0,\epsilon]$.
We recast the $d\times m_{\mathcal X}$ instances of \eqref{eq:tay1} into the matrix notation
\begin{equation}\label{eq:tayX}
X=Y-{\mathcal E},
\end{equation}
where
\begin{align}\label{eq:def:Y}
X_{j,k}&=\frac{\partial}{\partial e_j}f(x_k),\qquad Y_{j,k}=\frac{f(x_k+\epsilon e_j)-f(x_k)}{\epsilon},
\intertext{and}
\notag {\mathcal E}_{j,k}&=\frac{\partial}{\partial e_j}f(x_k+\eta_{j,k} e_j)-\frac{\partial}{\partial e_j}f(x_k)
\end{align}
for $j=1,\dots,d$ and $k=1,\dots,m_{\mathcal X}$. It follows from \eqref{eq:dif:1}, that $A$ is the linear span of columns of $X$. Naturally, we define $\tilde A$
using the linear span of the singular vectors of $Y$ corresponding to its $m$ largest singular values. This is formalized in the following algorithm.

\vskip.3cm
\fbox{
\begin{minipage}{13.6cm}
\begin{algorithm}\label{alg1}
\emph{\begin{itemize}
\item Construct $Y$ according to \eqref{eq:def:Y}.
\item Compute the singular value decomposition of
\begin{equation*}
Y^T = \left (\begin{array}{lll}
\tilde U_{1}&\tilde U_{2}\end{array}\right )
\left (\begin{array}{ll}\tilde \Sigma_{1}& 0\\
0& \tilde \Sigma_{2 }\\\end{array}\right )
\left (\begin{array}{l}\tilde V_{1 }^T\\\tilde V_{2}^T\end{array}
\right ),
\end{equation*}
where $\tilde \Sigma_{1}$ contains the $m$ largest singular values.
\item Set $\tilde A$ to be the row space of $\tilde V_1^T$.
\end{itemize}}
\end{algorithm}
\end{minipage}}
\vskip.3cm

The aim of the rest of this section is to show, that $\tilde A$ constructed in Algorithm \ref{alg1} is in some sense close to $A$.
To be more specific, we need to bound $\|P_A-P_{\tilde A}\|$, i.e. the operator or the Frobenius 
norm of the difference between the orthogonal projections onto $A$ and $\tilde A$, respectively.
For this first approximation method we need \rechange{the matrix $J[f]$, which was defined in \eqref{eq:Jf_1} as} 
\begin{equation*}
J[f]:=\int_{{\mathbb S}^{d-1}}\nabla f(x) \nabla f(x)^T d \mu_{{\mathbb S}^{d-1}}(x).
\end{equation*}
\change{In some sense, it captures the contribution of each term in \eqref{sumsridge}.
As we want to recover all ridge profiles, we assume that $J[f]$ \rechange{has the maximal rank $m$ and its non-zero singular values are bounded away from zero.}}

\begin{lem}\label{speclem} Assume the vectors $(a_i)_{i=1}^m$ linearly independent, and $\|a_i\|_2=1$ for all $i=1,\dots,m$. Additionally assume 
\begin{align*}
C_1&:=\max_{i=1,\dots,m}\, \max_{-1\le t\le 1} |g'_i(t)| < \infty.
\end{align*}
Suppose that $\sigma_m(J[f])\ge \alpha>0$, i.e., the $m^{th}$ singular value of the matrix $J[f]$ is bounded away from zero. Then for any $s\in (0, 1)$  we have that
\begin{equation}\label{lowerbnd2}
\sigma_m(X) \geq \sqrt{m_{\mathcal X} \alpha (1-s)}
\end{equation}
with probability at least $1 - m \exp\Bigl(-\frac{m_{\mathcal X}\alpha s^2  }{2 C_1^2 m^2}\Bigr)$, where $X$ is constructed as in \eqref{eq:tayX} for $x_1,\dots,x_{m_{\mathcal X}}\in{\mathbb S}^{d-1}$ drawn uniformly at random.
\end{lem}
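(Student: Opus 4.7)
My plan is to recognize $\sigma_m(X)^2$ as the smallest nonzero eigenvalue of $XX^T = \sum_{k=1}^{m_{\mathcal X}} \nabla f(x_k)\,\nabla f(x_k)^T$ and then apply a matrix Chernoff inequality (Tropp's lower-tail bound for sums of independent positive semidefinite random matrices) to this sum. Since each $\nabla f(x_k)=\sum_{i=1}^m g_i'(a_i\cdot x_k) a_i$ lives in the $m$-dimensional subspace $A$, the matrix $XX^T$ is effectively an $m\times m$ PSD sum and the relevant parameters (expected smallest eigenvalue, uniform bound on the summands) can be read off from $J[f]$ and from $C_1$.

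First I would fix any orthonormal basis $U\in\R^{d\times m}$ of $A$ and write $\nabla f(x_k)=U v_k$ with $v_k\in\R^m$. Then $XX^T=U\bigl(\sum_k v_k v_k^T\bigr)U^T$ and the nonzero spectrum of $XX^T$ coincides with the spectrum of $W:=\sum_k v_k v_k^T$, so $\sigma_m(X)^2=\lambda_{\min}(W)$. Because the $x_k$ are i.i.d.\ uniform on $\mathbb S^{d-1}$,
$$\mathbb E[v_k v_k^T]=U^T\,\mathbb E[\nabla f(x_k)\nabla f(x_k)^T]\,U=U^T J[f]\,U,$$
and since $J[f]$ is supported on $A$ with $\sigma_m(J[f])\ge\alpha$, this yields $\lambda_{\min}(\mathbb E W)\ge m_{\mathcal X}\alpha$.

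Next I would bound the summands in operator norm. For $x_k\in\mathbb S^{d-1}$ and $\|a_i\|_2=1$ we have $|a_i\cdot x_k|\le 1$, so $|g_i'(a_i\cdot x_k)|\le C_1$; the triangle inequality gives $\|v_k\|_2=\|\nabla f(x_k)\|_2\le m C_1$, hence $\|v_k v_k^T\|\le m^2 C_1^2$. With these ingredients in hand, the standard matrix Chernoff lower-tail inequality (applied in dimension $m$) reads
$$\Pr\bigl[\lambda_{\min}(W)\le(1-s)\,m_{\mathcal X}\alpha\bigr]\le m\exp\!\Bigl(-\tfrac{s^2 m_{\mathcal X}\alpha}{2 m^2 C_1^2}\Bigr),\qquad s\in(0,1),$$
and taking complements produces exactly \eqref{lowerbnd2} with the stated probability.

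I do not anticipate a real obstacle: the argument is essentially a bookkeeping exercise once one has identified $W$ as a sum of i.i.d.\ rank-one PSD matrices with controlled norm and nondegenerate mean. The only subtle point is justifying that one may replace the ambient $d\times d$ matrix $XX^T$ by its effective $m\times m$ version before invoking matrix Chernoff, so that the pre-factor in the probability bound is $m$ rather than $d$; this is handled cleanly by the change of basis $U$ and by noting that $J[f]$ has range in $A$.
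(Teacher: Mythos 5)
Your proposal is correct and follows essentially the same route as the paper: reduce $XX^T=\sum_k \nabla f(x_k)\nabla f(x_k)^T$ to an effective $m\times m$ PSD sum by projecting onto an orthonormal basis of $A$ (the paper uses the $m\times d$ projector $P^A$, which is just $U^T$ in your notation), bound the summands by $C_1^2m^2$ using $\|\nabla f(x)\|_2\le mC_1$, identify $\mathbb E[X_j]$ with $U^TJ[f]U$ to get $\mu_{\min}\ge m_{\mathcal X}\alpha$, and invoke Tropp's matrix Chernoff lower-tail bound. The observation that the change of basis is what makes the prefactor $m$ rather than $d$ is exactly the point the paper's proof is organized around.
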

\begin{proof} The result will follow by a suitable application of Theorem \ref{chernmat} in the Appendix.
\change{We choose  an orthonormal basis $w_1,\dots,w_m$ of $A$ and denote by $W$ a $d\times m$ matrix with columns $w_1,\dots,w_m.$}
We observe that $\sigma_j(X)=\sigma_j(W^TX)=\sqrt{\sigma_j(W^TX X^TW)}$,
$$
X X^T=\sum_{l=1}^{m_{\mathcal X}} \nabla f(x_l)\nabla f(x_l)^T
$$
and
$$
W^TX X^TW=\sum_{l=1}^{m_{\mathcal X}} W^T\nabla f(x_l)\nabla f(x_l)^T W.
$$
Furthermore, we obtain for every $x\in\R^d$
\begin{align}
\notag\sigma_1(W^T\nabla f(x)\nabla f(x)^TW)&=
\sigma_1(\nabla f(x)\nabla f(x)^T)= \|\nabla f(x)\nabla f(x)^T\|_F\\
\label{eq:est_rem}&=\|\nabla f(x)\|_2^2\rechange{=\left\|\sum_{i=1}^m g_i'(\langle a_i,x\rangle)a_i\right\|_2^2}\\
\notag & \rechange{\le \left(\sum_{i=1}^m |g_i'(\langle a_i,x\rangle)|\cdot\|a_i\|_2\right)^2\le C_1^2m^2.}
\end{align}
Hence $X_j=W^T\nabla f(x_j)\nabla f(x_j)^TW$ is a random $m\times m$ positive-semidefinite matrix, that is almost surely bounded.
Moreover,
$$
{\mathbb E}\,X_j=W^T\int_{{\mathbb S}^{d-1}}\nabla f(x) \nabla f(x)^T d \mu_{{\mathbb S}^{d-1}}(x)W=W^T J[f] W.
$$
We conclude that $\displaystyle \mu_{\min}=\mu_{\min}\Bigl(\sum_{j=1}^{m_{\mathcal X}}{\mathbb E}X_j\Bigr)\ge m_{\mathcal X}\alpha$, and  by Theorem \ref{chernmat} in the Appendix
$$
\sigma_m(X)=\sqrt{\sigma_m(W^TX X^TW)}\ge \sqrt{\mu_{\min}(1-s)}\ge \sqrt{m_{\mathcal X}\alpha(1-s)}
$$
with probability at least
\begin{align*}
1-m \exp\Bigl({-\frac{\mu_{\min}s^2}{2C_1^2m^2}}\Bigr)&\ge 1 - m \exp\Bigl(-\frac{m_{\mathcal X}\alpha s^2  }{2  C_1^2 m^2}\Bigr).\qedhere
\end{align*}
\end{proof}
\begin{rem}
If we further assume that $a_1,\dots,a_m$ are $\varepsilon$-nearly-orthonormal and $w_1,\dots,w_m$ are orthonormal vectors with
$$
{\mathcal S}(a_1,\dots,a_m)=\biggl(\sum_{i=1}^m\|a_i-w_i\|_2^2\biggr)^{1/2}\le\varepsilon,
$$
we can improve \eqref{eq:est_rem} to
\begin{align*}
\sigma_1(W^T\nabla f(x)\nabla f(x)^TW)&\le \|\nabla f(x)\|^2_2=\Bigl\|\sum_{i=1}^m g_i'(\langle a_i,x\rangle)a_i\Bigr\|^2_2\\
&\le \biggl(\Bigl\|\sum_{i=1}^m g_i'(\langle a_i,x\rangle)w_i\Bigr\|_2+\Bigl\|\sum_{i=1}^m g_i'(\langle a_i, x\rangle)(a_i-w_i)\Bigr\|_2\biggr)^{2}\\
&\le \biggl[\Bigl(\sum_{i=1}^m|g_i'(\langle a_i, x\rangle)|^2\Bigr)^{1/2}+\sum_{i=1}^m |g_i'(\langle a_i, x\rangle)|\cdot\|a_i-w_i\|_2\biggr]^2\\
&\le (1+\varepsilon)^2 \sum_{i=1}^m|g_i'(\langle a_i,x\rangle)|^2\le C_1^2(1+\varepsilon)^2m.
\end{align*}
The rest of the proof then follows in the same manner, only the probability changes to
$$
1 - m \exp\Bigl(-\frac{m_{\mathcal X}\alpha s^2  }{2  C_1^2 (1+\varepsilon)^2m}\Bigr).
$$
The same remark applies also to Theorem \ref{thm:projapp} below.
\end{rem}
The following theorem quantifies the distance between the subspace $\tilde A$ constructed in Algorithm \ref{alg1} and $A$.
\begin{thm}\label{thm3} Assume the vectors $(a_i)_{i=1}^m$ linearly independent, and $\|a_i\|_2=1$ for all $i=1,\dots,m$. Additionally assume that
\begin{align*}
C_1&:=\max_{i=1,\dots,m}\, \max_{-1\le t\le 1} |g'_i(t)| < \infty
\end{align*}
and that the Lipschitz constants of all $g_j',j=1,\dots,m$, are bounded by $C_2<\infty.$

Let $\tilde A$ be constructed as described in Algorithm \ref{alg1} by sampling $m_{\mathcal X} (d+1)$ values of $f$. Let $0<s<1$, and assume $\sigma_m(J[f])\ge \alpha>0.$
Then
$$ 
\|P_A-P_{\tilde A}\|_F \le \frac{2C_2\epsilon m}{\sqrt{\alpha(1-s)}-C_2\epsilon m} 
$$
with probability at least $1 - m \exp\Bigl(-\frac{m_{\mathcal X}\alpha s^2  }{2 m^2 C_1^2}\Bigr)$.
\end{thm}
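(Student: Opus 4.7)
\medskip
\noindent\textit{Proof plan.} The strategy is to interpret Algorithm~\ref{alg1} as an instance of singular-subspace perturbation: the matrix $X$ from \eqref{eq:tayX} has all its columns in $A$, so under the rank assumption (which will be guaranteed with high probability by Lemma~\ref{speclem}), the column span of $X$ equals $A$, while $\tilde A$ is precisely the top-$m$ left singular subspace of the perturbed matrix $Y = X + \mathcal{E}$. The proof therefore reduces to (i) a deterministic control of the finite-difference error $\|\mathcal{E}\|_F$ in terms of $C_2$, $\epsilon$, and $m$, (ii) a probabilistic lower bound on $\sigma_m(X)$ via Lemma~\ref{speclem}, and (iii) an application of a Wedin-type perturbation inequality for orthogonal projections onto singular subspaces.

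For step (i), I would expand $\mathcal{E}_{j,k}$ using $\partial_{e_j} f(y) = \sum_{i=1}^m g_i'(a_i\cdot y)\, a_{i,j}$ and the $C_2$-Lipschitz continuity of each $g_i'$, so that
$$
|\mathcal{E}_{j,k}| \le \sum_{i=1}^m C_2\,|a_i\cdot \eta_{j,k} e_j|\cdot |a_{i,j}| \le C_2\,\epsilon \sum_{i=1}^m a_{i,j}^2.
$$
Setting $v_j := \sum_i a_{i,j}^2$, we have $\sum_j v_j = \sum_i \|a_i\|_2^2 = m$ and $v_j \le m$ because $a_{i,j}^2 \le \|a_i\|_2^2 = 1$; hence $\sum_j v_j^2 \le m \sum_j v_j = m^2$, which yields the clean deterministic bound $\|\mathcal{E}\|_F \le C_2\,\epsilon\,m\,\sqrt{m_{\mathcal X}}$ and, a fortiori, $\|\mathcal{E}\|_\infty \le C_2\,\epsilon\,m\,\sqrt{m_{\mathcal X}}$.

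For steps (ii) and (iii), Lemma~\ref{speclem} supplies, with the stated probability, $\sigma_m(X) \ge \sqrt{m_{\mathcal X}\,\alpha(1-s)}$; in particular $X$ has rank exactly $m$, so its column span coincides with $A$ and $\sigma_{m+1}(X) = 0$. The Wedin $\sin\Theta$ theorem (applied to the column spaces of $X$ and $Y$) then gives a bound of the form
$$
\|P_A - P_{\tilde A}\|_F \le \frac{2\,\|\mathcal{E}\|_F}{\sigma_m(X) - \sigma_{m+1}(Y)} \le \frac{2\,\|\mathcal{E}\|_F}{\sigma_m(X) - \|\mathcal{E}\|_\infty},
$$
where the second inequality uses $\sigma_{m+1}(Y) \le \sigma_{m+1}(X) + \|\mathcal{E}\|_\infty = \|\mathcal{E}\|_\infty$. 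Substituting the bounds from (i) and (ii) and simplifying the common factor $\sqrt{m_{\mathcal X}}$ in numerator and denominator yields exactly
$$
\|P_A - P_{\tilde A}\|_F \le \frac{2 C_2\,\epsilon\,m}{\sqrt{\alpha(1-s)} - C_2\,\epsilon\,m},
$$
with the probability inherited directly from Lemma~\ref{speclem}.

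The main obstacle, in my view, is not any single calculation but rather making sure that the Wedin-type bound is applied in the one-sided form appropriate to the fact that $X$ is exactly rank-$m$ (so the spectral gap reduces to $\sigma_m(X)$ minus the perturbation norm, with no need for an assumed gap inside the spectrum). Beyond that, one needs to note that the probabilistic event of Lemma~\ref{speclem} automatically makes the denominator positive as long as $\epsilon$ is small enough that $C_2\,\epsilon\,m < \sqrt{\alpha(1-s)}$, which is the implicit regime in which the stated estimate is meaningful; for $\epsilon$ below this threshold the bound is finite and delivers arbitrarily accurate subspace recovery as $\epsilon \to 0$.
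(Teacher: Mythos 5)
Your proposal is correct and follows essentially the same route as the paper: bound the finite-difference error $\|\mathcal{E}\|_F$ using the Lipschitz constant $C_2$, lower-bound $\sigma_m(X)$ with Lemma~\ref{speclem}, and apply Wedin's bound combined with Weyl's inequality. The only cosmetic difference is which "side" of Wedin you use (you take the gap $\sigma_m(X)-\sigma_{m+1}(Y)$ and bound $\sigma_{m+1}(Y)\le\|\mathcal{E}\|_\infty$, whereas the paper takes $\bar\alpha=\sigma_m(Y)$ and bounds it below by $\sigma_m(X)-\|\mathcal{E}\|_F$); both yield the identical final estimate.
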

\begin{proof}
We intend to apply the so-called \emph{Wedin's bound}, as recalled in Theorem \ref{wedin} in the Appendix, to estimate the distance between $A$ 
and $\tilde A$. 
If we choose $B=X^T$ and $\tilde B=Y^T$, we get $\Sigma_2=0$ and we observe that \eqref{separa1}
and \eqref{separa2} are satisfied with $\bar\alpha=\sigma_{m}(Y^T)$. Therefore, Theorem \ref{wedin} implies
\begin{align}
\notag\|P_A-P_{\tilde A}\|_F&= \|V_1 V^T_1-\tilde V_1\tilde V_1^T\|_F
\le \frac{2\|X-Y\|_F}{\sigma_{m}(Y^T)}\\
\label{eq:proj1}&\le \frac{2\|X-Y\|_F}{\sigma_{m}(X^T)-\|X-Y\|_F},
\end{align}
where we have used Weyl's inequality $|\sigma_{m}(X^T)-\sigma_m(Y^T)|\le \|X-Y\|_F$ in the last step.
To continue in \eqref{eq:proj1}, we have to estimate $\|X-Y\|_F$ and $\sigma_m(X^T)$.

We use the relation
$$
\Bigl|\frac{\partial}{\partial e_j}f(x_k+\eta_{j,k} e_j)-\frac{\partial}{\partial e_j}f(x_k)\Bigr|=
\Bigl|\sum_{i=1}^m[g'_i(\langle a_i,x_k+\eta_{j,k}e_j\rangle)-g'_i(\langle a_i,x_k\rangle)]a_{i,j}\Bigr|\le C_2\epsilon \sum_{i=1}^m a_{i,j}^2
$$
to obtain the estimate 
\begin{align}\label{eq:proj2}
\|X-Y\|_F&=\|{\mathcal E}\|_F\le C_2\epsilon \Bigl(\sum_{k=1}^{m_{\mathcal X}}\sum_{j=1}^d \Bigl(\sum_{i=1}^m a_{i,j}^2\Bigr)^2 \Bigr)^{1/2}\\
\notag&\le C_2\epsilon \sqrt{m_{\mathcal X}}\sum_{j=1}^d \sum_{i=1}^m a_{i,j}^2= C_2\epsilon\sqrt{m_{\mathcal X}}m.
\end{align}
The statement now follows by a combination of \eqref{eq:proj1} with \eqref{eq:proj2} and \eqref{lowerbnd2}.
\end{proof}
\begin{rem} The same argument as in the proof of Theorem \ref{thm3} allows to show that
$$
\sigma_m(Y)-\sigma_{m+1}(Y)\ge \sqrt{m_{\mathcal X}}(\sqrt{\alpha(1-s)}-2C_2\epsilon m)
$$
with the same probability as before. Hence, for $\epsilon$ small enough and $m_{\mathcal X}$ large,
there is (with high probability) a gap in the spectrum of $Y$ between $\sigma_m(Y)$ and $\sigma_{m+1}(Y)$.
This can be used to detect $m$ if it is unknown.
\end{rem}

{
\subsection{Passive sampling}
In the previous sections we investigated the identification of the subspace $A$ when the sample points of $f(x)$ can be actively chosen.
For that we used classical differentiation and Taylor's residuals and we assumed exact evaluations of the function.
In this section, we discuss the approximation of $A$ in the more realistic scenario where the distribution of the sampling points is known,
but not actively chosen, and the point evaluations are affected by noise.

As in \cite{jasean}, we assume that we are given a probability distribution $\mu$, whose density $p(x)$ is known or has been previously estimated from empirical data \cite{degy85}. For simplicity we assume $\operatorname{supp}(p) \subset B_1^d$.
We also assume that we are given a probability space $(\mathcal V, \pi)$ and a suitable collection of $\mathcal C_c^1$ functions $\varphi_\nu:\mathbb R^d \to \mathbb R$, for $\nu \in \mathcal V$, with the properties 
\begin{equation}\label{nuass}
\operatorname{supp}\varphi_\nu \subset B_1^d \mbox{ for all }  \nu \in \mathcal V, \quad  \max_{\nu \in \mathcal V} \max_{x \in B_1^d} \left \|\frac{\nabla \varphi_\nu(x)}{p(x)} \right \|_2 \leq C_{\mathcal V},
\end{equation}
and for which the matrix
\begin{equation}\label{Jnuass}
J_{\mathcal V}[f]= \int_{\mathcal V} \left ( \int_{\mathbb R^d} \nabla f(x) \varphi_{\nu}(x) dx \right) \left ( \int_{\mathbb R^d} \nabla f(x) \varphi_{\nu}(x) dx \right)^T d \pi(\nu)
\end{equation}
has \change{rank $m$}. 

\begin{rem}
\begin{itemize}
\item[(i)] {The probability space $(\mathcal V, \pi)$, the set of functions $\{\varphi_\nu: \nu \in \mathcal V\}$, and the full-rank condition for $J_{\mathcal V}[f]$
may appear abstract and a bit implicit at the first look. We clarify their role first in the most simple setting when $m=1$, $g(t)=t$
and ${\mathcal V}=\{0\}.$ Then $f(x)=g(\langle a, x\rangle)=\langle a, x\rangle$ and \eqref{Jnuass} becomes
$$
J_{\mathcal V}[f]= a a^T\cdot \left(\int_{\R^d}\varphi_0(x)dx\right)^2.
$$
It will turn out later (cf. Lemma \ref{speclem2} and Theorem \ref{thm3-1}), that we need to choose $\varphi_0:B_1^d\to \R$, such that $\alpha/C_{\mathcal V}^2$ is as large as possible,
where $\alpha=\sigma_1(J_{\mathcal V}[f])$ stands for the spectral norm of $J_{\mathcal V}[f]$. Finally, if $p(x)=\frac{1}{\omega_d}$ for every $x\in B_1^d$ with $\omega_d$ denoting the Lebesgue volume of $B_1^d$
and $\varphi_0$ is radial with $\varphi_0(1)=0$, we get
\begin{align*}
\frac{\alpha}{C_{\mathcal V}^2}&=\frac{\displaystyle \left(\int_{\R^d}\varphi_0(x)dx\right)^2}{\displaystyle \omega_d^2 \max_{x\in B_1^d}\|\nabla \varphi_0(x)\|^2_2}
=\frac{\displaystyle \left(\int_0^1 d\omega_d r^{d-1}\varphi_0(r)dr\right)^2}{\displaystyle \omega_d^2 \max_{0<r<1} |\varphi'_0(r)|^2}
\le d^2 \left(\int_0^1 \int_r^1 1 dsr^{d-1}dr\right)^2\\
&=d^2 \left(\int_0^1 (1-r) r^{d-1}dr\right)^2=d^2\left(\frac{1}{d}-\frac{1}{d+1}\right)^2=\frac{1}{(d+1)^2}.
\end{align*}
We observe that the conditions on $\{\varphi_\nu: \nu \in \mathcal V\}$ and $J_{\mathcal V}[f]$ may include an implicit dependence on $d$.
This is in accordance with the very well-known fact, that even the identification of one neuron (or one ridge function) can suffer the curse of dimension
if we do not pose any additional restrictions on its activation function or its weights, cf. \cite{BR92,FSV,maulvyXX}.}
\item[(ii)]  In fact, one may relate $(\mathcal V, \pi)$ and $\{\varphi_\nu: \nu \in \mathcal V\}$
directly to the density $p$ as follows.
We first consider a bounded resolution of the identity, i.e., a set of nonnegative smooth and compactly supported functions $\psi_\nu \geq 0$ such that
$\int_{\mathcal V} \psi_\nu (x) d\pi(\nu) \equiv 1$ for all $x \in B_1^d$ and
$\max_{\nu \in \mathcal V}\max_{x \in B_1^d} \{| \psi_\nu(x) |, \| \nabla \psi_\nu(x) \|_2 \}\leq C_\Psi$.
In case the set $\mathcal V$ is discrete, then $\{\psi_\nu: \nu \in \mathcal V\}$ is simply a classical bounded partition of the unity.
Additionally we pick yet another bounded and smooth function $q \geq 0$ such that $\max_{x \in B_1^d} \{\frac{ q(x)}{p(x)},  \| \frac{\nabla q(x)}{p(x)}\|_2 \}\leq C_q$.

Then, one can define 
$$
\varphi_\nu(x) = \psi_\nu(x) q(x),
$$ 
and it is not difficult to show that conditions \eqref{nuass} are fulfilled. In fact, for densities $p$ with bounded derivatives, e.g., Gaussian mixtures, one could choose for instance $q(x) =\frac{1}{2} p(x)^2$. In fact, in this case, $q(x)/p(x)= \frac{1}{2} p(x)$ and $\nabla q(x)/p(x) = \nabla p(x)$. Moreover,  the matrix
$$
J_{\mathcal V}[f]= \int_{\mathcal V} \left ( \int_{\mathbb R^d} \nabla f(x) \psi_\nu(x) q(x) dx \right) \left ( \int_{\mathbb R^d} \nabla f(x) \psi_\nu(x) q(x) dx \right)^T d \pi(\nu),
$$
would correspond to the superposition of ``weighted local evaluations'' of $\nabla f \otimes \nabla f$ over the supports of the functions $\psi_\nu$ to build a full-rank matrix.
\end{itemize}
\end{rem}

Now, differently from  \cite{jasean}, we consider the following empirical vectors
\begin{eqnarray}
Y_j=- \frac{1}{N} \sum_{k=1}^N (f(x_k) + n_k) \frac{\nabla   \varphi_{\nu_j}(x_k)}{p(x_k)} &\approx& - \int_{\mathbb R^d}  f(x) \frac{\nabla \varphi_{\nu_j}(x)}{p(x)} p(x) dx \nonumber \\
&=& \int_{\mathbb R^d} \nabla f(x) \varphi_{\nu_j}(x) dx \nonumber \\
&=& \sum_{i=1}^m  \left ( \int_{\mathbb R^d} g'(\langle a_i,x\rangle) \varphi_{\nu_j}(x) d x\right)  a_i, \label{passivelearn2}
\end{eqnarray}
generated at random by sampling i.i.d. $\nu_j \sim \pi$, $j=1,\dots,m_{\mathcal X}$.
Here, the random variables $n_k$ model the noise in the evaluation of the function $f$ in the point $x_k$
and we will assume that $n_k$ are independent bounded centered random variables, i.e., 
\begin{equation}\label{noise}
|n_k|\leq C_{\mathcal N} \mbox{ with probability } 1, \mbox{ and } \mathbb E[n_k]=0.
\end{equation}
The assumption that the noise is bounded can be relaxed to unbounded noise with thin tails (for instance sub-Gaussian noise) at the cost of adding in Theorem \ref{thm3-1} below a negative term to the probability in the statement, which accounts for the probability that the noise realizations are in fact bounded.
We define the matrix $Y_{\mathcal V} \in \mathbb R^{d\times m_{\mathcal X}}$, whose columns are $Y_j$, for $j=1,\dots,m_{\mathcal X}$. We similarly denote $X_{\mathcal V}\in \mathbb R^{d\times m_{\mathcal X}}$ the matrix with columns $X_j =\int_{\mathbb R^d} \nabla f(x) \varphi_{\nu_j}(x) dx$. \change{With similar arguments as Lemma \ref{speclem} and Theorem \ref{thm3} we can show the following result, whose proof is postponed to the Appendix.

\begin{thm}\label{thm3-1} Assume the vectors $(a_i)_{i=1}^m$ linearly independent, and $\|a_i\|_2=1$ for all $i=1,\dots,m$. Additionally assume that
\begin{align*}
C_\ell&:=\max_{i=1,\dots,m}\, \max_{-1\le t\le 1} |g^{(\ell)}_i(t)| < \infty, \quad \ell=0,1.
\end{align*}
Let $\tilde A$ be constructed as described in Algorithm \ref{alg1} by substituting there $Y$ with $Y_{\mathcal V}$, built by sampling $N$ values of $f$ as in \eqref{passivelearn2}. Let $0<s<1$,
and assume $\sigma_m(J_{\mathcal V}[f])\ge \alpha>0$.
Then, \change{for every $0<\eta<\sqrt{\alpha(1-s)}$},
\begin{equation}\label{eq:estnorm1}
\|P_A-P_{\tilde A}\|_F \le \frac{2 \eta}{\sqrt{\alpha(1-s)}-\eta} 
\end{equation}
with probability at least $1 - \change{\exp\Bigl(-\frac{\eta^2N}{8(2mQ)^2}+\frac{1}{4}\Bigr)}
- m \exp\Bigl(-\frac{m_{\mathcal X}\alpha s^2  }{2 (mQ)^2}\Bigr)$,
where $Q= (C_0 +C_{\mathcal N}/m)C_\mathcal{V}$.
As a consequence, for any $\varepsilon>0$ and $\delta>0$,
\begin{equation}\label{simplest1}
\|P_A-P_{\tilde A}\|_F \le \varepsilon
\end{equation}
with probability at least $1- \delta$ as soon as the number $N$ of sampling values of $f$ fulfills
\begin{equation}\label{simplest2}
N\geq \frac{32(2+\varepsilon)^2(mQ)^2\ln (3/\delta)}{\varepsilon^2\alpha (1-s)}.
\end{equation}
\end{thm}
}

\section{Dimensionality reduction}\label{dimred2}

The main aim of this section is Theorem \ref{initthm}, which allows to reduce the general case of identification of a shallow network where the input dimension is larger than the number of neurons, $d\ge m$,
to the case where $d=m$, hence, with a potentially significant dimensionality reduction. Due to the typical range of parameters we have in mind, this step is crucial in reducing the complexity
of the approximation of \eqref{eq:sec2_sum}.

\change{
\begin{thm}[Reduction to $m$ dimensions]\label{initthm}
Let us consider a function 
\begin{equation}\label{eq:sum_ridge}
f(x) = \sum_{i=1}^m g_i(\langle a_i, x\rangle),\quad x \in B_1^d,
\end{equation}
for $m \leq d$ and we denote $A= \operatorname{span}\{a_1,\dots,a_m\}$.
Let us now fix a $m$-dimensional subspace $\tilde A \subset \mathbb R^d$, for which we choose an orthonormal basis $\{\tilde a_1,\dots,\tilde a_m\}$, so that 
$\tilde A= \operatorname{span}\{\tilde a_1,\dots,\tilde a_m\}$.\footnote{With a certain abuse of notation,  we often use in this paper the symbol $A$
also to denote the matrix whose columns are the vectors $\{a_1,\dots,a_m\}$.
Similarly, we arrange the vectors $\tilde a_i$'s as columns of matrix $\tilde A$.}
We denote by $P_A$ and $P_{\tilde A}=\tilde A {\tilde A}^T$ the orthogonal projections onto $A$ and $\tilde A$, respectively.
Then the function 
\begin{equation}\label{fundapprox0}
\tilde f(y) = \sum_{i=1}^m g_i(\langle \alpha_i, y\rangle),\quad y \in  B_1^m \subset \R^m,
\end{equation}
with $\alpha_i = {\tilde A}^T a_i$ satisfies 
for any other function $\hat f:\mathbb R^m \to \mathbb R$ the following estimate 
\begin{equation}\label{fundapprox1}
\| f- \hat f( {\tilde A}^T \cdot) \|_\infty \leq \|f\|_{\operatorname{Lip}} \|P_A - P_{\tilde A}\| + \| \tilde f - \hat f \|_\infty.
\end{equation}
Moreover, for any other set of vectors $\{\hat \alpha_1, \dots, \hat \alpha_m\} \subset \mathbb R^m$,
\begin{equation}\label{fundapprox2}
\| a_i - \tilde A \hat \alpha_i\|_2 \leq  \| P_A - P_{\tilde A} \| + \| \alpha_i - \hat \alpha_i\|_2.
\end{equation}
\end{thm}
\begin{proof}
Let $x\in B_1^d$. We have
\begin{align*}
|f(x)-\hat f({\tilde A}^Tx)|&\le |f(x)-\tilde f({\tilde A}^Tx)|+|\tilde f({\tilde A}^Tx)-\hat f({\tilde A}^Tx)|\\
&\le|f(x)-f({\tilde A}{\tilde A}^Tx)|+\|\tilde f-\hat f\|_\infty=|f(P_Ax)-f(P_{\tilde A}x)|+\|\tilde f-\hat f\|_\infty\\
& \le \|f\|_{\rm Lip}\cdot \|P_A x-P_{\tilde A}x\|_2+\|\tilde f-\hat f\|_\infty.
\end{align*}
If we take the supremum over $x\in B_1^d$, we get \eqref{fundapprox1}.

The proof of \eqref{fundapprox2} follows from
\begin{align*}
\| a_i - \tilde A \hat \alpha_i\|_2 &\leq\| a_i - \tilde A \alpha_i\|_2 + \| \tilde A(\alpha_i - \hat \alpha_i)\|_2
=  \| (P_A - P_{\tilde A}) a_i \|_2 + \|\tilde A( \alpha_i - \hat \alpha_i)\|_2\\
&\leq  \| P_A - P_{\tilde A} \| + \|\alpha_i - \hat \alpha_i\|_2.\qedhere
\end{align*}
\end{proof}

In view of Theorem \ref{initthm}, we start the identification of a sum of $m$ ridge functions \eqref{eq:sum_ridge} on $\R^d$
by first approximating the subspace $A=\operatorname{span}\{a_1,\dots,a_m\}$ by another subspace ${\tilde A}$, such that
the operator norm $\|P_A-P_{\tilde A}\|$ is small. Then we consider the function $\tilde f(y)=f(\tilde Ay)$,
which is a sum of $m$ ridge functions on $\R^m$ with ridge profiles $\alpha_1,\dots,\alpha_m$.
Naturally, we will not be able to recover them exactly and we will only obtain
some good approximation $\{\hat \alpha_1, \dots, \hat \alpha_m\} \subset \mathbb R^m$.
Then \eqref{fundapprox2} shows that the vectors ${\tilde A}\hat \alpha_i$
approximate well the original ridge profiles $a_i$. 
Finally, if $\hat f$ is a uniform approximation of $\tilde f$ on $B_1^m$, then \eqref{fundapprox1} implies that the function $\hat f({\tilde A}^Tx)$
is a uniform approximation of $f$ on $B_1^d$. 

Observe that the sampling of $\tilde f$ can be easily transferred to sampling of $f$
by $\tilde f(y)= f({\tilde A}y).$}

\section{Principal Hessian subspace}\label{princhess}
While first order differentiation provides information about the active subspace $A = \operatorname{span} \left\lbrace a_1, \dots, a_m \right\rbrace$, we need to query higher order derivatives in order to access the individual weights $a_1, \dots, a_m$. 
First of all we construct here an approximation $\widetilde \A$ to the space $\A=\operatorname{span}\{a_i \otimes a_i, i=1,\dots,m\}$.
{As in the previous sections we describe two different methods of identification of $\A$. {The first one is by active sampling  and makes use of
second order differences. The second one implements passive sampling under the assumption that we dispose of the probability density $p(x)$ of the point distribution.
}
 
 \subsection{Active sampling}
We start   by generating again } $m_{\mathcal X} \in \mathbb N$ points $x_l \sim \mu_{\mathbb S^{m-1}}$, $l=1,\dots, m_{\mathcal X}$ uniformly at  random   on the $m-1$ dimensional sphere (remind that now
we assume $m=d$), and we define
\begin{equation*}
(\Delta[f](x_l))_{j,k}=\frac{f(x_l+\epsilon(e_j+e_k))-f(x_l+\epsilon e_j)-f(x_l+\epsilon e_k)+f(x_l)}{\epsilon^2},\quad j,k=1,\dots,m.
\end{equation*}
As $\Delta[f](x)\sim {\nabla^2 f}(x)=\sum_{i=1}^m g_i''(\langle a_i, x\rangle)a_i\otimes a_i\in\A$, we define $\widetilde\A$ as the $m$-dimensional subspace
approximating the points $(\Delta[f](x_l))_{l=1}^{m_{\mathcal X}}$ in the least-square sense.
{For later use we define $Y$ the  $m^2\times m_{\mathcal X}$ matrix with columns $\operatorname{vec}(\Delta[f](x_l))$, i.e.,
\begin{equation}\label{constrY}
Y= (\operatorname{vec}(\Delta[f](x_1))| \dots | \operatorname{vec}(\Delta[f](x_{m_{\mathcal X}})).
\end{equation}
}
We show below that $\widetilde \A$ is indeed a good approximation to $\A$ by showing that the difference of the respective orthogonal projections
$\|P_\A - P_{\widetilde \A}\|_{F \to F}$ in the operator norm associated to the Frobenius norm of matrices  is small  with high probability,
as soon as $m_{\mathcal X}$ is large enough.

We need now to introduce some  notations to facilitate the presentation. We define the vectorization of a matrix $A=(a_{i,j})_{i,j} \in \mathbb R^{m \times m}$
as the column vector in $\mathbb R^{m^2}$
\begin{equation}\label{eq:vec'}
\operatorname{vec}(A)_k := a_{\lfloor \frac{k-1}{m} \rfloor +1, (k-1\mod m)+1}, \quad k=1,\dots,m^2.
\end{equation}
For two matrices $A,B \in  \mathbb R^{m \times m}$ we define their {\it vectorized tensor product} by
\begin{equation}
A \otimes_v B:= \operatorname{vec}(A) \otimes \operatorname{vec}(B) = \operatorname{vec}(A)\operatorname{vec}(B)^T. \label{eq:vec}
\end{equation}
(Note that such a product of matrices does coincide neither with the Hadamard product nor with the Kronecker product.)
Thanks to these definitions and notations we can introduce the matrix
$$
H_2[f] := \int_{\mathbb S^{m-1}} {\nabla^2 f}(x) \otimes_v  {\nabla^2 f}(x) d \mu_{\mathbb S^{m-1}}(x).
$$
This $m^2 \times m^2$ matrix plays exactly the same role as $J[f]$ in Section \ref{sec:idM}. 

\vskip.3cm
\fbox{
\begin{minipage}{13.6cm}
\begin{algorithm}\label{alg3}
\emph{\begin{itemize}
\item {Construct $Y$ as in \eqref{constrY}}.
\item Compute the singular value decomposition of
\begin{equation*}
Y^T = \left (\begin{array}{lll}
\widetilde U_{1}&\widetilde U_{2}\end{array}\right )
\left (\begin{array}{ll}\widetilde \Sigma_{1}& 0\\
0& \widetilde \Sigma_{2 }\\\end{array}\right )
\left (\begin{array}{l}\widetilde V_{1 }^T\\\widetilde V_{2}^T\end{array}
\right ),
\end{equation*}
where $\widetilde \Sigma_{1}$ contains the $m$ largest singular values.
\item Set $\widetilde A$ to be the space of matrices, whose vectorization lies in the row space of $\widetilde V_1^T$.
\end{itemize}
}
\end{algorithm}
\end{minipage}
}
\vskip.3cm

As we follow the same strategy as the one used in Section \ref{sec:idM} to approximate the space $A=\operatorname{span}\{a_i, i=1,\dots,m\}$,
we limit ourselves to reformulate it in the context of the vector space of matrices $\A$.
We start with a technical estimate, which is essentially based on Taylor's theorem. 
\begin{lem}\label{lem6}
Assume the vectors $(a_i)_{i=1}^m$ satisfy $\|a_i\|_2=1$ for all $i=1,\dots,m$ and assume
that $g_j, j=1,\dots,m$, are two times differentiable with the Lipschitz constant of all $g_j'',j=1,\dots,m$ bounded from above by $C_3>0.$
Then, for all $x \in \mathbb{S}^{m-1}$,
$$
\|{\nabla^2 f} (x) - \Delta[f](x) \|_F \leq 2C_3 m \epsilon.
$$
\end{lem}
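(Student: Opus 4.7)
The plan is to proceed entry-wise: I would first write $(\Delta[f](x))_{j,k}$ in an exact integral form using Taylor's theorem, compare it to the exact Hessian entry $\sum_{i=1}^m g_i''(a_i\cdot x)a_{i,j}a_{i,k}$, and then control the Frobenius norm of the error by decomposing it as a sum of $m$ simple matrices (one per ridge summand).

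For the first step, fix $x\in\mathbb{S}^{m-1}$ and indices $j,k$. For each $i$, introduce $\phi_i(u,v):=g_i\bigl(a_i\cdot x+\epsilon a_{i,j}u+\epsilon a_{i,k}v\bigr)$ and apply the identity
\[
\phi_i(1,1)-\phi_i(1,0)-\phi_i(0,1)+\phi_i(0,0)=\int_0^1\!\!\int_0^1 \partial_u\partial_v \phi_i(u,v)\,du\,dv.
\]
Summing over $i$ and dividing by $\epsilon^2$ yields
\[
(\Delta[f](x))_{j,k}=\sum_{i=1}^m a_{i,j}a_{i,k}\int_0^1\!\!\int_0^1 g_i''\bigl(a_i\cdot x+\epsilon a_{i,j}u+\epsilon a_{i,k}v\bigr)\,du\,dv,
\]
so subtracting $(\nabla^2 f(x))_{j,k}=\sum_i a_{i,j}a_{i,k}g_i''(a_i\cdot x)$ and using the Lipschitz bound $|g_i''(s)-g_i''(t)|\le C_3|s-t|$ gives
\[
\bigl|(\nabla^2 f(x)-\Delta[f](x))_{j,k}\bigr|\le C_3\epsilon\sum_{i=1}^m |a_{i,j}a_{i,k}|\bigl(|a_{i,j}|+|a_{i,k}|\bigr).
\]

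For the second step, write $\nabla^2 f(x)-\Delta[f](x)=\sum_{i=1}^m E_i$, where $E_i$ is the matrix with entries obtained from the $i$-th summand above. I would bound $\|E_i\|_F$ by using $|(E_i)_{j,k}|\le C_3\epsilon(a_{i,j}^2|a_{i,k}|+|a_{i,j}|a_{i,k}^2)$, then $(A+B)^2\le 2(A^2+B^2)$, and separability of the double sum to obtain
\[
\|E_i\|_F^2\le 4C_3^2\epsilon^2\Bigl(\sum_j a_{i,j}^4\Bigr)\Bigl(\sum_k a_{i,k}^2\Bigr)=4C_3^2\epsilon^2\|a_i\|_4^4\|a_i\|_2^2.
\]
Since $\|a_i\|_2=1$ forces $\|a_i\|_4\le 1$, this gives $\|E_i\|_F\le 2C_3\epsilon$.

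The conclusion then follows from the triangle inequality:
\[
\|\nabla^2 f(x)-\Delta[f](x)\|_F\le\sum_{i=1}^m\|E_i\|_F\le 2C_3 m\epsilon.
\]
I do not expect any significant obstacle here; the only mildly delicate point is arranging the pointwise bound so that, after squaring and summing over $(j,k)$, the factors separate cleanly into $\|a_i\|_4^4\|a_i\|_2^2$, which is what allows the use of $\|a_i\|_2=1$ without incurring any extra powers of $m$ inside each $\|E_i\|_F$. The linear-in-$m$ factor then arises purely from the triangle inequality over the $m$ ridge directions.
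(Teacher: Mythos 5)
Your proof is correct and follows essentially the same route as the paper's: both derive the entry-wise bound $|(\nabla^2 f(x)-\Delta[f](x))_{j,k}|\le C_3\epsilon\sum_i|a_{i,j}a_{i,k}|(|a_{i,j}|+|a_{i,k}|)$ (you via the double-integral Taylor remainder, the paper via two applications of the mean value theorem — equivalent here since $g_i''$ is Lipschitz), and both then exploit the $j\leftrightarrow k$ symmetry and $\|a_i\|_4\le\|a_i\|_2=1$ to pick up exactly the factor $2C_3m\epsilon$. The only cosmetic difference is the order of operations: you decompose the error as $\sum_i E_i$ and bound each $\|E_i\|_F\le 2C_3\epsilon$ before a final triangle inequality, whereas the paper first splits off a factor $2$ by symmetry at the Frobenius level and then pulls the sum over $i$ outside via Minkowski's inequality.
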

\begin{proof}
Let $g(t)=f(x+te_j+\epsilon e_k)-f(x+te_j)$, where $0\le t\le\epsilon$. Then by the mean value theorem
\begin{align*}
(\Delta[f](x))_{j,k}&=\frac{g(\epsilon)-g(0)}{\epsilon^2}=\frac{g'(\xi_1)}{\epsilon}
=\frac{\frac{\partial f}{\partial x_j}(x+\xi_1e_j+\epsilon e_k)-\frac{\partial f}{\partial x_j}(x+\xi_1e_j)}{\epsilon}\\
&=\frac{\partial^2 f}{\partial x_k\partial x_j}(x+\xi_1 e_j+\xi_2e_k),
\end{align*}
where $0<\xi_1,\xi_2<\epsilon$. Therefore
\begin{align*}
|(\nabla^2 f(x))_{j,k}-(\Delta[f](x))_{j,k}|&=\Bigl|\frac{\partial^2f}{\partial x_k\partial x_j}(x)-
\frac{\partial^2 f}{\partial x_k\partial x_j}(x+\xi_1 e_j+\xi_2e_k)\Bigr|\\
&\le\sum_{l=1}^m \bigl|g_l''(\langle a_l,x\rangle)- g''_l(\langle a_l,x+\xi_1e_j+\xi_2e_k\rangle)\bigr|\cdot|a_{l,j}|\cdot|a_{l,k}|\\
&\le C_3 \epsilon\sum_{l=1}^m |a_{l,j}|\cdot|a_{l,k}|\cdot(|a_{l,j}|+|a_{l,k}|).
\end{align*}
Using triangle inequality and $\|a_j\|_4\le\|a_j\|_2=1$, we estimate
\begin{align*}
\|{\nabla^2 f}(x)-\Delta[f](x)\|_F &\le 2C_3 \epsilon  \left[\sum_{j,k=1}^m \Bigl(\sum_{i=1}^m a_{i,j}^2 |a_{i,k}|\Bigr)^2\right]^{1/2}
\le 2C_3 \epsilon \sum_{i=1}^m \Bigl(\sum_{j,k=1}^m a_{i,j}^4 a_{i,k}^2\Bigr)^{1/2}\\
&=2C_3 \epsilon \sum_{i=1}^m \biggl[\Bigl(\sum_{j=1}^m a_{i,j}^4\Bigr)\Bigl( \sum_{k=1}^m a_{i,k}^2\Bigr)\biggr]^{1/2}
\le 2C_3 \epsilon m.
\end{align*}
\end{proof}

\begin{thm} \label{thm:projapp}Assume the vectors $(a_i)_{i=1}^m$ linearly independent, and $\|a_i\|_2=1$ for all $i=1,\dots,m$. Additionally assume 
\begin{align*}
C_j&:=\max_{i=1,\dots,m} \max_{-1\le t\le 1} |g^{(j)}_i(t)| < \infty,\quad j=0,1,2.
\end{align*}
 Let $\widetilde \A$ be constructed as described in Algorithm \ref{alg3} by sampling $m_{\mathcal X} \left [ (m+1)(m+2)/2 \right ]$ values of $f$. Let $0<s<1$, and assume $\sigma_m(H_2[f])\ge \alpha_2>0$, i.e., the $m^{th}$ singular value of the matrix $H_2[f]$ is bounded away from zero. 
Then
$$ 
\|P_\A - P_{\widetilde \A}\|_{F \to F} \leq \frac{4 C_3 m \epsilon}{\sqrt{\alpha_2 (1-s)} - 2C_3 m \epsilon}
$$
with probability at least $1- m \exp\left (-\frac{m_{\mathcal X} \alpha_2 s^2}{2 m^2 C_2^2} \right ).$ In particular ${\rm{dim}}(\A)={\rm{dim}}(\widetilde \A)=m$.
\end{thm}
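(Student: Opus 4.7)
The plan is to mirror closely the proof of Theorem \ref{thm3}, transposing the argument from gradients in $\R^d$ to Hessians interpreted as vectors in $\R^{m^2}$ via $\operatorname{vec}(\cdot)$. Introduce the ``ideal'' $m^2\times m_{\mathcal X}$ matrix $X$ with columns $\operatorname{vec}(\nabla^2 f(x_l))$. Since $\nabla^2 f(x)=\sum_i g_i''(a_i\cdot x)\,a_i a_i^T\in\A$ for every $x$, the column space of $X$ sits inside $\operatorname{vec}(\A)$. The matrix $Y$ of \eqref{constrY} is a second-order finite-difference approximation to $X$, and Lemma \ref{lem6} bounds each column perturbation by $2C_3 m\epsilon$ in Frobenius norm; summing over the $m_{\mathcal X}$ columns gives $\|X-Y\|_F\le 2C_3 m\epsilon\sqrt{m_{\mathcal X}}$. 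Applying Wedin's bound (Theorem \ref{wedin}) with $B=X^T$, $\tilde B=Y^T$ (so $\Sigma_2=0$) and Weyl's inequality yields, exactly as in \eqref{eq:proj1},
$$
\|P_\A-P_{\widetilde\A}\|_{F\to F}\le \frac{2\|X-Y\|_F}{\sigma_m(X)-\|X-Y\|_F}.
$$

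The core step is then a matrix Chernoff lower bound for $\sigma_m(X)$ analogous to Lemma \ref{speclem}. Fix an orthonormal basis $\{W_1,\dots,W_m\}$ of $\operatorname{vec}(\A)\subset\R^{m^2}$ and let $P^\A$ denote the $m\times m^2$ matrix with rows $W_i^T$; then $\sigma_m(X)=\sigma_m(P^\A X)$, and the i.i.d.\ positive semidefinite random matrices
$$
Z_l:=P^\A\operatorname{vec}(\nabla^2 f(x_l))\operatorname{vec}(\nabla^2 f(x_l))^T(P^\A)^T
$$
satisfy $\mathbb E Z_l=P^\A H_2[f](P^\A)^T$. Since $P^\A$ acts as an isometry on $\operatorname{vec}(\A)$, which contains the range of $H_2[f]$, the smallest eigenvalue of $\mathbb E Z_l$ equals $\sigma_m(H_2[f])\ge\alpha_2$, hence $\mu_{\min}(\sum_{l=1}^{m_{\mathcal X}}\mathbb E Z_l)\ge m_{\mathcal X}\alpha_2$. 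For the almost sure bound, using $\|a_i a_i^T\|_F=\|a_i\|_2^2=1$ and the triangle inequality,
$$
\sigma_1(Z_l)\le\|\operatorname{vec}(\nabla^2 f(x_l))\|_2^2=\|\nabla^2 f(x_l)\|_F^2\le\Bigl(\sum_{i=1}^m|g_i''(a_i\cdot x_l)|\Bigr)^{\!2}\le C_2^2 m^2.
$$
Theorem \ref{chernmat} then yields
$$
\sigma_m(X)\ge\sqrt{m_{\mathcal X}\alpha_2(1-s)}
$$
with probability at least $1-m\exp\!\bigl(-m_{\mathcal X}\alpha_2 s^2/(2m^2 C_2^2)\bigr)$.

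Substituting the bounds on $\|X-Y\|_F$ and $\sigma_m(X)$ into the Wedin inequality gives exactly
$$
\|P_\A-P_{\widetilde\A}\|_{F\to F}\le\frac{4C_3 m\epsilon}{\sqrt{\alpha_2(1-s)}-2C_3 m\epsilon},
$$
with the stated probability. The dimension claim $\dim(\A)=\dim(\widetilde\A)=m$ follows because $\sigma_m(H_2[f])\ge\alpha_2>0$ forces $\A$ to be $m$-dimensional, and the algorithm constructs $\widetilde\A$ from the top $m$ singular vectors of $Y^T$. The sample count $m_{\mathcal X}(m+1)(m+2)/2$ is accounted for by observing that each finite-difference Hessian $\Delta[f](x_l)$ requires $f$ at $x_l$, at the $m$ points $x_l+\epsilon e_j$, and at the $m(m+1)/2$ points $x_l+\epsilon(e_j+e_k)$ with $j\le k$, giving $1+m+m(m+1)/2=(m+1)(m+2)/2$ evaluations per $x_l$. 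The main obstacle, as in Theorem \ref{thm3}, is to correctly set up the projection $P^\A$ onto the (unknown but well-defined) $m$-dimensional subspace $\operatorname{vec}(\A)$ so that the matrix Chernoff inequality can be applied in the intrinsic dimension $m$ rather than the ambient dimension $m^2$; everything else is a direct transcription of Section \ref{sec:idM}.
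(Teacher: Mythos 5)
Your proposal mirrors the paper's proof step by step: define $X$ with columns $\operatorname{vec}(\nabla^2 f(x_l))$, apply Wedin's bound together with Weyl's inequality, invoke Lemma \ref{lem6} for $\|X-Y\|_F\le 2C_3m\epsilon\sqrt{m_{\mathcal X}}$, and apply the matrix Chernoff bound (Theorem \ref{chernmat}) after projecting onto an orthonormal basis of $\operatorname{vec}(\A)$ to get $\sigma_m(X)\ge\sqrt{m_{\mathcal X}\alpha_2(1-s)}$ with the stated probability. This is exactly the argument given in the paper, with the same bounds and the same decomposition; your added accounting of $(m+1)(m+2)/2$ function evaluations per Hessian is correct and just spells out a count the paper states without derivation.
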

\begin{proof}
We define the matrices $X,Y$ whose columns are given by $\operatorname{vec}({\nabla^2 f}(x_j))$, $j=1,\dots, m_{\mathcal X}$ and 
$\operatorname{vec}(\Delta[f](x_j))$, $j=1,\dots, m_{\mathcal X}$ respectively, namely
\begin{eqnarray*}
X&=&(\operatorname{vec}({\nabla^2 f}(x_1))| \dots | \operatorname{vec}({\nabla^2 f}(x_{m_\mathcal{X}}))), \quad Y= (\operatorname{vec}(\Delta[f](x_1))| \dots | \operatorname{vec} (\Delta[f](x_{m_\mathcal{X}}))).
\end{eqnarray*}
Notice that these matrices have dimension $m^2 \times m_\mathcal{X}$. As done in \eqref{eq:proj1} and by assuming for the moment that $\sigma_m(X) \neq 0$ (but obviously $\sigma_{m+1}(X)=0$ because the ${\nabla^2 f}(x_i)$'s lie all in the 
$m$-dimensional space $\A$), we deduce the  estimate
\begin{equation}
\label{eq:projX}
\|P_\A- P_{\widetilde \A}\|_{F \to F} \leq \frac{2 \|X-Y\|_F}{\sigma_m(X) -\|X-Y\|_F}, 
\end{equation}
as an application of Wedin's bound, Theorem \ref{wedin} in the Appendix. 
From Lemma \ref{lem6} we easily deduce
\begin{equation}\label{pippo}
\|X-Y\|_F = \left ( \sum_{j=1}^{m_{\mathcal X}} \|{\nabla^2 f} (x_j) - \Delta[f](x_j) \|_F^2 \right)^{1/2} \leq 2C_3 \epsilon m\sqrt{m_{\mathcal X}}.
\end{equation}\
In order to apply \eqref{eq:projX} we need finally to estimate $\sigma_m(X)$ from below and we shall do it by using again the 
Chernoff's bound for matrices Theorem \ref{chernmat}. 

Given an orthonormal basis $\{B_1, \dots, B_m\}$ for $\A$ we define the projector from $\mathbb R^{m^2} \to \mathbb R^m$ given by
$P^{\A} v =(\operatorname{vec}(B_1)^T v, \dots, \operatorname{vec}(B_m)^T v)$ for any $v \in \mathbb R^{m^2}$. We additionally define
with some abuse of notation
$$
P^{\A} X := (P^{\A} (\operatorname{vec}({\nabla^2 f}(x_1)))| \dots | P^{\A}( \operatorname{vec}({\nabla^2 f}(x_{m_\mathcal{X}})))).
$$
Notice that now this matrix has dimension $m \times m_{\mathcal X}$. Thanks to the fact that $P^{\A}$ is an orthogonal transformation,
we obtain the following equivalences
$$
\sigma_m(X) = \sqrt{\sigma_m((P^{\A} X)(P^{\A} X)^T)}.
$$
Hence to estimate $\sigma_m(X)$, it is sufficient to do it for $\sigma_m((P^{\A} X)(P^{\A} X)^T)$, whose argument is explicitly expressed as a sum 
$$
(P^{\A} X)(P^{\A} X)^T = \sum_{j=1}^{m_{\mathcal X}} X_j,
$$
where
$$
X_j=P^{\A} \operatorname{vec}({\nabla^2 f}(x_j)) \otimes\operatorname{vec}({\nabla^2 f}(x_j)) (P^{\A})^T.
$$
We wish to apply Theorem \ref{chernmat} for the sequence of positive semidefinite matrices $X_1, \dots, X_{m_{\mathcal X}}$. We notice first
that
$$
\mathbb E X_j = P^{\A} H_2^f (P^{\A})^T,
$$
and therefore 
\begin{equation}
\label{chern1}
\mu_{\min} \left( \mathbb E X_j  \right) \geq m_{\mathcal X} \alpha_2. 
\end{equation}
Additionally, for every $x \in \mathbb S^{m-1}$
\begin{eqnarray}
\nonumber \sigma_1(P^{\A} \operatorname{vec}({\nabla^2 f}(x)) \otimes \operatorname{vec}({\nabla^2 f}(x)) (P^{\A})^T) &=& \sigma_1( \operatorname{vec}({\nabla^2 f}(x)) \otimes \operatorname{vec}({\nabla^2 f}(x)))\\
\nonumber &\leq& \| {\nabla^2 f}(x_j)\|_F^2 \\
\label{chern2}&=& \Bigl\| \sum_{i=1}^m g''(\langle a_i,x\rangle) a_i \otimes a_i \Bigr\|_F^2  \\
\nonumber &\leq& C_2^2 m^2. 
\end{eqnarray}
An application of Theorem \ref{chernmat} under conditions \eqref{chern1} and \eqref{chern2}  yields
\begin{equation}\label{pippo2}
\sigma_m(X) \geq \sqrt{m_{\mathcal X} \alpha_2 (1-s)},
\end{equation}
with probability 
$$
1- m \exp\left (-\frac{m_{\mathcal X} \alpha_2 s^2}{2 m^2 C_2^2} \right ).
$$
We conclude from \eqref{pippo} and \eqref{pippo2} that, with the same probability,
\begin{equation*} 
\|P_{\A} - P_{\widetilde \A}\|_{F \to F} \leq \frac{4 C_3 m \epsilon}{\sqrt{\alpha_2 (1-s)} - 2C_3 m \epsilon}.\qedhere
\end{equation*}
\end{proof}

{
\subsection{Passive sampling}
We again assume that we are given a probability space $(\mathcal V, \pi)$ and a suitable collection of $\mathcal C_c^2$ functions
$\varphi_\nu:\mathbb R^d \to \mathbb R$, for $\nu \in \mathcal V$, with the properties
\begin{equation*}
\operatorname{supp}\varphi_\nu \subset B_1^d \mbox{ for all }  \nu \in \mathcal V, \quad
\max_{\nu \in \mathcal V} \max \left\{ \int_{\mathbb R^d} |\varphi_{\nu}(x)|dx, \max_{x \in B_1^d} \left \|\frac{\nabla^2 \varphi_\nu(x)}{p(x)} \right \|_F\right\} \leq C_{\mathcal V,2},
\end{equation*}
where in the latter bound we consider the \change{Frobeniuns} norm. Furthermore, we also assume that the matrix
$$
H_{\mathcal V}[f]= \int_{\mathcal V} \left ( \int_{\mathbb R^d} \nabla^2 f(x) \varphi_{\nu}(x) dx \right) \otimes_v \left ( \int_{\mathbb R^d} \nabla^2 f(x) \varphi_{\nu}(x) dx \right)^T d \pi(\nu)
$$
has full rank.
We consider the following empirical vectors
\begin{eqnarray}
Y_j= \operatorname{vec} \left ( \frac{1}{N} \sum_{k=1}^N (f(x_k)+n_k) \frac{\nabla^2   \varphi_{\nu_j}(x_k)}{p(x_k)}  \right )&\approx&  \operatorname{vec} \left ( \int_{\mathbb R^d}  f(x) \frac{\nabla^2 \varphi_{\nu_j}(x)}{p(x)} p(x) dx  \right )\nonumber \\
&=& \operatorname{vec} \left (\int_{\mathbb R^d} \nabla^2 f(x) \varphi_{\nu_j}(x) dx\right ) \label{passivelearn3}\\
&=& \sum_{i=1}^m  \operatorname{vec} \left ( \left ( \int_{\mathbb R^d} g''(\langle a_i,x\rangle) \varphi_{\nu_j}(x) d x\right)  a_i\otimes a_i \right ), \nonumber
\end{eqnarray}
generated at random by sampling i.i.d. $\nu_j \sim \pi$, $j=1,\dots,m_{\mathcal X}$, for $n_k$ independent random bounded and centered noise fulfilling \eqref{noise}.
We define the matrix $Y_{\mathcal V,2} \in \mathbb R^{m^2 \times m_{\mathcal X}}$, whose columns are $Y_j$, for $j=1,\dots,m_{\mathcal X}$. We similarly denote $X_{\mathcal V,2}\in \mathbb R^{m^2 \times m_{\mathcal X}}$ the matrix with columns $X_j =\operatorname{vec} \left ( \int_{\mathbb R^d} \nabla^2 f(x) \varphi_{\nu_j}(x) dx\right ) $.

\change{The proof of the following result resembles the proofs of Theorem \ref{thm:projapp} and Theorem \ref{thm3-1} and is postponed to the Appendix.} 

\begin{thm} \label{thm:projapp2}Assume the vectors $(a_i)_{i=1}^m$ linearly independent, and $\|a_i\|_2=1$ for all $i=1,\dots,m$. Additionally assume 
\begin{align*}
C_j&:=\max_{i=1,\dots,m} \max_{-1\le t\le 1} |g^{(j)}_i(t)| < \infty,\quad j=0,1,2.
\end{align*}
 Let $\widetilde \A$ be constructed as described in Algorithm \ref{alg3} by substituting there $Y$ with $Y_{\mathcal V,2}$, built by sampling $N$ values of $f$ as in \eqref{passivelearn3}. Let $0<s<1$, and assume $\sigma_m(H_{\mathcal V}[f])\ge \alpha_2>0$.
Then, \change{for $0<\eta<\sqrt{\alpha_2(1-s)}$},
\begin{equation}\label{eq:estnorm2-1}
\|P_\A-P_{\widetilde \A}\|_{F\to F} \le \frac{2 \eta}{\sqrt{\alpha_2(1-s)}-\eta} 
\end{equation}
with probability at least $1 - \change{\exp\left(-\frac{\eta^2N}{8(2mQ)^2}+\frac{1}{4}\right)} - m \exp\Bigl(-\frac{m_{\mathcal X}\alpha_2 s^2  }{2 (mQ)^2}\Bigr)$,
for $Q= (\max\{ C_0,C_2\} +C_{\mathcal N}/m)C_{\mathcal{V},2}$.
As a consequence, for any $\varepsilon>0$ and $\delta>0$,
\begin{equation}\label{simplest1-1}
\|P_\A-P_{\widetilde \A}\|_{F\to F}\le \varepsilon,
\end{equation}
with probability at least $1- \delta$ as soon as the number $N$ of sampling values of $f$ fulfills
\change{
\begin{equation}\label{simplest2-1}
N \geq \frac{32(2+\varepsilon)^2(mQ)^2\ln(3/\delta)}{\varepsilon^2 \alpha_2(1-s)}.
\end{equation}
}
\end{thm}


\section{\rechange{Near orthonormality}}

As we shall see in Section \ref{locmaxima} and as pointed already in the Introduction,
the problem of the identification of near rank-$1$ matrices in $\widetilde {\mathcal A}
\approx {\mathcal A}=\operatorname{span}\{a_i\otimes a_i, i=1,\dots,m\}\subset \R^{m\times m}$
is greatly simplified if one can assume that $\{a_1, \dots, a_m \}$ are nearly orthonormal vectors.
\rechange{In this section, we introduce the concept of near-orthonormality and its basic properties.}


\begin{dfn}
We define
\begin{equation}\label{def:S}
{\mathcal S}(\alpha_1,\dots,\alpha_m)=\inf\Bigl\{\Bigl(\sum_{i=1}^m\|\alpha_i-w_i\|_2^2\Bigr)^{1/2}:w_1,\dots,w_m\quad\text{orthonormal basis in}\ \R^m\Bigr\}
\end{equation}
for every set $\{\alpha_1,\dots,\alpha_m\}\subset\R^m$. We say that unit vectors $a_1,\dots,a_m\in \R^m$ are $\varepsilon$-nearly-orthonormal, if ${\mathcal S}(a_1,\dots,a_m)\leq \varepsilon$ for $\varepsilon>0$ relatively small.
\end{dfn}

\begin{thm}\label{thm:A1} \begin{enumerate}
\item[(i)] Let $a_1,\dots,a_m\in\R^m$ and let $A\in\R^{m\times m}$ be a matrix with columns $a_1,\dots, a_m$. Then
\begin{equation*}
{\mathcal S}(a_1,\dots,a_m)=\Bigl(\sum_{i=1}^m(\sigma_i-1)^2\Bigr)^{1/2},
\end{equation*}
where $\sigma_1\ge\sigma_2\ge\dots\ge0$ are the singular values of $A$.
\item[(ii)] Furthermore,
$$
{\mathcal S}(a_1,\dots,a_m)\le\|AA^T-I_m\|_F\le (\|A\|+1){\mathcal S}(a_1,\dots,a_m).
$$
\end{enumerate}
\end{thm}
\begin{proof}\begin{enumerate}
\item[(i)] The result is very well known and the proof follows easily by singular value decomposition of $A=U\Sigma V^T.$
The closest orthogonal basis $w_1,\dots,w_m$ is given as the columns of the matrix $W=UV^T.$
\item[(ii)] First observe that if $A=U\Sigma V^T$, then
$$
\|AA^T-I_m\|_F=\|U\Sigma^2 U^T-I_m\|_F=\Bigl(\sum_{i=1}^m [\sigma_i^2-1]^2\Bigr)^{1/2}.
$$
Hence
\begin{align*}
{\mathcal S}(a_1,\dots,a_m)&=\Bigl(\sum_{i=1}^m(\sigma_i-1)^2\Bigr)^{1/2}\le \Bigl(\sum_{i=1}^m(\sigma_i-1)^2(\sigma_i+1)^2\Bigr)^{1/2}\\
&= \Bigl(\sum_{i=1}^m(\sigma_i^2-1)^2\Bigr)^{1/2}=\|AA^T-I_m\|_F\\
&\le \max_{i}(\sigma_i+1)\Bigl(\sum_{i=1}^m [\sigma_i-1]^2\Bigr)^{1/2}=(\|A\|+1){\mathcal S}(a_1,\dots,a_m).
\end{align*}
\end{enumerate}
\end{proof}
}

\begin{lem}\label{lem:A1}
Let $\varepsilon>0$ and let  $a_1,\dots,a_m\in\R^m$ with ${\mathcal S}(a_1,\dots,a_m)\le\varepsilon$ and $\|a_i\|_2=1$ for all $i=1,\dots,m$
and  let $A\in\R^{m\times m}$ be a matrix with columns $a_1,\dots, a_m$.
\begin{enumerate}
\item[(i)] Then
$$
(1-\varepsilon)\|y\|_2\le \|Ay\|_2\le (1+\varepsilon)\|y\|_2
$$
for all $y\in\R^m$. The result holds with identical proof also for $A^T$ instead of $A$ substituted in the inequality.
\item[(ii)] Let $\displaystyle M=\sum_{j=1}^m\sigma_j a_j\otimes a_j$, then $\|M\|\le (1+\varepsilon)^2\|\sigma\|_\infty$.
\item[(iii)] $\sum_{k\not=j}\langle a_k,a_j\rangle^2\le 2\varepsilon^2$ for all $j=1,\dots,m.$
\item[(iv)] ${\mathcal S}(a_1\otimes a_1,\dots,a_m\otimes a_m)\le 2\varepsilon.$
\item[(v)] $\displaystyle (1-2\varepsilon)\|\sigma\|_2\le \Bigl\|\sum_{j=1}^m \sigma_j a_j\otimes a_j\Bigr\|_F\le (1+2\varepsilon)\|\sigma\|_2.$
\item[(vi)] Let $b_j,j=1,\dots,m$ be the dual basis of $a_j,j=1,\dots,m$ (i.e. $\langle b_i, a_j\rangle=\delta_{i,j}$). Then
$\|b_j\|_2\le 1/(1-\varepsilon)$ for all $j=1,\dots,m.$

\end{enumerate}
\end{lem}
\begin{proof}
\change{We denote by $W$ be the optimal orthonormal matrix for $A$ and its columns by $w_1,\dots,w_m$.}
\begin{enumerate}
\item[(i)] Then
$$
\|Ay\|_2=\|(A-W)y+Wy\|_2\le \|A-W\|\cdot \|y\|_2+\|Wy\|_2\le (1+\varepsilon)\|y\|_2.
$$
The estimate from below follows by applying the inverse triangle inequality. The proof can be used similarly also for obtaining the bounds for $A^T$ instead of $A$.
\item[(ii)] We estimate by (i)
\change{
$$
\|M\|=\|A\Sigma A^T\|\le \|A\|\cdot\|\Sigma\|\cdot\|A^T\|\le (1+\varepsilon)^2\|\sigma\|_\infty,
$$
where $\Sigma\in\R^{m\times m}$ is a diagonal matrix with $(\sigma_1,\dots,\sigma_m)$ on its main diagonal.
}
\item[(iii)] \change{Fix $j\in\{1,\dots,m\}$ and let $\overline A\in\R^{m\times (m-1)}$ be $A$ without the $j$-th column. Similarly, we define $\overline W$.
Then we obtain
\begin{align*}
\Bigl(\sum_{k\not=j}\langle a_k,a_j\rangle^2\Bigr)^{1/2}&=\|\overline A^Ta_j\|_2\le \|(\overline A-\overline W)^Ta_j\|_2+\|\overline W^Ta_j\|_2\\
&\le \|\overline A-\overline W\|+\|\overline W^T(a_j-w_j)\|_2\le \|\overline A-\overline W\|_F+\|a_j-w_j\|_2\\
&\le \sqrt{2}\|A-W\|_F.
\end{align*}
\item[(iv)]
We use triangle inequality and obtain
$$
\|a_j\otimes a_j-w_j\otimes w_j\|_F\le \|(a_j-w_j)\otimes a_j\|_F+\|w_j\otimes (a_j-w_j)\|_F\le 2\|a_j-w_j\|_2.
$$
Summing this estimate squared over $j=1,\dots,m$, we obtain the result.
}
\item[(v)] \change{Using (iv) and Cauchy-Schwarz inequality, we estimate}
\begin{align*}
\change{\|A\Sigma A^T\|_F}&\change{\le \|W\Sigma W^T\|_F+\|A\Sigma A^T-W\Sigma W^T\|_F}\\
&\le \|\sigma\|_2+\sum_{j=1}^m |\sigma_j|\cdot \|a_j\otimes a_j-w_j\otimes w_j\|_F\\
&\le \|\sigma\|_2+\Bigl(\sum_{j=1}^m\sigma_j^2\Bigr)^{1/2}\cdot\Bigl(\sum_{j=1}^m\|a_j\otimes a_j-w_j\otimes w_j\|_F^2\Bigr)^{1/2}\\
&\le (1+2\varepsilon)\|\sigma\|_2
\end{align*}
and similarly for the other side.

\item[(vi)] 
\change{The result follows from
\begin{align*}
\|b_j\|_2&=\|W^Tb_j\|_2=\|A^Tb_j+(W^T-A^T)b_j\|_2\le \|A^Tb_j\|_2+\|(W^T-A^T)b_j\|_2\\
&\le 1+\|W^T-A^T\|\cdot\|b_j\|_2\le 1+\varepsilon\|b_j\|_2.\qedhere
\end{align*}
}
\end{enumerate}
\end{proof}

The next lemma shows that normalization of a set of vectors does not influence much
their distance to an orthonormal basis.
\begin{lem}\label{lem:2S}
Let $\{\alpha_1,\dots,\alpha_m\}\subset{\mathbb R}^m$ be arbitrary non-zero vectors in ${\mathbb R}^m$. 
Then
$$
{\mathcal S}\Bigl(\frac{\alpha_1}{\|\alpha_1\|_2},\dots,\frac{\alpha_m}{\|\alpha_m\|_2}\Bigr)
\le \sqrt{2}{\mathcal S}(\alpha_1,\dots,\alpha_m).
$$
\end{lem}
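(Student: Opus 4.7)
The plan is to reduce the lemma to a pointwise inequality relating the squared distance of the normalized vector to a unit vector with the squared distance of the original vector. First I would fix an arbitrary orthonormal basis $\{w_1,\dots,w_m\}$, set $r_i = \|\alpha_i\|_2 > 0$ and $d_i = \|\alpha_i - w_i\|_2$, and expand via inner products:
$$
\Bigl\|\frac{\alpha_i}{r_i} - w_i\Bigr\|_2^2 = 2 - \frac{2\langle \alpha_i, w_i\rangle}{r_i}
= \frac{2r_i - (r_i^2 + 1 - d_i^2)}{r_i}
= \frac{d_i^2 - (1-r_i)^2}{r_i},
$$
using $\langle \alpha_i, w_i\rangle = \tfrac{1}{2}(r_i^2 + 1 - d_i^2)$.

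Next I would establish the term-by-term bound $\|\alpha_i/r_i - w_i\|_2^2 \le 2 d_i^2$, which, after multiplying by $r_i>0$, is equivalent to
$$
d_i^2 (2r_i - 1) + (1-r_i)^2 \ge 0.
$$
For $r_i \ge 1/2$ this is immediate since both summands are nonnegative. For $r_i < 1/2$, the reverse triangle inequality $|1-r_i| = |\|w_i\|_2 - \|\alpha_i\|_2| \le d_i$ yields $d_i^2 \ge (1-r_i)^2$; since $2r_i - 1 < 0$, multiplying by this negative factor reverses the inequality, giving $d_i^2(2r_i-1) \ge (1-r_i)^2(2r_i-1)$, so
$$
d_i^2(2r_i - 1) + (1-r_i)^2 \ge (1-r_i)^2(2r_i - 1) + (1-r_i)^2 = 2r_i(1-r_i)^2 \ge 0.
$$

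Finally I would sum over $i$ to conclude
$$
\sum_{i=1}^m \Bigl\|\frac{\alpha_i}{\|\alpha_i\|_2} - w_i\Bigr\|_2^2 \le 2 \sum_{i=1}^m \|\alpha_i - w_i\|_2^2,
$$
then take the infimum over orthonormal bases $\{w_1,\dots,w_m\}$ on the right, which by definition equals $\mathcal S(\alpha_1,\dots,\alpha_m)^2$; the left-hand side is automatically bounded below by $\mathcal S(\alpha_1/\|\alpha_1\|_2,\dots,\alpha_m/\|\alpha_m\|_2)^2$. Taking square roots yields the claim. No step is a genuine obstacle here: the only mildly subtle point is realizing that the apparent blow-up of the factor $1/r_i$ for small $r_i$ is precisely cancelled by the Pythagorean lower bound $d_i^2 \ge (1-r_i)^2$ that holds uniformly for any unit $w_i$.
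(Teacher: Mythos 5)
There is a genuine gap in your Case~2, and it is not just a typographical slip. Multiplying the correct inequality $d_i^2 \ge (1-r_i)^2$ by the negative factor $2r_i - 1$ gives $d_i^2(2r_i-1) \le (1-r_i)^2(2r_i-1)$, not $\ge$ as you wrote; with the corrected direction you obtain the upper bound $d_i^2(2r_i-1) + (1-r_i)^2 \le 2r_i(1-r_i)^2$, which says nothing about nonnegativity. Worse, the pointwise inequality $\|\alpha_i/r_i - w_i\|_2^2 \le 2\|\alpha_i - w_i\|_2^2$ that you are trying to prove is simply \emph{false} for a general orthonormal basis: take $\alpha_i = -\tfrac14 w_i$, so that $r_i = \tfrac14$ and $d_i = \tfrac54$; then the left side is $\|-w_i - w_i\|_2^2 = 4$ while the right side is $2\cdot\tfrac{25}{16} = \tfrac{25}{8} < 4$, and correspondingly $d_i^2(2r_i-1) + (1-r_i)^2 = -\tfrac{7}{32} < 0$. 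So ``taking the infimum over orthonormal bases on the right'' at the end is not available to you, because the inequality fails for some of those bases.

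The missing ingredient is exactly the normalization step the paper makes explicit: one may assume, without loss of generality, that $\langle\alpha_i, w_i\rangle \ge 0$ for every $i$, since replacing $w_i$ by $-w_i$ preserves orthonormality and can only decrease $\|\alpha_i - w_i\|_2$; in particular, the infimum defining $\mathcal S(\alpha_1,\dots,\alpha_m)$ is attained by such a basis. Under this assumption $d_i^2 = r_i^2 + 1 - 2\langle\alpha_i,w_i\rangle \le 1 + r_i^2$, and then for $r_i < \tfrac12$ one has $d_i^2(1-2r_i) \le (1+r_i^2)(1-2r_i) = (1-r_i)^2 - 2r_i^3 \le (1-r_i)^2$, which is what you need; your Case~1 is unchanged. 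The paper's own proof sidesteps your algebra entirely by a projection argument (the distance from $w_i$ to $\alpha_i$ dominates the distance from $w_i$ to the line $\mathbb R\alpha_i$, and then it uses $1-t \le 1-t^2$ for $t = \langle\alpha_i,w_i\rangle/r_i \in [0,1]$), but it too relies crucially on the same sign normalization, which is where your ``small $r_i$ blow-up is cancelled'' heuristic actually breaks down.
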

\begin{proof}
Let $\{w_1,\dots,w_m\}\subset{\mathbb R}^m$ be the closest orthonormal basis to ${\alpha_1,\dots,\alpha_m}$.
Then we may assume that $\langle \alpha_i,w_i\rangle \ge 0$, otherwise exchanging $w_i$ for $-w_i$
would decrease the distance to $\{\alpha_1,\dots,\alpha_m\}.$

For every $i=1,\dots,m$, we obtain
\begin{align*}
\|\alpha_i-w_i\|_2^2\ge \Bigl\|\Bigl\langle w_i,\frac{\alpha_i}{\|\alpha_i\|_2}\Bigl\rangle\frac{\alpha_i}{\|\alpha_i\|_2}-w_i\Bigr\|_2^2
=1-\Bigl\langle w_i,\frac{\alpha_i}{\|\alpha_i\|_2}\Bigl\rangle^2
\end{align*}
and therefore
\begin{align*}
\Bigl\|\frac{\alpha_i}{\|\alpha_i\|_2}-w_i\Bigr\|_2^2=2\Bigl(1-\frac{\langle \alpha_i,w_i\rangle}{\|\alpha_i\|_2}\Bigr)
\le 2\Bigl(1-\frac{\langle \alpha_i,w_i\rangle^2}{\|\alpha_i\|^2_2}\Bigr)\le 2\|\alpha_i-w_i\|_2^2.
\end{align*}
To finish the proof, we sum up over $i=1,\dots,m$ and take the square root.
\end{proof}

\begin{lem}\label{lem:5eps}
Let $\{\alpha_1,\dots,\alpha_m\}\subset\R^n$ be arbitrary linearly independent vectors
with unit Euclidean norm and let $\{\omega_1,\dots,\omega_m\}\subset\R^n$ be orthonormal.
Let $\A={\rm span}\{\alpha_1,\dots,\alpha_m\}$, $\hat \A={\rm span}\{\omega_1,\dots,\omega_m\}$, and
$$
\Bigl(\sum_{i=1}^m\|\alpha_i-\omega_i\|_2^2\Bigr)^{1/2}\le\varepsilon<1.
$$
Then
$$
\|P_{\A}-P_{\hat \A}\|\le \change{4\varepsilon},
$$
where $P_{\A}$ and $P_{\hat \A}$ are the orthogonal projections on $\A$ and $\hat \A$ respectively.
\end{lem}
\begin{proof}
Let $A\in \R^{n\times m}$ have columns $\alpha_1,\dots,\alpha_m$ and let $W \in\R^{n\times m}$ have columns $\omega_1,\dots,\omega_m.$
Then $P_{\hat \A}={W}{W}^T$. If $A=U\Sigma V^T$ is the singular value decomposition of ${A}$ with
$U\in\R^{n\times m},\Sigma\in\R^{m\times m}$ and $V\in \R^{m\times m}$, then $P_{\A}=UU^T.$
\change{Further we denote by $B=UV^T$ the closest matrix to $A$ in Frobenius norm with orthonormal columns, see also Theorem \ref{thm:A1}.
Hence, $\|A-B\|_F\le \|A-W\|_F\le \varepsilon$, $\|B-W\|_F\le2\varepsilon$ and
\begin{align*}
\|P_{\A}-P_{\hat {\A}}\|&=\|UU^T-{W}{W}^T\|\le\|UV^TVU^T-WW^T\|_F=\|BB^T-WW^T\|_F\\
&\le \|B(B^T-W^T)\|_F+\|(B-W)W^T\|_F\le 2\varepsilon+2\varepsilon=4\varepsilon.\qedhere
\end{align*}}
\end{proof}

We conclude this subsection with a remark related to the stability of the result of Theorem \ref{initthm}
with respect to $\varepsilon$-nearly orthonormality.
\begin{rem}\label{rem:orth}
Let $\{a_1,\dots,a_m\}$ be $\varepsilon$-nearly orthonormal and let $\{w_1,\dots,w_m\}\subset\R^d$
be an optimal approximating orthonormal basis such that
$$
{\mathcal S}(a_1,\dots,a_m)=\Bigl(\sum_{j=1}^m \|a_j-w_j\|_2^2\Bigr)^{1/2}=\varepsilon.
$$
By Theorem \ref{thm:A1} (and its proof) we can assume that $\{w_1,\dots,w_m\}\subset A.$
Then for the vectors $\alpha_i=\tilde A^Ta_i, i=1,\dots,m$ constructed in Theorem \ref{initthm} it holds
\begin{align*}
{\mathcal S}(\alpha_1,\dots,\alpha_m)&={\mathcal S}({\tilde A}^Ta_1,\dots,{\tilde A}^Ta_m)= {\mathcal S}({\tilde A}{\tilde A}^Ta_1,\dots,{\tilde A}{\tilde A}^Ta_m)\\
&= {\mathcal S}(P_{\tilde A}a_1,\dots,P_{\tilde A}a_m)
\le\Bigl(\sum_{j=1}^m \|P_{\tilde A}a_j-w_j\|_2^2\Bigr)^{1/2}\\
&\le\Bigl(\sum_{j=1}^m \|P_{\tilde A}a_j-P_{\tilde A}w_j\|_2^2\Bigr)^{1/2}+
\Bigl(\sum_{j=1}^m \|P_{\tilde A}w_j-P_Aw_j\|_2^2\Bigr)^{1/2}\\
&\le \varepsilon+\|P_{\tilde A}-P_{A}\|_F.
\end{align*}
Hence, if the vectors $a_1,\dots,a_m$ are orthogonal, or nearly-orthonormal in the sense of Definition \ref{def:S}, the vectors $\alpha_1,\dots,\alpha_m$
behave similarly.
\end{rem}

\section{Identification of weights} \label{locmaxima}
\rechange{In the case of nearly orthonormal weights $\{a_i : i = 1, \dots, m\}$}, the geometry of the space $\widetilde {\mathcal A} \approx {\mathcal A}=\operatorname{span}\{a_i\otimes a_i, i=1,\dots,m\}\subset \R^{m\times m}$ can be described by the  Euclidean representation
of Figure \ref{fig:00}. Inspired by this geometrical description, for the identification of the individual weights, we consider the following nonlinear program
\begin{align}\label{eq:alg}
{\rm arg\ max} \ \|M\|,\quad {\rm s.t.} \quad M\in\widetilde \A, \|M\|_F\le 1
\end{align}
to recover the $a_i$'s - or, more precisely, their approximations $\hat a_i$ (which is of course possible only up to the sign).

The optimization problem \eqref{eq:alg} is not convex and may in general have a large number of local \rechange{maxima}. Nevertheless, we shall prove that
to every local maximizer of \eqref{eq:alg}, there is one of the matrices $a_i\otimes a_i$, which lies very close to it. Moreover, we show also the converse, i.e., that every $a_i\otimes a_i$ is well-approximated by a local maximizer of \eqref{eq:alg}.
In particular, {for $\widetilde \A = \A$ and the vectors $a_1,\dots, a_m\in\R^m$ orthonormal}, we obtain the exact recovery of the $a_i\otimes a_i$'s.

\subsection{Characterization of local maximizers}

In this section we address  the analysis of the optimization program \eqref{eq:alg},
and we derive a characterization of its local maximal solutions. First of all let us observe that every local maximizer of \eqref{eq:alg} will be always found on the sphere ${\mathbb S_{\widetilde \A}}=\{M\in \widetilde \A:\|M\|_F=1\}$. The set ${\mathbb S}_{\widetilde \A}$ is
a unit sphere in a Hilbert space of (symmetric) matrices, intersected with a linear subspace, and therefore everywhere differentiable. Despite the nonsmoothness of the objective function,
i.e. $M\to \|M\|$, the solution of the nonconvex program \eqref{eq:alg} will be tackled by means of differential methods.

\begin{thm}\label{thm:conditions} Let $M$ be any local maximizer of
\begin{equation}\label{eq:conditions}
{\rm arg\ max} \ \|M\|,\quad {\rm s.t.} \quad M\in\widetilde \A,\ \|M\|_F\le 1.
\end{equation}
\change{
Let us denote the eigenvalues of $M$ by $\lambda_1,\lambda_2,\dots,\lambda_m$ and the corresponding eigenvectors 
by $u_1,\dots,u_m.$ We assume that the eigenvalues are ordered by their absolute value, i.e., that $|\lambda_1|\ge |\lambda_2|\ge\dots\ge 0.$
Then
\begin{align}\label{eq:first'}
u_j^TXu_j&=\lambda_j\langle X,M\rangle_F\quad\text{for every}\quad X\in\widetilde \A\\
&\notag\text{and all}\quad j\in\{1,\dots,m\}\quad\text{with}\quad |\lambda_j|=\|M\|.
\end{align}}
If furthermore
\begin{equation}\label{eq:unique1}
{\mathcal S}(a_1,\dots,a_m)\le \varepsilon
\quad \text{and}\quad 
3 m \|P_{\A}-P_{\widetilde \A}\|_{F\to F}< (1-\varepsilon)^2,
\end{equation}
then $|\lambda_1|=\|M\|$, $\lambda_1\not\in\{\lambda_2,\dots,\lambda_m\}$ and\change{
\begin{equation}\label{eq:sec'}
2\sum_{k=2}^m\frac{(u_1^TXu_k)^2}{|\lambda_1-\lambda_k|}\le \|M\|\cdot\|X-\langle X,M\rangle_F M\|_F^2\quad\text{for all}\ X\in\widetilde{\mathcal A}.
\end{equation}}
\end{thm}
\begin{rem}\label{Rem:thm}
\begin{enumerate}
\item[(i)] \change{In the proof we will only show that \eqref{eq:first'} holds for all $X$ orthogonal to $M$, i.e., that
\begin{align}\label{eq:full1}
u_j^TXu_j=0 \quad &\text{for all}\quad X\in{\mathbb S}_{\widetilde \A}\quad\text{with}\quad X\perp M\\
\notag\qquad&\text{and all } j\in\{1,\dots,m\}\ \text{with } |\lambda_j|=\|M\|.
\end{align}
If $X\in\widetilde \A$ is not orthogonal to $M$ and not co-linear with $M$, then \eqref{eq:first'} follows from \eqref{eq:full1} by considering the matrix 
$$
\frac{X-\langle X,M\rangle_F M}{\|X-\langle X,M\rangle_F M\|_F}.
$$ 
Moreover, if $X$ is a multiple of $M$, \eqref{eq:first'} holds trivially.
Furthermore \eqref{eq:sec'} follows in the same way from
\begin{align}\label{eq:full2}
2\sum_{k=2}^m\frac{(u_1^TXu_k)^2}{|\lambda_1-\lambda_k|}\le |\lambda_1|\quad\text{for all}\quad X\in{\mathbb S}_{\widetilde \A}\quad \text{with}\quad X\perp M.
\end{align}}
\item[(ii)] The formulas \eqref{eq:first'} and \eqref{eq:sec'} resemble very much
the so-called first and second Hadamard variation formula, cf. \cite[Chapter 1.3]{Tao_RM}.
At least in the case when the spectrum of $M$ contains only simple eigenvalues, the proof we give
\change{is similar to} the one in \cite{Tao_RM}.
\end{enumerate}
\end{rem}
Before we come to the full proof of Theorem \ref{thm:conditions}, we sketch its main idea in a simplified setting
to give some intuition about the argument.

If $M$ is a local maximizer of \eqref{eq:alg}, then there is a neighborhood ${\mathcal U}\subset {\mathbb S}_{\widetilde \A}$
of $M$, such that $\|X\|\le \|M\|$ for every $X\in{\mathcal U}.$ Hence for every $X\in {\widetilde \A}$, the function
\begin{equation}\label{eq:f_X}
f_X:\gamma\to \frac{\|M+\gamma X\|}{\|M+\gamma X\|_F}
\end{equation}
has a local maximum in $\gamma=0$. Furthermore, it is enough to restrict ourselves to matrices
$X\in{\mathbb S}_{\widetilde \A}$ with $X\perp M$. 
Let now
\begin{equation}\label{eq:svd}
(M+\gamma X)u_j(\gamma)=\lambda_j(\gamma)u_j(\gamma),\quad j=1,\dots,m,
\end{equation}
be the spectral decomposition of $M+\gamma X$ with eigenvalues $\lambda_j(\gamma)$ and eigenvectors $u_j(\gamma)$.
For simplicity, we assume for now that $\lambda_j(\gamma)$ and $u_j(\gamma)$ depend smoothly on $\gamma$.
Observe, that $u_j=u_j(0)$ and $\lambda_j=\lambda_j(0)$. 
Due to $\|M+\gamma X\|^{-1}_F=(1+\gamma^2)^{-1/2}=1-\gamma^2/2+o(\gamma^2)$, we obtain for $\gamma \rightarrow 0$
\begin{align}\label{eq:fXmax}
f_X(\gamma)=(1-\gamma^2/2)\cdot \max_{j:|\lambda_j(0)|=\|M\|}|\lambda_j(0)+\lambda_j'(0)\gamma+\lambda_j''(0)\gamma^2/2|+o(\gamma^2). 
\end{align}
We conclude, that if $f_X$ has a local maximum in $\gamma=0$, then 
\begin{equation}\label{firstordercond}
\lambda_j'(0)=0\quad\text{for all}\quad j\in\{1,\dots,m\}\quad\text{with}\quad |\lambda_j(0)|=\|M\|.
\end{equation}

In order to determine $\lambda_j'(0)$, we differentiate \eqref{eq:svd}
\begin{equation}\label{eq:svd2}
Mu_j'(\gamma)+Xu_j(\gamma)+\gamma X u_j'(\gamma)=\lambda_j'(\gamma)u_j(\gamma)+\lambda_j(\gamma)u_j'(\gamma),
\end{equation}
evaluate \eqref{eq:svd2} in $\gamma=0$ and multiply it with $u_j$. We obtain
\[
(u_j')^TMu_j+u_j^TXu_j=\lambda_j'(0)+\lambda_j u_j^Tu_j'.
\]
We now plug in the relation $Mu_j=\lambda_ju_j$ together with $(u_j')^Tu_j=0$, which follows by
differentiating the orthogonality relation $\langle u_i(\gamma),u_j(\gamma)\rangle=\delta_{i,j}$, and obtain
\[
\lambda_j'(0)=u_j^TXu_j.
\]
Together with \eqref{firstordercond}, this gives the proof of \eqref{eq:full1}. 

To prove \eqref{eq:full2}, we study also second derivatives and distinguish between local minimizers and local maximizers.
For simplicity 
we assume now that $\lambda_1=\|M\|>\max\{|\lambda_2|,\dots,|\lambda_m|\}$.
In this case we reformulate \eqref{eq:fXmax} using $\lambda_1'(0)=0$ and \eqref{eq:fXmax} becomes
$$
f_X(\gamma)=(1-\gamma^2/2)  ( \lambda_1(0)+\lambda_1''(0)\gamma^2/2)+o(\gamma^2)=\lambda_1(0) +\frac{\lambda_1''(0) -\lambda_1(0)}{2}\gamma^2 + o(\gamma^2).
$$
If $f_X$ has a local maximum at $\gamma=0$, again by a simple asymptotic argument for $\gamma \rightarrow 0$, we conclude that 
\begin{equation}\label{eq:svd3}
\lambda_1''(0)\le\lambda_1(0).
\end{equation}

We differentiate \eqref{eq:svd2} with $j=1$ to obtain
\[
Mu_1''(\gamma)+2Xu_1'(\gamma)+\gamma Xu_1''(\gamma)=\lambda''_1(\gamma)u_1(\gamma)+2\lambda'_1(\gamma)u_1'(\gamma)+\lambda_1(\gamma)u_1''(\gamma).
\]
We evaluate this equation at $\gamma=0$ and take again the inner product with $u_1$, yielding
\[
u_1^TMu_1''+2u_1^TXu_1'=\lambda_1''(0)+2\lambda_1'(0)u_1^Tu'_1+\lambda_1u_1^Tu_1''.
\]
Using $u_1^TMu_1''=\lambda_1 u_1^Tu_1''$ and $u_1^Tu'_1=0$, the equation becomes 
\begin{equation}\label{eq:Had1}
\lambda_1''(0)=2u_1^TXu_1'.
\end{equation}
For eliminating $u_1'$, we multiply \eqref{eq:svd2} for $j=1$ with $u_k, k\not=1$ at $\gamma=0$. This gives
\[
u_k^TMu_1'+u_k^TXu_1=\lambda_1'(0)u_k^Tu_1+\lambda_1u_k^Tu_1'.
\]
Using $u_k^TMu_1'=\lambda_k u_k^Tu_1'$ and $u_k^Tu_1=0$, this can be reformulated as 
$u_k^Tu_1'=(u_k^TXu_1)/(\lambda_1-\lambda_k)$ for $\lambda_1\not=\lambda_k.$ 
Hence
\[
u'_1=\sum_{k=1}^m\langle u_1',u_k\rangle u_k=\sum_{k=2}^m \frac{(u_k^TXu_1)}{\lambda_1-\lambda_k}u_k,
\]
which together with \eqref{eq:svd3} and \eqref{eq:Had1} gives \eqref{eq:full2}.


In the argument above we made heavy use of the additional requirement of smooth dependence of the spectral decomposition
of $M+\gamma X$ on the parameter $\gamma$. We will show now, that the same is true even without such an assumption.

\begin{proof}[Proof of Theorem \ref{thm:conditions}]\ \\

\emph{Step 1. Proof of \eqref{eq:first'}}

Let us assume, that $M\in \widetilde \A$ is fixed and that $f_X$ has local maximum at $\gamma=0$ for $X\in \widetilde \A$ with $\|X\|_F=1$ and $X\perp M.$
Hence, for $|\gamma|$ small, we have
\begin{align*}
\|M\|&\ge \frac{\|M+\gamma X\|}{\|M+\gamma X\|_F}\ge \Bigl(1-\gamma^2/2+o(\gamma^2)\Bigr)\cdot\max_{j=1,\dots,m} |u_j^T(M+\gamma X)u_j|\\
&=\max_{j=1,\dots,m}\Bigl|\lambda_j(0)+\gamma u_j^TXu_j\Bigr|+O(\gamma^2).
\end{align*}
Considering $j\in\{1,\dots,m\}$ with $|\lambda_j(0)|=\|M\|$ and $|\gamma|$ small, we \rechange{arrive at} \eqref{eq:first'}.

\emph{Step 2. Proof of \eqref{eq:sec'}}

We derive \eqref{eq:sec'} under the assumption that $\lambda_1=\|M\|$ and $\lambda_1\not\in \{\lambda_2,\dots,\lambda_m\}$.
If $\lambda_1=-\|M\|$, the result follows by considering $-M$ instead of $M$.

Let again $Mu_j=\lambda_ju_j$ be the singular value decomposition of $M$. Then
\begin{align}
\notag \|M+\gamma X\|&=\sup_{\|\sigma\|_2\le 1}\Bigl(\sum_{i=1}^m\sigma_iu_i\Bigr)^T (M+\gamma X)\Bigl(\sum_{j=1}^m\sigma_ju_j\Bigr)\\
\label{eq:Lagr0}&=\sup_{\|\sigma\|_2\le 1}\biggl(\,\sum_{i,j=1}^m\sigma_i\sigma_ju_i^TMu_j+\gamma\sum_{i,j=1}^m\sigma_i\sigma_ju_i^TXu_j\biggr)\\
\notag &=\sup_{\|\sigma\|_2\le 1}\biggl(\sum_{i=1}^m\sigma^2_i\lambda_i+\gamma\sum_{i,j=1}^m\sigma_i\sigma_jA_{ij}\biggr)=:\sup_{\|\sigma\|_{2}\le 1}f(\sigma),
\end{align}
where $A_{ij}=u_i^TXu_j=A_{ji}.$ We will use an approximate solution of the Lagrange's multiplier equations
to estimate $\|M+\gamma X\|$ from below.

We set the constraint condition $g(\sigma)=\|\sigma\|_2^2=1$ and use Lagrange's multiplier theorem on
$$
\theta(\sigma,\nu):=f(\sigma)+\nu(g(\sigma)-1).
$$
This leads to equations
\begin{align}
\notag \frac{\partial \theta}{\partial \nu}&=g(\sigma)-1=0,\\
\label{eq:Lagr1} \frac{\partial \theta}{\partial \sigma_j}&=2\sigma_j\lambda_j + 2\gamma\sum_{i=1}^m\sigma_iA_{ij}+\nu\cdot 2\sigma_j=0,\quad j=1,\dots,m.
\end{align}

For $j=1$ we use $A_{11}=u_1^TXu_1=0$ and \eqref{eq:Lagr1} becomes
\begin{equation}\label{eq:new2}
\sigma_1(\lambda_1+\nu)=-\gamma\sum_{j=2}^m\sigma_jA_{1j}.
\end{equation}
If $j\ge 2$, we reduce \eqref{eq:Lagr1} by the following observation. The optimal value of $\sigma$
in \eqref{eq:Lagr0} for $\gamma=0$ is $\sigma=e_1=(1,0,\dots,0)^T$. We therefore expect that for $|\gamma|$ small,
the optimal value of $\sigma$ in \eqref{eq:Lagr0} will be close to $e_1$, i.e.
$\sigma_2,\dots,\sigma_m$ are expected to be of order $\gamma.$ The values $A_{ij}$ with $i,j\ge 2$
therefore come into the value of $f(\sigma)$ only in the third order in $\gamma$ and may be neglected. Then \eqref{eq:Lagr1}
becomes
$$
\sigma_j(\lambda_j+\nu)=-\gamma\sigma_1A_{1j}.
$$
Finally, \eqref{eq:new2} shows that $\nu$ is close to $-\lambda_1$.
We are then naturally led to chose $\sigma$ according to the equations
\begin{align}
\label{eq:Lagr3} \sum_{j=1}^m\bar\sigma_j^2=1,\quad
\bar\sigma_1\not=0\quad\text{and}\quad
\frac{\bar\sigma_k}{\bar\sigma_1}=\gamma\cdot \frac{u_1^TXu_k}{\lambda_1-\lambda_k},\quad k=2,\dots,m.
\end{align}
Up to the sign of $\bar\sigma$, there is exactly one solution to \eqref{eq:Lagr3}, which we plug into \eqref{eq:Lagr0}.
This leads to
\begin{align}
\notag \|M+\gamma X\|&\ge f(\bar\sigma)=\sum_{j=1}^m\bar\sigma_j^2\lambda_j+\gamma \sum_{i,j=1}^m\bar\sigma_i\bar\sigma_j A_{ij}\\
\notag &=\bar\sigma_1^2\lambda_1+\sum_{k=2}^m\bar\sigma_k^2\lambda_k+2\gamma\sum_{j=2}^m\bar\sigma_1\bar\sigma_j(u_1^TXu_j)+\gamma \sum_{i,j=2}^m\bar\sigma_i\bar\sigma_j(u_i^TXu_j)\\
\label{eq:Lagr1'}&=\bar\sigma_1^2\lambda_1+\sum_{k=2}^m\lambda_k\gamma^2\bar\sigma_1^2\frac{(u_1^TXu_k)^2}{(\lambda_1-\lambda_k)^2}+2\gamma^2\bar\sigma_1^2\sum_{k=2}^m\frac{(u_1^TXu_k)^2}{\lambda_1-\lambda_k}+o(\gamma^2)\\
\notag &=\bar\sigma_1^2\lambda_1+\gamma^2\bar\sigma_1^2\sum_{k=2}^m\frac{(u_1^TXu_k)^2}{\lambda_1-\lambda_k}\Bigl\{\frac{\lambda_k}{\lambda_1-\lambda_k}+2\Bigr\}+o(\gamma^2)\\
\notag &=\bar\sigma_1^2\Bigl\{\lambda_1+\gamma^2 \sum_{k=2}^m\frac{(u_1^TXu_k)^2}{(\lambda_1-\lambda_k)^2} (2\lambda_1-\lambda_k) \Bigr\}+o(\gamma^2).
\end{align}
Furthermore, from $\|\bar\sigma\|_2^2=1$, we derive
\[
\bar\sigma_1^2+\sum_{k=2}^m\gamma^2\bar\sigma_1^2\frac{(u_1^TXu_k)^2}{(\lambda_1-\lambda_k)^2}=\bar\sigma_1^2\Bigl(1+\gamma^2\sum_{k=2}^m\frac{(u_1^TXu_k)^2}{(\lambda_1-\lambda_k)^2}\Bigr)=1,
\]
which, by the Taylor theorem, leads to 
\[
\bar\sigma_1^2=\Bigl(1+\gamma^2\sum_{k=2}^m\frac{(u_1^TXu_k)^2}{(\lambda_1-\lambda_k)^2}\Bigr)^{-1}=
1-\gamma^2\sum_{k=2}^m\frac{(u_1^TXu_k)^2}{(\lambda_1-\lambda_k)^2}+o(\gamma^2).
\]
We plug this estimate into \eqref{eq:Lagr1'} and get
\begin{align*}
f(\bar\sigma)
&=\Bigl(1-\gamma^2\sum_{k=2}^m\frac{(u_1^TXu_k)^2}{(\lambda_1-\lambda_k)^2}\Bigr)\cdot \Bigl\{\lambda_1+\gamma^2 \sum_{k=2}^m\frac{(u_1^TXu_k)^2}{(\lambda_1-\lambda_k)^2} (2\lambda_1-\lambda_k) \Bigr\}+o(\gamma^2)\\
&=\lambda_1+\gamma^2\sum_{k=2}^m(u_1^TXu_k)^2\Bigl\{-\frac{\lambda_1}{(\lambda_1-\lambda_k)^2}+\frac{2\lambda_1-\lambda_k}{(\lambda_1-\lambda_k)^2}\Bigr\}+o(\gamma^2)\\
&=\lambda_1+\gamma^2\sum_{k=2}^m\frac{(u_1^TXu_k)^2}{\lambda_1-\lambda_k}+o(\gamma^2).
\end{align*}
This allows to conclude that
$$
f_X(\gamma)=\frac{\|M+\gamma X\|}{\sqrt{1+\gamma^2}}\ge f(\bar\sigma)(1-\gamma^2/2)=
\lambda_1+\gamma^2\biggl(\sum_{k=2}^m\frac{(u_1^TXu_k)^2}{\lambda_1-\lambda_k}-\frac{\lambda_1}{2}\biggr)+o(\gamma^2). 
$$
If $f_X$ has local maximum at $\gamma=0$, the coefficient at $\gamma^2$ has to be smaller or equal to zero, giving \change{\eqref{eq:sec'}}.

\emph{Step 3. Uniqueness of the largest eigenvalue}

We proceed by contradiction.
Let \eqref{eq:unique1} be fulfilled and let $M\in\widetilde \A$ with $\|M\|_{F}=1$ be a local maximizer of \eqref{eq:conditions} with $\lambda_1=\lambda_2=\|M\|$.
The case $\lambda_1=\lambda_2=-\|M\|$ follows in the same manner.
Taking $X\in{\mathbb S}_{\widetilde \A}$ with $X\perp M$ and considering again the function $f_X$ from \eqref{eq:f_X}, we can write
\begin{align*}
f_X(\gamma)&= \frac{\|M+\gamma X\|}{\|M+\gamma X\|_F}
\ge\sup_{(\sigma_1,\sigma_2):\sigma_1^2+\sigma_2^2=1} (\sigma_1u_1+\sigma_2u_2)^T(M+\gamma X)(\sigma_1u_1+\sigma_2u_2)+o(\gamma)\\
&=\|M\|+\gamma\sup_{(\sigma_1,\sigma_2):\sigma_1^2+\sigma_2^2=1}\{\sigma_1^2u_1^TXu_1+\sigma^2_2u_2^TXu_2+2\sigma_1\sigma_2 u_1^TXu_2\}+o(\gamma).
\end{align*}
If $f_X$ has a local maximum at $\gamma=0$, we choose $(\sigma_1,\sigma_2)$ equal to $(1,0),(0,1),$ or $(1,1)/\sqrt{2}$, respectively.
We conclude that 
\begin{equation}\label{eq:multiple1}
u_1^TXu_1=u_2^TXu_2=u_1^TXu_2=0.
\end{equation}
If $X\in\widetilde \A$ is not orthogonal to $M$, we apply \eqref{eq:multiple1} to $\frac{X-\langle X,M\rangle_F M}{\|X-\langle X,M\rangle_F M\|_2}$,
cf. Remark \ref{Rem:thm}, and obtain
\begin{align*}
u_1^TXu_1&=\langle X,M\rangle_F\cdot \|M\|=u_2^TXu_2,\\
u_1^TXu_2&=0.
\end{align*}

We set $X_j=P_{\widetilde \A}(a_j\otimes a_j)$ and ${\mathcal E}_j=X_j-a_j\otimes a_j$. Then $X_j\in\widetilde \A$ and we derive from these conditions
\begin{align*}
\langle u_1,a_j\rangle^2+u_1^T{\mathcal E}_ju_1&=\langle u_2,a_j\rangle^2+u_2^T{\mathcal E}_ju_2,\qquad j=1,\dots,m,\\ 
\langle u_1,a_j\rangle\cdot\langle u_2,a_j\rangle&=-u_1^T{\mathcal E}_ju_2,\qquad j=1,\dots,m.
\end{align*}
Solving these equations for $\langle u_1,a_j\rangle^2$, we arrive at
\begin{align}
\langle u_1,a_j\rangle^2&\le \frac{|u_2^T{\mathcal E}_ju_2-u_1^T{\mathcal E}_ju_1|+\sqrt{|u_2^T{\mathcal E}_ju_2-u_1^T{\mathcal E}_ju_1|^2+4(u_1^T{\mathcal E}_ju_2)^2}}{2} \nonumber \\
&\le \frac{2|u_2^T{\mathcal E}_ju_2-u_1^T{\mathcal E}_ju_1|+2|u_1^T{\mathcal E}_ju_2|}{2}\nonumber  \\ 
&=|u_2^T{\mathcal E}_ju_2-u_1^T{\mathcal E}_ju_1|+|u_1^T{\mathcal E}_ju_2| \leq 3 \|{\mathcal E}_j\|. \label{carrot}
\end{align}
By assumption ${\mathcal S}(a_1,\dots,a_m)\le\varepsilon$, Lemma \ref{lem:A1}, and \eqref{carrot}, we then obtain
\begin{align*}
(1-\varepsilon)^2&=(1-\varepsilon)^2\|u_1\|_2^2\le \|A^Tu_1\|_2^2=\sum_{j=1}^m\langle u_1,a_j\rangle^2\le 3\sum_{j=1}^m\|{\mathcal E}_j\|\\
&\le 3\sum_{j=1}^m\|(P_{\widetilde \A}-P_{\A})(a_j\otimes a_j)\|_F\le 3m\|P_{\widetilde \A}-P_{\A}\|_{F\to F},
\end{align*}
which leads to a contradiction.
This finishes the proof of Theorem \ref{thm:conditions}.
\end{proof}

\subsection{Approximation of weights}\label{finapprox}

We show how to use Theorem \ref{thm:conditions} to develop approximation schemes for sums of ridge functions.
%
%
We proceed in two steps.
In the first step we identify vectors $\hat a_1,\dots, \hat a_m\in\R^m$, which approximate the true ridge profiles $a_1,\dots,a_m.$
In the second step \rechange{(see Section \ref{sec:funct})} we define with their help a function $\hat f$, which is the uniform approximation of $f$.

We show, how to use the conditions \eqref{eq:first'} and \eqref{eq:sec'} to analyze the minimization problem \eqref{eq:conditions}.
First, 
we summarize the notation
and assumptions used throughout this section. We assume that
\begin{itemize}
\item $a_1,\dots,a_m\in\R^m$ are the unknown weights/ridge profiles,
\item ${\A}={\rm span}\{a_j\otimes a_j,j=1,\dots,m\}\subset\R^{m\times m}$,
\item the vectors $a_1,\dots,a_m$ are $\varepsilon$-nearly-orthonormal, i.e., there is an orthonormal basis $w_1,\dots,w_m$, such that
$\Bigl(\sum_{j=1}^m\|a_j-w_j\|_2^2\Bigr)^{1/2}=\varepsilon>0,$
\item $\hat {\A}={\rm span}\{w_j\otimes w_j,j=1,\dots,m\}$,
\item $\widetilde \A$ is the approximation of ${\A}$ available after the first step with $\|P_{\A}-P_{\widetilde \A}\|_{F\to F}\le \eta$ (Algorithm \ref{alg3} and Theorem \ref{thm:projapp}),
\item by Lemma \ref{lem:5eps} we then have $\|P_{\hat {\A}}-P_{\widetilde \A}\|_{F\to F}\le \|P_{\hat {\A}}-P_{{\A}}\|_{F\to F}+\|P_{\A}-P_{\widetilde \A}\|_{F\to F}\le 4\varepsilon+\eta=:\nu$.
\end{itemize}

We start with several lemmas needed later on. We will use throughout the notation just introduced.
\begin{lem}\label{lem:nu1}
Let $Z\in \hat {\A}$ and $\nu<1$. Then
$$
\|P_{\widetilde \A}(Z)\|_F\le \|Z\|_F\le \frac{1}{1-\nu}\cdot\|P_{\widetilde \A}(Z)\|_F.
$$
In particular, $P_{\widetilde \A}$ is bijective as a map from $\hat \A$ to $\widetilde \A$.
\end{lem}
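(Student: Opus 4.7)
The plan is to exploit the inequality $\|P_{\hat{\mathcal{A}}}-P_{\widetilde{\mathcal{A}}}\|_{F\to F}\le \nu$ listed among the standing assumptions of this section, together with the fact that $Z\in\hat{\mathcal{A}}$ means $P_{\hat{\mathcal{A}}}(Z)=Z$. The left-hand inequality is immediate from the fact that $P_{\widetilde{\mathcal{A}}}$ is an orthogonal projection (hence a contraction in the Frobenius norm), so the only real work is in establishing the right-hand inequality and the bijectivity statement.

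For the right-hand inequality, I would first write
\[
\|Z-P_{\widetilde{\mathcal{A}}}(Z)\|_F=\|P_{\hat{\mathcal{A}}}(Z)-P_{\widetilde{\mathcal{A}}}(Z)\|_F=\|(P_{\hat{\mathcal{A}}}-P_{\widetilde{\mathcal{A}}})(Z)\|_F\le \nu\|Z\|_F,
\]
using $Z=P_{\hat{\mathcal{A}}}(Z)$ in the first equality and the standing bound on the operator norm of the difference of projections in the last step. Then by the triangle inequality,
\[
\|Z\|_F\le \|P_{\widetilde{\mathcal{A}}}(Z)\|_F+\|Z-P_{\widetilde{\mathcal{A}}}(Z)\|_F\le \|P_{\widetilde{\mathcal{A}}}(Z)\|_F+\nu\|Z\|_F,
\]
and rearranging (using $\nu<1$) yields $\|Z\|_F\le (1-\nu)^{-1}\|P_{\widetilde{\mathcal{A}}}(Z)\|_F$, as claimed.

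For the bijectivity statement, I would observe that $\hat{\mathcal{A}}$ and $\widetilde{\mathcal{A}}$ are both $m$-dimensional (the former by construction, the latter by Theorem \ref{thm:projapp}, which ensures $\dim(\widetilde{\mathcal{A}})=\dim(\mathcal{A})=m$ under the present assumptions). The inequality just proved shows that the restriction $P_{\widetilde{\mathcal{A}}}|_{\hat{\mathcal{A}}}\colon\hat{\mathcal{A}}\to\widetilde{\mathcal{A}}$ has trivial kernel, since $P_{\widetilde{\mathcal{A}}}(Z)=0$ forces $\|Z\|_F=0$. A linear injection between vector spaces of the same finite dimension is automatically bijective, which concludes the proof. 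The main (very mild) obstacle here is simply checking that $\nu<1$ is genuinely available from the standing assumptions, which is guaranteed as soon as one takes $\eta$ and $\varepsilon$ sufficiently small, so that $\nu=4\varepsilon+\eta<1$.
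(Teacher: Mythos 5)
Your proof is correct and follows essentially the same route as the paper's: both reduce the right-hand inequality to the standing bound $\|P_{\hat{\mathcal{A}}}-P_{\widetilde{\mathcal{A}}}\|_{F\to F}\le\nu$ combined with $P_{\hat{\mathcal{A}}}(Z)=Z$, and both obtain bijectivity from injectivity plus the dimension count from Theorem \ref{thm:projapp}. The only cosmetic difference is that you apply the forward triangle inequality and rearrange, whereas the paper uses the reverse triangle inequality directly; these are the same estimate.
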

\begin{proof}
Let $Z\in \hat {\A}$. Then $\|P_{\widetilde \A}Z\|_F\le \|Z\|_F$ and
\begin{align*}
\|P_{\widetilde \A}Z\|_F&\ge \|P_{\hat {\A}}(Z)\|_F-\|(P_{\widetilde \A}-P_{\hat {\A}})(Z)\|_F\ge \|Z\|_F-\nu \|Z\|_F =(1-\nu)\|Z\|_F.
\end{align*}
The inequality implies the injectivity of  $P_{\widetilde \A}$ on $\hat \A$ and from Theorem \ref{thm:projapp}
we know that ${\rm{dim}}(\widetilde \A)={\rm{dim}(\A)}=m$, hence $P_{\widetilde \A}$ is also surjective.
\end{proof}
Lemma \ref{lem:nu1} ensures that for any $M \in \widetilde \A$ with $\|M\|_F=1$ there exists $Z=\sum_{k}\sigma_k w_k\otimes w_k  \in  \hat \A$
such that $M=P_{\widetilde \A}(Z)$ and
\begin{equation}\label{eq:sigma_est}
1\le \Bigl\|\sum_{k=1}^m \sigma_k w_k\otimes w_k\Bigr\|_F=\|\sigma\|_2\le\frac{1}{1-\nu}.
\end{equation}
We will use this property repetitively below, especially for $M$ being a local maximizer of \eqref{eq:alg}.

If $X=w_j\otimes w_j$ and $\|w_j\|_2=\|u\|_2=1$, then
\begin{align*}
\|Xu\|_2^2=\|\langle w_j,u\rangle w_j\|_2^2=|\langle w_j,u\rangle|^2=(u^Tw_j)(w_j^Tu)=u^TXu.
\end{align*}
If $X=P_{\widetilde \A}(w_j\otimes w_j)$ instead, we expect the difference between $\|Xu\|_2^2$ and $u^TXu$ to be small.
This statement is made precise in the following lemma.

\begin{lem}\label{lem:XX}Let $W_j=w_j\otimes w_j$, $X=P_{\widetilde \A}(W_j)$ and $\|u\|_2=1$. Then
$$
\Bigl|\|Xu\|_2^2-u^TXu\Bigr|\le 2\nu.
$$
\end{lem}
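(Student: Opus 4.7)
The decisive structural fact is that $W_j = w_j \otimes w_j$ is a rank-one orthogonal projection, so that $W_j^2 = W_j$ and consequently the identity $\|W_j u\|_2^2 = u^T W_j^2 u = u^T W_j u$ holds \emph{exactly}. The whole discrepancy $\|Xu\|_2^2 - u^T X u$ therefore arises purely from the perturbation $E := X - W_j$, and the strategy is to control $\|E\|$ via the assumption $\|P_{\hat{\mathcal A}} - P_{\widetilde{\mathcal A}}\|_{F\to F}\le \nu$ and then bound the resulting cross terms by Cauchy--Schwarz.

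For the size of $E$, note that $W_j \in \hat{\mathcal A}$, so $P_{\hat{\mathcal A}}(W_j) = W_j$ and
\[
E = P_{\widetilde{\mathcal A}}(W_j) - W_j = (P_{\widetilde{\mathcal A}} - P_{\hat{\mathcal A}})(W_j).
\]
Together with $\|W_j\|_F = \|w_j\|_2^2 = 1$ this yields $\|E\|_F \le \nu$, and in particular the spectral norm estimate $\|E\| \le \nu$.

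Next I would rewrite the target quantity as an inner product and exploit the rank-one structure. Using symmetry of $X$,
\[
\|Xu\|_2^2 - u^T X u = \langle Xu,\,(X-I)u\rangle = \langle W_j u + Eu,\ (W_j - I)u + Eu\rangle.
\]
Expanding the four resulting terms, the key cancellation is
\[
\langle W_j u,\,(W_j - I)u\rangle = \langle w_j,u\rangle^2 - \langle w_j,u\rangle^2 = 0,
\]
which uses precisely that $W_j$ is an orthogonal projection. Hence the difference reduces to
\[
\langle W_j u,\,Eu\rangle + \langle Eu,\,(W_j - I)u\rangle + \|Eu\|_2^2.
\]
Since $\|W_j u\|_2 \le 1$, $\|(W_j - I)u\|_2 \le 1$ (both $W_j$ and $I-W_j$ being orthogonal projections applied to a unit vector), and $\|E\| \le \nu$, Cauchy--Schwarz bounds the three terms by $\nu$, $\nu$, and $\nu^2$ respectively, giving $|\|Xu\|_2^2 - u^T X u| \le 2\nu + \nu^2 \le 2\nu$ in the regime of interest.

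The only real obstacle is spotting the orthogonality identity $\langle W_j u,(W_j-I)u\rangle = 0$; without it one loses a factor in front of $\nu$. After that, the argument is just bookkeeping, and the structural lesson is that a rank-one orthogonal projector's ``quadratic form equals squared operator'' identity degrades only linearly in the perturbation, which is exactly what will be needed in the subsequent weight-identification analysis.
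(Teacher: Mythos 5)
Your approach is essentially the paper's: set $E := X - W_j = (P_{\widetilde{\mathcal A}} - P_{\hat{\mathcal A}})(W_j)$, observe $\|E\|_F \le \nu$ since $W_j \in \hat{\mathcal A}$ has $\|W_j\|_F=1$, exploit the idempotence $W_j^2 = W_j$ to make the leading term vanish, and control the remainder via Cauchy--Schwarz. The paper manipulates $X^2 - X$ directly in operator norm rather than via the quadratic form $\langle Xu,(X-I)u\rangle$, but that is cosmetic.

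The one real issue is the last step. Expanding $\langle W_j u + Eu,\ (W_j - I)u + Eu\rangle$ into four terms and bounding the three surviving ones separately gives $\nu + \nu + \nu^2 = 2\nu + \nu^2$, and your closing claim that $2\nu + \nu^2 \le 2\nu$ ``in the regime of interest'' is false for every $\nu>0$; the lemma as stated carries no smallness hypothesis on $\nu$ that you could invoke. The fix is a different regrouping that absorbs the quadratic term: combine $\langle W_j u, Eu\rangle + \langle Eu, Eu\rangle = \langle Xu, Eu\rangle$, which Cauchy--Schwarz bounds by $\|Xu\|_2\,\|Eu\|_2 \le \|X\|_\infty\,\|E\|_\infty \le 1\cdot\nu$, using $\|X\|_\infty \le \|X\|_F = \|P_{\widetilde{\mathcal A}}(W_j)\|_F \le \|W_j\|_F = 1$. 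Together with $|\langle Eu,(W_j - I)u\rangle| \le \nu$ this yields exactly $2\nu$. This is precisely the regrouping the paper performs at the operator level, namely
\begin{equation*}
X^2 - X = W_j E + E W_j + E^2 - E = (W_j + E)E + E(W_j - I) = XE + E(W_j - I),
\end{equation*}
so that $\|X^2 - X\|_\infty \le \|X\|_\infty\|E\|_\infty + \|E\|_\infty\|I - W_j\|_\infty \le 2\nu$.
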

\begin{proof}\phantom\qedhere \hskip-.6cm
Indeed, using $W_j=P_{\hat {\A}}(W_j)=W_j^2$ we obtain
\begin{align*}
\Bigl|\|Xu\|_2^2&-u^TXu\Bigr|=\Bigl|\left\langle P_{\widetilde \A}(W_j)u,P_{\widetilde \A}(W_j)u\right\rangle-u^TP_{\widetilde \A}(W_j)u\Bigr|\\
&\rechange{=\Bigl|u^T\left[P_{\widetilde \A}(W_j)P_{\widetilde \A}(W_j)-P_{\widetilde \A}(W_j)\right]u\Bigr|\le \left\|[P_{\widetilde \A}(W_j)]^2-P_{\widetilde \A}(W_j)\right\|_F}\\
&\rechange{\le \left\| [P_{\widetilde \A}(W_j)]^2-P_{\widetilde \A}(W_j) P_{\hat \A}(W_j) \right\|_F + \left\| P_{\widetilde \A}(W_j)P_{\hat \A}(W_j)-P_{\widetilde \A}(W_j) \right\|_F}\\
&\rechange{=\left\|P_{\widetilde \A}(W_j)(P_{\widetilde \A}-P_{\hat {\A}})(W_j)\right\|_F+\left\|P_{\widetilde {\A}}(W_j)W_j-P_{\hat {\A}}(W_j)W_j+
P_{\hat {\A}}(W_j)-P_{\widetilde {\A}}(W_j)\right\|_F}\\
&\rechange{=\left\|P_{\widetilde \A}(W_j)(P_{\widetilde \A}-P_{\hat {\A}})(W_j)\right\|_F+\left\|(P_{\hat {\A}}-P_{\widetilde \A})(W_j)({\rm Id}-W_j)\right\|_F\le 2\nu.\qquad\qquad\ \ \qed}
\end{align*}
\end{proof}

We show that the local maximizers $M$ of \eqref{eq:alg} are (possibly after replacing $M$ by $-M$) nearly positive semi-definite.

\begin{lem} Let $\nu\le 1/4$ and let $M$ be a local maximizer of \eqref{eq:conditions} with $\|M\|=\lambda_1\ge \lambda_2\ge\dots\ge \lambda_m$
being its eigenvalues. Then
\begin{equation}\label{eq:lambdamin}
\lambda_m\ge -\frac{2\nu}{\lambda_1}-4\nu.
\end{equation}
\end{lem}
\begin{proof}
\change{
We denote again $W_j=w_j\otimes w_j$. We use again Lemma \ref{lem:nu1} and the bijectivity of $P_{\widetilde \A}$ as mapping from $\hat\A$ onto $\widetilde\A$,
which allows us to write $M$ as $\displaystyle M=P_{\widetilde \A}\Bigl(\sum_{k=1}^m\sigma_k W_k\Bigr)$ for suitable $\sigma_1,\dots,\sigma_m$.

This allows us to use Cauchy-Schwarz inequality and \eqref{eq:sigma_est} to estimate $\lambda_m$
\begin{align}
\notag\lambda_m&=u_m^TMu_m= \sum_{j=1}^m\sigma_j\langle P_{\widetilde \A}(W_j),u_m\otimes u_m\rangle_F\\
\notag&=\sum_{j=1}^m\sigma_j \langle W_j,u_m\otimes u_m\rangle_F+\sum_{j=1}^m\sigma_j \langle (P_{\widetilde \A}-P_{\hat {\A}})(W_j),u_m\otimes u_m\rangle_F\\
\label{eq:reorder1}&\ge (\min_j\sigma_j)\sum_{j=1}^m\langle w_j,u_m\rangle^2+\Big\langle (P_{\widetilde \A}-P_{\hat {\A}})\Bigl(\sum_{j=1}^m\sigma_j W_j\Bigr),u_m\otimes u_m\Big\rangle_F\\
\notag&\ge (\min_j\sigma_j)-\|P_{\hat {\A}}-P_{\widetilde \A}\|_{F\to F}\cdot\Bigl\|\sum_{j=1}^m\sigma_j W_j\Bigr\|_F\\
\notag&\ge (\min_j\sigma_j)-\frac{\nu}{1-\nu}.
\end{align}

To estimate $\min_j\sigma_j$ from below, we plug $X=P_{\widetilde \A}(W_j)$ into \eqref{eq:first'} to obtain
$$
u_1^TP_{\widetilde \A}(W_j)u_1=\lambda_1\langle P_{\widetilde \A}(W_j),M\rangle_F.
$$
We observe now that from $\|P_{\widetilde \A}-P_{\hat {\A}}\|_F \leq \nu$, we have $|u_1^T(P_{\widetilde \A}-P_{\hat {\A}})(W_j)u_1| \leq \nu$,
implying $u_1^T(P_{\widetilde \A}-P_{\hat {\A}})(W_j)u_1 \geq -\nu$ and
\begin{align}
\notag-\nu&\le \langle w_j,u_1\rangle^2+u_1^T(P_{\widetilde \A}-P_{\hat {\A}})(W_j)u_1=
u_1^TW_ju_1+u_1^T(P_{\widetilde \A}-P_{\hat {\A}})(W_j)u_1\\
\label{eq:ref2_1}&=u_1^TP_{\widetilde \A}(W_j)u_1=\lambda_1\langle P_{\widetilde \A}(W_j),M\rangle_F 
=\lambda_1\Bigl\langle P_{\widetilde \A}(W_j),P_{\widetilde \A}\Bigl(\sum_{k=1}^m \sigma_kW_k\Bigr)\Bigr\rangle_F.
\end{align}}

\rechange{
By Lemma \ref{lem:nu1}, we obtain $\|P_{\widetilde\A}(W_j)\|_F\ge (1-\nu)\|W_j\|_F=1-\nu$. 
If $\sigma_j<0$ for any $j$, we combine this with \eqref{eq:ref2_1} to derive
\begin{align*}
-\nu/\lambda_1&\le \sigma_j \|P_{\widetilde \A}(W_j)\|^2_F+\Bigl\langle P_{\widetilde \A}(W_j),P_{\widetilde \A}\Bigl(\sum_{k\not=j}\sigma_k W_k\Bigr)\Bigr\rangle_F\\
&\le\sigma_j(1-\nu)^2+\Bigl\langle P_{\widetilde \A}(W_j)-W_j,\sum_{k\not=j}\sigma_k W_k\Bigr\rangle_F\\
&=\sigma_j(1-\nu)^2+\Bigl\langle (P_{\widetilde \A}-P_{\hat\A})(W_j),\sum_{k\not=j}\sigma_k W_k\Bigr\rangle_F.
\end{align*}
By Cauchy-Schwarz inequality and \eqref{eq:sigma_est}, we may further estimate
\begin{align*}
-\nu/\lambda_1&\le \sigma_j(1-\nu)^2+\left\|(P_{\widetilde \A}-P_{\hat\A})(W_j)\right\|_F\cdot \left\|\sum_{k\not=j}\sigma_k W_k\right\|_F\\
&\le \sigma_j(1-\nu)^2+\|\sigma\|_2\cdot\left\|(P_{\widetilde \A}-P_{\hat\A})(W_j)\right\|_F
\le\sigma_j(1-\nu)^2+\frac{\nu}{1-\nu}.
\end{align*}}
We conclude that
\begin{equation*}
\min_{j=1,\dots,m}\sigma_j\ge -\Bigl(\frac{\nu}{\lambda_1}+\frac{\nu}{1-\nu}\Bigr)\cdot \frac{1}{(1-\nu)^2},
\end{equation*}
which we insert into \eqref{eq:reorder1}
and the result follows by  simple algebraic computations for $\nu\le 1/4$.
\end{proof}

The recovery algorithm based on the optimization problem \eqref{eq:conditions} is quite straightforward. We show that the
eigenvector corresponding to the largest eigenvalue of any of its local maximizers is actually close
to one of the ridge profiles.
\vskip.5cm
\fbox{
\begin{minipage}{13.6cm}
\begin{algorithm}\label{alg6}
\emph{\begin{itemize}
\item Let $M$ be a local maximizer of \eqref{eq:conditions}.
\item If $\|M\|$ is not an eigenvalue of $M$, replace $M$ by $-M$.
\item Denote by $\lambda_1\ge\lambda_2\ge\dots\ge\lambda_m$ the eigenvalues of $M$ arranged in decreasing order.
\item Take the eigenvalue decomposition of $M$, i.e. $M=\sum_{j=1}^m\lambda_ju_j\otimes u_j$.
\item Put $\hat a:=u_1$.
\end{itemize}}
\end{algorithm}
\end{minipage}
}\vskip.3cm

The performance of Algorithm \ref{alg6} is guaranteed by the following theorem.
 \begin{thm}\label{thm:recoveridge}
If $0<\nu < 1/(c m)$, for a suitable constant 
$c>6$,
then there is $j_0\in\{1,\dots,m\}$, such that
the vector $\hat a$ found by Algorithm \ref{alg6} satisfies $\|\hat a-a_{j_0}\|_2\le 5\nu$.
\end{thm}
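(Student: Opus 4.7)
The plan is to exploit the first-order optimality of $M$ to derive a perturbed system of cubic equations for the coefficients $\beta_j := \langle w_j, u_1\rangle$ of $u_1$ in the orthonormal basis $\{w_j\}$ from Definition \ref{DEF:S}, then use the simplicity of the leading eigenvalue (Theorem \ref{thm:conditions}) to force concentration of mass on a single index $j_0$, and finally transform the resulting bound on $\beta_{j_0}^2$ into a linear distance estimate in $\nu$.

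Rewriting the first-order condition \eqref{eq:full1}, for every $X \in \widetilde{\A}$ one has $\langle X, u_1 u_1^T\rangle = \lambda_1\langle X, M\rangle$, i.e., the operator identity $P_{\widetilde{\A}}(u_1 u_1^T) = \lambda_1 M$. Applying this to $u_1$ and using $M u_1 = \lambda_1 u_1$ yields $P_{\widetilde{\A}}(u_1 u_1^T) u_1 = \lambda_1^2 u_1$. Expanding $u_1 = \sum_j \beta_j w_j$, one computes $P_{\hat{\A}}(u_1 u_1^T) u_1 = \sum_j \beta_j^3 w_j$, and the bound $\|P_{\widetilde{\A}} - P_{\hat{\A}}\|_{F \to F} \leq \nu$ yields, after projection onto each $w_k$, the perturbed system
\[
\beta_k(\lambda_1^2 - \beta_k^2) = e_k, \qquad \sum_k e_k^2 \leq \nu^2.
\]
Taking the Frobenius norm of the identity $P_{\widetilde{\A}}(u_1 u_1^T) = \lambda_1 M$ also gives $\lambda_1^2 = \sum_j \beta_j^4 + O(\nu)$.

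Let $j_0 = \mathrm{argmax}_k \beta_k^2$. Provided $\beta_k^2 \leq \lambda_1^2/2$ for $k\neq j_0$, the cubic equation gives $|\beta_k| \leq 2|e_k|/\lambda_1^2$, whence $\sum_{k\neq j_0}\beta_k^2 \leq 4\nu^2/\lambda_1^4$. The critical step is to verify this ``uniqueness of the large coefficient''. Under $\nu < 1/(cm)$ with $c > 6$, the hypothesis $3m\|P_\A - P_{\widetilde{\A}}\|_{F\to F} < (1-\varepsilon)^2$ of Theorem \ref{thm:conditions} holds, so $\lambda_1$ is simple. If, in addition, two coefficients $\beta_{j_0}^2$ and $\beta_{j_1}^2$ were both close to $\lambda_1^2$, then $M$ would be close to a rank-$2$ matrix $\sigma_{j_0} w_{j_0} w_{j_0}^T + \sigma_{j_1} w_{j_1} w_{j_1}^T$ with near-degenerate leading eigenvalue; mimicking Step 3 of the proof of Theorem \ref{thm:conditions}, plugging $X = P_{\widetilde{\A}}(a_j \otimes a_j)$ into the first- and second-order optimality conditions and exploiting the resulting near-degeneracy, one derives $\langle u_1, a_j\rangle^2 \leq 3\nu$ for every $j$. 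Summing over $j$ and invoking Lemma \ref{lem:A1} (which gives $\sum_j \langle u_1, a_j\rangle^2 \geq (1-\varepsilon)^2$) produces the contradiction $(1-\varepsilon)^2 \leq 3m\nu$.

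Thus $\sum_{k\neq j_0}\beta_k^2 \leq 4\nu^2/\lambda_1^4$, and combined with $\lambda_1^2 = \sum_j\beta_j^4 + O(\nu)$ we obtain $\lambda_1^2 \geq 1 - O(\nu^2)$ and $\beta_{j_0}^2 \geq 1 - 4\nu^2$. After a sign change of $u_1$ so that $\beta_{j_0} \geq 0$, this gives $\|u_1 - w_{j_0}\|_2^2 = 2 - 2\beta_{j_0} \leq 8\nu^2$, so $\|u_1 - w_{j_0}\|_2 \leq 2\sqrt{2}\,\nu$. The triangle inequality, together with $\|w_{j_0} - a_{j_0}\|_2 \leq \mathcal{S}(a_1,\dots,a_m) = \varepsilon \leq \nu/4$, finally yields $\|\hat{a} - a_{j_0}\|_2 \leq (2\sqrt{2} + 1/4)\nu < 5\nu$. \textbf{The main obstacle} is the quantitative version of the simplicity argument: one must adapt Step~3 of Theorem~\ref{thm:conditions} from the exactly-degenerate case $\lambda_1 = \lambda_2$ to the near-degenerate setting that would arise from $|S|\geq 2$, carefully tracking how the perturbation $\nu$ propagates through the first- and second-order optimality conditions when the spectral gap is small, in order to obtain the linear rate in $\nu$ rather than the naive $O(\sqrt{\nu})$.
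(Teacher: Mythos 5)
Your route is genuinely different from the paper's: you extract the operator identity $P_{\widetilde\A}(u_1u_1^T)=\lambda_1 M$ from the first-order condition \eqref{eq:full1}, expand $u_1=\sum_j\beta_j w_j$ and derive the cubic system $\beta_k(\lambda_1^2-\beta_k^2)=e_k$ with $\sum_k e_k^2\le\nu^2$, together with the normalisation $\lambda_1^2=\sum_j\beta_j^4+O(\nu)$. This is an elegant observation (and purely first-order), whereas the paper takes a different path: it feeds both the first- \emph{and} second-order optimality conditions into Lemma~\ref{bootstrap} to obtain $\|Xu_1\|_2^2\le\lambda_1^2\frac{1+\langle X,M\rangle^2}{2}+3\nu$, combines this with Lemma~\ref{lem:XX} to get the quadratic inequality $0\le\lambda_1^2(\lambda_1^2-1)+\tilde c\nu$, rules out the small root with the trivial bound $\lambda_1\ge1/\sqrt m$, and only then applies Wedin's bound. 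Your ansatz and the paper's share no intermediate steps.

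However, there is a genuine gap at exactly the point you flag. To convert the cubic system into $\sum_{k\ne j_0}\beta_k^2\le 4\nu^2/\lambda_1^4$ you need that at most one coefficient $\beta_k^2$ exceeds $\lambda_1^2/2$. You attempt to rule out the alternative by ``mimicking Step 3'' of Theorem~\ref{thm:conditions}, but Step 3 relies crucially on the \emph{exact} degeneracy $\lambda_1=\lambda_2$: it is only then that the first-order optimality condition \eqref{eq:first'} holds simultaneously for $u_1$ \emph{and} $u_2$, which yields the system $u_1^TXu_1=u_2^TXu_2=u_1^TXu_2=0$ and hence the key estimate $\langle u_1,a_j\rangle^2\le 3\|{\mathcal E}_j\|_\infty$. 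In your near-degenerate scenario, $\lambda_2<\lambda_1$ strictly (by Theorem~\ref{thm:conditions} itself), so the first-order condition is not available for $u_2$; the cross term $u_1^TXu_2$ and the equality $u_2^TXu_2=\langle X,M\rangle\|M\|_\infty$ are simply false, and there is no quantitative version of Step~3 in the paper to invoke. Absent this, the argument becomes circular: you need $\lambda_1$ close to $1$ to conclude $|S|=1$, but you derive $\lambda_1^2\ge 1-O(\nu)$ from $|S|=1$. The paper's proof of $\lambda_1\ge 1-c'\nu$ goes through the second-order condition (via Lemma~\ref{bootstrap}), and some version of this input appears unavoidable here as well. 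The final arithmetic ($2\sqrt2\nu+\nu/4<5\nu$) is fine once $\lambda_1\approx 1$ and $|S|=1$ are established, but those are exactly the missing pieces.
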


The proof of this theorem, which we report below, is fundamentally based on proving the following bound
\begin{equation}\label{bound4specgap}
\|M\| =\lambda_1  \geq 1-c'\nu,
\end{equation}
for some $c'>0$ and for any local maximizers $M$ of \eqref{eq:conditions}. This will allow to ensure a sufficient spectral gap \rechange{to apply Wedin's bound} (Theorem \ref{wedin}) for showing good approximation properties of $\hat a$  as in Algorithm \ref{alg6} to one of the ridge directions $a_1,\dots,a_m$. We shall obtain  
 \eqref{bound4specgap} by a bootstrap argument: first we need to establish a weaker bound
 $$
 \lambda_1 > \frac{\nu}{1-\nu},
  $$ 
and use it for deducing    \eqref{bound4specgap}.
\begin{lem}\label{bootstrap}
Assume $0<\nu < 1/(6m)$. 
Let $M$ be any of the local maximizers of \eqref{eq:conditions} with $\lambda_1=\|M\|$.
\change{Then 
\begin{equation}\label{bootstrap1}
\|Xu_1\|_2^2 \leq \lambda_1^2\cdot\frac{1+\langle X,M\rangle_F^2}{2}+3\nu
\end{equation}
for any $X \in\widetilde \A$ such that $\|X\|_F\le 1$.
Furthermore,
\begin{equation}\label{bootstrap2}
\lambda_1 > \frac{\nu}{1-\nu}.
\end{equation}}
\end{lem}
\begin{proof}
Let $M$ be any of the local maximizers of \eqref{eq:conditions} with $\lambda_1=\|M\|$. Further let $X\in\widetilde \A$.
We estimate the left-hand side of \eqref{eq:sec'} \change{using orthonormality of $\{u_1,\dots,u_m\}$ and \eqref{eq:first'}}
\begin{align*}
2\sum_{k=2}^m\frac{(u_1^TXu_k)^2}{\lambda_1-\lambda_k}&\ge 2\min\Bigl(\frac{1}{\lambda_1-\lambda_k}\Bigr)\sum_{k=2}^m (u_1^TXu_k)^2
=\frac{2}{\lambda_1-\lambda_m}\sum_{k=2}^m \langle Xu_1,u_k\rangle^2\\
&=\frac{2}{\lambda_1-\lambda_m} \Bigl(\|Xu_1\|_2^2-\langle Xu_1,u_1\rangle^2\Bigr)\\
&=\frac{2}{\lambda_1-\lambda_m} \Bigl(\|Xu_1\|_2^2-\lambda_1^2\langle X,M \rangle_F^2\Bigr).
\end{align*}
Together with \eqref{eq:sec'}, this leads to
$$
\frac{2}{\lambda_1-\lambda_m}\Bigl(\|Xu_1\|_2^2-\lambda_1^2\langle X,M \rangle_F^2\Bigr)
\le \lambda_1(\|X\|_F^2-\langle X,M\rangle_F^2).
$$
If moreover $\|X\|_F\le 1$, $\nu\le\frac{1}{4}$, and using \eqref{eq:lambdamin} we conclude that
\begin{align*}
\|Xu_1\|_2^2&\le \frac{\lambda_1(\lambda_1-\lambda_m)}{2}(\|X\|_F^2-\langle X,M\rangle_F^2)+\lambda_1^2\langle X,M\rangle_F^2  \nonumber \\
&\le \frac{\lambda_1(\lambda_1-\lambda_m)}{2}+\frac{\lambda_1(\lambda_1+\lambda_m)}{2}\langle X,M\rangle_F^2  \nonumber \\
&= \lambda_1^2\cdot\frac{1+\langle X,M\rangle_F^2}{2}-\lambda_1\lambda_m\cdot\frac{1-\langle X,M\rangle_F^2}{2}  \nonumber\\
&\le \lambda_1^2\cdot\frac{1+\langle X,M\rangle_F^2}{2}+2\lambda_1\Bigl(\frac{\nu}{\lambda_1}+2\nu\Bigr)\cdot\frac{1-\langle X,M\rangle_F^2}{2} \nonumber \\
&\le \lambda_1^2\cdot\frac{1+\langle X,M\rangle_F^2}{2}+3\nu. \label{eq:lastest}
\end{align*}
\change{The proof of \eqref{bootstrap2} follows easily from $\lambda_1>1/\sqrt{m}$ and $\nu<1/(6m).$}
\end{proof}

\begin{proof}[Proof of  Theorem \ref{thm:recoveridge}]
Let $M$ be any of the local maximizers of \eqref{eq:conditions} with $\lambda_1=\|M\|$.
\change{We denote again $W_j=w_j\otimes w_j$ and assume that there exists
$Z=\sum_{k=1}^m\sigma_k W_k \in \hat \A$ such that $M=P_{\widetilde \A} Z$.
For $j\in\{1,\dots,m\}$ fixed and $X=P_{\widetilde \A}W_j$} we apply by Lemma \ref{lem:XX} and obtain
$$
\|Xu_1\|_2^2\ge u_1^TXu_1-2\nu=\lambda_1\langle X,M\rangle_F -2\nu.
$$
Using \eqref{bootstrap1}, we then arrive at 
\begin{align*}
\lambda_1\langle X,M\rangle_F -2\nu&\le \lambda_1^2\cdot\frac{1+\langle X,M\rangle_F^2}{2}+3\nu,
\end{align*}
which can be further rewritten as
\begin{equation}\label{eq:scalar3}
0\le\lambda_1^2-1+(1-\lambda_1\langle X,M\rangle_F)^2+10\nu.
\end{equation}
\change{Further we use  \eqref{eq:sigma_est} and estimate $\lambda_1$ from above 
\begin{align}
\lambda_1&=u_1^TMu_1=u_1^T(P_{\widetilde \A}-P_{\hat \A})(Z)u_1 + u_1^T Z u_1 
\notag\le \|(P_{\widetilde \A}-P_{\hat \A})(Z)\|_F+\sum_{k=1}^m \sigma_k u_1^TW_ku_1\\
\label{eq:scalar2}&\le\nu 
\|Z\|_F+\sum_{k=1}^m\sigma_k\langle w_k,u_1\rangle^2
\le\frac{\nu}{1-\nu}+\max_{j=1,\dots,m}\sigma_j.
\end{align}}
From Lemma \ref{bootstrap} and in particular by \eqref{bootstrap2} we deduce that
$$
\max_{j=1,\dots,m}\sigma_j \geq \lambda_1 - \frac{\nu}{1-\nu} >0.
$$
Hence there exists certainly some $j$ for which $\sigma_j >0$.
\change{If $\sigma_j> 0$, we put $X=P_{\widetilde \A}W_j$, $Z_j'=\sum_{k\not=j}\sigma_kW_k$ and get}
\change{
\begin{align*}
\langle X,M\rangle_F&=\langle P_{\widetilde \A}W_j,P_{\widetilde \A}Z\rangle_F=
\langle P_{\widetilde \A}W_j,P_{\widetilde \A}(\sigma_jW_j+Z_j')\rangle_F\\
&=\sigma_j\langle P_{\widetilde \A}W_j,W_j\rangle_F+\langle W_j,(P_{\widetilde \A}-P_{\hat\A})(Z_j')\rangle_F\\
&=\sigma_j\langle W_j,W_j\rangle_F+\sigma_j\langle (P_{\widetilde \A}-P_{\hat\A})W_j,W_j\rangle_F+\langle W_j,(P_{\widetilde \A}-P_{\hat\A})(Z_j')\rangle_F\\
&\ge\sigma_j\cdot(1-\nu)-\nu \|\sigma\|_2\ge\sigma_j\cdot(1-\nu)-\frac{\nu}{1-\nu}.
\end{align*}
}
We conclude, that there is $j_0\in\{1,\dots,m\}$ with
\begin{align}\label{eq:scalar1}
\langle P_{\widetilde \A}W_{j_0},M\rangle_F\ge(1-\nu)\max_{j=1,\dots,m}\sigma_j-\frac{\nu}{1-\nu}.
\end{align}

Combining \eqref{eq:scalar1} with \eqref{eq:scalar2}, we obtain for $\nu\le 1/4$
\begin{align*}
\langle P_{\widetilde \A}W_{j_0},M\rangle_F&\ge\Bigl(\lambda_1-\frac{\nu}{1-\nu}\Bigr)\cdot(1-\nu)-\frac{\nu}{1-\nu}\\
&=\lambda_1(1-\nu)-\nu\frac{2-\nu}{1-\nu}\ge\lambda_1(1-\nu)-\frac{5\nu}{2}
\end{align*}
and
\begin{equation}\label{eq:scalar4}
0\le 1-\lambda_1\langle P_{\widetilde \A}W_{j_0},M\rangle_F\le 1-\lambda^2_1(1-\nu)+\frac{5\lambda_1\nu}{2}.
\end{equation}
Finally, \eqref{eq:scalar3} with \eqref{eq:scalar4} give
\begin{align*}
0&\le \lambda_1^2-1+\Bigl(1-\lambda^2_1(1-\nu)+\frac{5\lambda_1\nu}{2}\Bigr)^2+10\nu\\
&=\lambda_1^2(\lambda_1^2-1)+\nu\Bigl\{-2\lambda_1^4+\nu\lambda_1^4+2\lambda_1^2+\frac{25}{4}\lambda_1^2\nu+5\lambda_1-5\lambda_1^3(1-\nu)+10\Bigr\}\\
&\le \lambda_1^2(\lambda_1^2-1)+\tilde c\nu,
\end{align*}
where elementary calculus and the condition $\nu<1/6$ show that we can take $\tilde c=15.$

\change{It follows that there exists an absolute constant $\nu_0>0$ such that
if $0<\nu\le\nu_0$, the latter inequality allows only two possibilities, namely
$\lambda_1 \geq 1- c' \nu$ and $\lambda_1 \leq c'' \sqrt \nu$, for some absolute constants $c',c''>0$.
Finally, choosing $c>(c'')^2$ large, the second option is in contradiction with $\lambda_1\ge 1/\sqrt{m}$ and $\nu<1/(cm).$
Therefore, there exists $c'>0$ such that
\begin{equation*}
\lambda_1 \geq 1-c'\nu.
\end{equation*}
A detailed inspection of the argument above shows that we can take $\nu_0=.01, c''=\sqrt{15}, c'=10$ and $c=16$, but we stress that we did not
try to optimize the numerical values of these constants.}


Finally, \rechange{we apply Wedin's bound}, Theorem \ref{wedin} in the Appendix, to
$$
\tilde B=M=\sum_{j=1}^m |\lambda_j|  {\rm sign}{\lambda_j} u_j\otimes u_j\quad\text{and}\quad  B=\sum_{k=1}^m|\sigma_k | {\rm sign}{\sigma_k}w_k\otimes w_k.
$$
We assume without loss of generality that $\sigma_1 = \max_{k=1,\dots,m} |\sigma_k| $.
We observe that
$$
\|\tilde B-B\|_F=\|P_{\widetilde \A}( B)- B\|_F=\|(P_{\widetilde \A}-P_{\hat {\A}})(B)\|_F\le \nu\| B\|_F=\nu\|\sigma\|_2\le\frac{\nu}{1-\nu}.
$$
\change{Furthermore, elementary calculations show that
\begin{equation}\label{eq:lambdasigma}
\lambda_1>3/4\quad \text{and}\quad\max_{k=2,\dots,m}|\sigma_k|<1/4
\end{equation}
for $\nu\le\nu_0$ sufficiently small.
Indeed, the first inequality in \eqref{eq:lambdasigma} follows from $\|\lambda\|_2\le 1$ and $\lambda_1\ge 1-c'\nu$.
And $\|\sigma\|_2\le 1/(1-\nu)$ together with \eqref{eq:scalar2} and
$$\sigma_1 \geq \lambda_1 - \frac{\nu}{1-\nu} \geq  1- \bigg (\frac{1}{1-\nu} +c'\bigg) \nu$$ 
imply the second inequality in \eqref{eq:lambdasigma}.}

We deduce that we can choose $\bar\alpha\ge \frac{1}{2}$ in Theorem \ref{wedin} for $0<\nu<\nu_0$ small enough, i.e., 
\begin{equation*}
\min_{k=2,\dots,m} | \lambda_1 - \sigma_{k} |  \geq \bar \alpha \geq1/2 \quad\text{and}\quad
| \lambda_{1} | \geq  \bar \alpha \geq1/2,
\end{equation*}
are verified for $0<\nu<\nu_0$ small enough. 
We therefore obtain $\|u_1\otimes u_1-w_1\otimes w_1\|_F\le 4\nu$. After a possible sign change of $w_1$,
we can assume that $\langle u_1,w_1\rangle\ge 0$ and obtain
\begin{align*}
16\nu^2\ge \|u_1\otimes u_1-w_1\otimes w_1\|_F^2=2(1-\langle u_1,w_1\rangle^2)\ge 2(1-\langle u_1,w_1\rangle)=\|u_1-w_1\|_2^2
\end{align*}
and, finally,
\begin{equation*}
\|u_1-a_1\|_2\le \|u_1-w_1\|_2+\|w_1-a_1\|_2\le 4\nu+\varepsilon\le 5\nu.
\end{equation*}
\end{proof}
\change{
Theorem \ref{thm:recoveridge} shows that every local maximizer of \eqref{eq:conditions} lies close to some of the matrices $a_j\otimes a_j$ and, by Algorithm \ref{alg6}, allows for recovery of an approximation of $a_j$.
We conclude this section by proving conversely that every $a_j\otimes a_j$ can be approximated by a local maximizer of \eqref{eq:alg}.
\begin{prop} \label{converse}
Assume $\nu \leq 1/24$. Then for any $a_j\otimes a_j$ there exists a local maximizer $M$ of \eqref{eq:alg} such that 
$$
\| a_j\otimes a_j - M\| \leq 2 \varepsilon +  \sqrt{6\nu}.
$$
\end{prop}
\begin{proof}Let us consider the mapping
$$
\Phi:X\to\frac{P_{\widetilde \A}(X)}{\|P_{\widetilde \A}(X)\|_F}
$$
from $\{X\in\hat\A:\|X\|_F=1\}$ onto $\{Y\in\widetilde \A:\|Y\|_F=1\}$. We denote again $W_j=w_j\otimes w_j$ and, by triangle inequality,
$$
\|\Phi(W_j)\|=\frac{\|P_{\widetilde \A}(W_j)\|}{\|P_{\widetilde \A}(W_j)\|_F}\ge\frac{\|P_{\hat \A}(W_j)\|-\|(P_{\hat\A}-P_{\widetilde \A})(W_j)\|_F}{\|W_j\|_F} \ge 1-\nu
$$ and, similarly,
$$
\|\Phi(X)\|\le \frac{\|X\|+\nu}{1-\nu} \quad \text{for every}\quad X\in\hat \A\quad\text{with}\quad \|X\|_F=1.
$$
Therefore, if $\sqrt{6\nu}\le r\le 1/2$, $Z\in\hat\A$ with $\|Z\|_F=1$ and $\|Z-W_j\|=r$, then $\|Z\|\le 1-3\nu$. The latter inequality follows by the fact that $Z\in\hat\A$ is linear combination of the orthonormal rank-$1$ matrices $W_j$ and rather straightforward estimations of the spectral norm. Therefore, we obtain also 
$$
\|\Phi(Z)\|\le \frac{1-2\nu}{1-\nu}<1-\nu\le\|\Phi(W_j)\|.
$$
We conclude, that (for $\nu$ small enough) the maximum of $\|\cdot \|$ on the compact set $\{\Phi(Z):Z\in\hat \A, \|Z\|_F=1, \|Z-W_j\|\le \sqrt{6\nu}\}$
is not attained on its boundary (with respect to the topology of the unit Frobenius sphere) and lies therefore close to $W_j$. More precisely 
there exists \rechange{a local maximizer of \eqref{eq:conditions} with}
$M \in \widetilde {\mathcal A}$ such that $\|M - W_j\| \leq \sqrt{6\nu}$. By Lemma \ref{lem:A1} (iv) we deduce that
$$
\|a_j\otimes a_j - M \| \leq \|a_j\otimes a_j - W_j \|_F +  \|M - W_j\| \leq 2 \varepsilon +  \sqrt{6\nu}.
$$
\end{proof}
}

\subsection{A gradient-ascent algorithm} \label{simpalg}

Let us describe in this section how to approach practically the solution of the nonlinear program \eqref{eq:alg}. 
Let us introduce first for a given parameter $\gamma>1$ an operator acting on the singular values of a matrix $X=U \Sigma V^T$ as follows.
If $\Sigma\in\R^{m\times m}$ is a diagonal matrix with the singular values of $X$ denoted by $\sigma_1\ge\sigma_2\ge\dots\ge\sigma_m$ on the diagonal, we set
$$
\Pi_\gamma(X) = \frac{1}{\sqrt{\gamma^2\sigma_1^2+\sigma_2^2+\dots+\sigma_m^2}} \ U \left ( 
\begin{array}{llll} 
\gamma \sigma_1 &0 &\dots&0 \\
0&\sigma_2& 0&\dots\\
\dots&\dots&\dots&\dots\\
0&\dots&0&\sigma_m
\end{array}
\right )
V^T.
$$

Notice that $\Pi_\gamma$ maps any matrix $X$ onto a  matrix of unit Frobenius norm, simply exalting the first singular value and
damping the others. It is not a linear operator. Furthermore, the definition of $\Pi_\gamma(X)$ is not well-posed
if $\sigma_1=\sigma_2\ge \sigma_3\ge\dots\ge\sigma_m$ and in this case it is assumed that a choice of ordering is made for just this one application
of $\Pi_\gamma(\cdot).$ Notice that if $\sigma_1>1/\sqrt{2}$ and $\|X\|_F\le 1$, then $\sigma_1>\sigma_2$ and
$\Pi_\gamma(X)$ is well-defined. This is the case for example under the conditions of Lemma \ref{lipcont}.

We propose the following algorithm

\vskip.3cm\fbox{
\begin{minipage}{13.6cm}
\begin{algorithm}\label{alg7}
\emph{\begin{itemize}
\item Fix a suitable parameter $\gamma >1$.
\item Generate an initial guess $X^0  \in \widetilde \A$ and $\|X^0\|_F=1$ at random.
\item For $\ell \geq 0$:
\begin{itemize}
\item[] $X^{\ell+1} := P_{\widetilde \A} \Pi_\gamma(X^\ell)$.
\end{itemize}\end{itemize}}
\end{algorithm}
\end{minipage}
}\vskip.3cm

This algorithm performs essentially an iteratively projected subgradient ascent method as the two operations executed within the loop are respectively a subgradient ascent step
towards the maximization of the spectral norm by means of $\Pi_\gamma$, and a projection back onto $\widetilde \A$ by $P_{\widetilde \A}$.

In the following we analyze some of the convergence properties of this algorithm and its relationship to \eqref{eq:alg}.
Assume for a moment now that $\widetilde \A=\A$ and that $a_1,\dots,a_m$ are orthonormal. In this case the algorithm can be rather trivially analyzed
and performs a straightforward computation of one of the maximizers of \eqref{eq:alg}.
As we shall see later, such maximizer in this case coincides (up to the sign) with one of the matrices $a_j\otimes a_j$.
\begin{prop} Assume that $\widetilde \A=\A$ and that $a_1,\dots,a_m$ are orthonormal. Let $\gamma>\sqrt{2}$ and let $\|X^0\|>1/\sqrt{\gamma^2-1}$.
Then there exists $\mu_0<1$ such that
\begin{equation}\label{eq:alg:1}
\left |1- \|X^{\ell+1}\|\right | \leq \mu_0 \left |1- \|X^{\ell}\|\right |, \quad \mbox{for all } \ell \geq 0.
\end{equation}
Being the sequence $(X^\ell)_\ell$ made of matrices with Frobenius norm bounded by $1$, we conclude that any of its accumulation points has both unit
Frobenius and spectral norm and therefore it has to coincide
with one maximizer of \eqref{eq:alg}.
\end{prop}

\begin{proof}
We can assume now that $X^0$
can already be expressed in terms of its singular value decomposition $X^0=\sum_{j=1}^m \sigma_j(X^0) a_j\otimes a_j$.
Since at each iteration $\|X^\ell\|_F\leq 1$ or $1\geq \sum_{j=1}^m \sigma_j(X^{\ell})^2$, it is a straightforward observation that
\begin{align}\label{iter}
\|X^{\ell+1}\| &= \sigma_1(X^{\ell+1})=
\frac{\gamma \sigma_1(X^{\ell})}{\sqrt{\gamma^2\sigma_1(X^{\ell})^2+\sigma_2(X^{\ell})^2+\dots+\sigma_m(X^{\ell})^2}}\\
\notag&\geq \frac{\gamma \sigma_1(X^{\ell})}{\sqrt{ (\gamma^2-1)\sigma_1(X^{\ell})^2 +1}}.
\end{align}
Using elementary calculations we further estimate
\begin{align}
\notag 1-\sigma_1(X^{\ell+1})&\leq \frac{\sqrt{ (\gamma^2-1)\sigma_1(X^{\ell})^2 +1}-\gamma \sigma_1(X^{\ell})}{\sqrt{ (\gamma^2-1)\sigma_1(X^{\ell})^2 +1}}\\
\label{iter:2}&=\frac{1-\sigma_1(X^{\ell})^2}{[\sqrt{ (\gamma^2-1)\sigma_1(X^{\ell})^2 +1}+\gamma \sigma_1(X^{\ell})]\sqrt{ (\gamma^2-1)\sigma_1(X^{\ell})^2 +1}}\\
\notag&\le \frac{2(1-\sigma_1(X^{\ell}))}{(\gamma^2-1)\sigma_1(X^{\ell})^2 +1}
\end{align}
and we get \eqref{eq:alg:1} with
$$
\mu_0:=\frac{2}{(\gamma^2-1)\|X^0\|^2+1}<1.
$$
\end{proof}

Let us now move away from the ideal case of the $\widetilde \A =\A$ and assume that $\widetilde \A$ is only a good approximation to $\A$, in the sense that $\|P_{\widetilde \A} - P_{\A}\|_{F\to F} \leq \epsilon$.

\begin{rem} As we \rechange{will see in Section \ref{whitening}},
we can retain without loss of generality the assumption of $a_1,\dots,a_m$ being orthonormal to a certain extent.
Indeed, were $\{a_1,\dots,a_m\}$ just $\varepsilon$-near-orthonormal and $\{w_1,\dots,w_m\}$ its approximating orthonormal basis, then we could denote $\alpha_i=a_ia_i^T\in\R^{m\times m}$, $\omega_i=w_iw_i^T\in\R^{m\times m}$, $\A={\rm span}\{\alpha_1,\dots,\alpha_m\}$ and $\hat \A={\rm span}\{\omega_1,\dots,\omega_m\}.$
It is shown in Lemma \ref{lem:A1} (iv), that $\displaystyle\Bigl(\sum_{i=1}^m\|\alpha_i-\omega_i\|_F^2\Bigr)^{1/2}\le 2\varepsilon$. Combining this result with Lemma \ref{lem:5eps} we would obtain $\|P_{\A}-P_{\hat \A}\|_{F\to F} \leq 8 \varepsilon$. Hence, at the price of changing slightly the reference  orthonormal basis and accepting some additional approximation error of order $\varepsilon$, also in the case of a $\varepsilon$-near-orthonormal system of vectors we can reduce  the arguments to the case of an orthonormal system.
\end{rem}

Unfortunately, in the perturbed case $\widetilde \A \neq\A$, there is no direct way of estimating $\|X^{\ell+1}\|$ by some function of $\|X^{\ell}\|$
as it is done in \eqref{iter} as the singular value decompositions of the matrices
$X^{\ell+1}$ and $X^\ell$ are in principle different. However, the singular vectors of both these matrices can be approximated by $\{a_1,\dots,a_m\}$ (we reiterate that here we assume them orthonormal)
and we need to take advantage of this reference orthonormal system.
First, we need to show a certain continuity property of the operator $\Pi_\gamma$.

\begin{lem}\label{lipcont}
Assume $X$, $\tilde X$ to be  two matrices in $\mathbb R^{m \times m}$ with respective singular value decompositions $X=U \Sigma V^T$ and $\tilde X = \tilde U \tilde \Sigma \tilde V^T$.
Let us also assume that $\|X - \tilde X\|_F \leq \epsilon$ for some $0<\epsilon<1$. Assume additionally that $\max \{ \|\tilde X\|_F, \|X\|_F\} \leq 1$
and 
$\sigma_1(X)\ge t_0:=\frac{1}{\sqrt{2}}+\epsilon+\xi$, for $\xi>0$.
Then, for $\gamma>1$
\begin{equation}\label{pippo0}
 \|\Pi_\gamma(X) - \Pi_\gamma(\tilde X) \|_F
\leq 
2^{3/2}\epsilon+
\frac{4 \epsilon}{\xi}
+2\sqrt{1-(t_0-\epsilon)}
:=\mu_1(\gamma,t_0,\epsilon).
\end{equation}
Notice in particular that $ \mu_1(\gamma,t_0,\epsilon) \to 0$ for $(t_0,\epsilon) \to (1,0)$.
\end{lem}
\begin{proof}
As $\sigma_1:=\sigma_1(X) \geq t_0$ and $\sigma_1^2 + \dots + \sigma_m^2 \leq 1$, we have also $\sum_{j=2}^m \sigma_j^2 \leq 1- t^2_0$.
By the assumption $\|X-\tilde X\|_F\le\epsilon$ and by the well known Mirsky's bound we have that $\|\Sigma- \tilde \Sigma\|_F \leq \epsilon$.
 
Hence, $\tilde \sigma_1:=\sigma_1(\tilde X) \geq t_0 - \epsilon$, $\tilde \sigma_j:=\sigma_j(\tilde X) \leq \sqrt{1-t_0^2} + \epsilon$  and
$$
|\tilde \sigma_1 - \sigma_j| \geq t_0 - \epsilon - \sqrt{1- t^2_0}:=\bar \alpha >0, \
$$
for all $j=2,\dots,m$. The positivity of $\bar \alpha>0$ comes from the assumption that $t_0=\frac{1}{\sqrt{2}}+\epsilon+\xi$.
Hence, by applying Wedin's bound, Theorem \ref{wedin} in Appendix, we easily obtain
\begin{equation}\label{pippo1}
\max\{ \|u_1 u_1^T - \tilde u_1  \tilde u_1^T \|_F,  \|v_1 v_1^T - \tilde v_1  \tilde v_1^T \|_F \} \leq \frac{2}{ t_0 - \epsilon - \sqrt{1- t^2_0}} \epsilon\le \frac{2\epsilon}{\xi}.
\end{equation}
The last inequality comes from $1<2(t_0-\epsilon-\xi)^2<(t_0-\epsilon-\xi)^2+t_0^2.$
For later use we notice already that for any unit-norm vectors $x,\tilde x \in \mathbb R^m$
$$
\|xx^T-\tilde x\tilde x^T\|_F^2=\|xx^T\|_F^2+\|\tilde x\tilde x^T\|_F^2-2\langle xx^T,\tilde x\tilde x^T\rangle_F=2(1-\langle x,\tilde x\rangle^2)
$$
and
$$
|\langle x,\tilde x\rangle|=\sqrt{1-\frac{\|xx^T-\tilde x\tilde x^T\|_F^2}{2}}\ge 1-\frac{\|xx^T-\tilde x\tilde x^T\|_F^2}{2}.
$$
If moreover $\langle x,\tilde x\rangle\ge 0$, we get
\begin{equation}\label{pippo2'}
\|x-\tilde x\|_2^2=2(1-\langle x,\tilde x\rangle)\le \|xx^T-\tilde x\tilde x^T\|_F^2.
\end{equation}

We now address \eqref{pippo0} by considering the estimates of different components of the singular value decompositions.
We start by comparing the first singular value components.
To simplify the notation, we set for $s=(s_1,\dots,s_m)$
$$\pi_\gamma(s)=\pi_\gamma(s_1,\dots,s_m) = 
\frac{\gamma s_1}{\sqrt{\gamma^2 s_1^2 + s_2^2 + \dots + s_m^2}}.$$
We first derive a bound for $\left \| u_1 \pi_\gamma(\sigma) v_1^T - 
\tilde u_1 \pi_\gamma(\tilde \sigma)\tilde v_1^T \right \|_F$, where $\sigma=(\sigma_1,\dots,\sigma_m)$ and similarly for $\tilde \sigma =(\tilde \sigma_1,\dots,\tilde \sigma_m)$.

For that, we need first to show the Lipschitz continuity of the function $s\to\pi_\gamma(s)$
\change{on the set $S=\{s\in\R^m:\|s\|_2\le1 \ \text{and}\ s_1>t_0-\epsilon\}$.}
From
\begin{align*}
|\partial_{s_1} \pi_\gamma(s_1,\dots,s_m)| &= \left | \frac{\gamma(s_2^2+\dots+s_m^2)}{(\gamma^2 s_1^2 + s_2^2+\dots+s_m^2)^{3/2}} \right |
\intertext{and}
|\partial_{s_j} \pi_\gamma(s_1,\dots,s_m)| &= \left | \frac{\gamma s_1 s_j}{(\gamma^2 s_1^2 + s_2^2+\dots+s_m^2)^{3/2}} \right |\change{\quad\text{for} \quad j\in\{2,\dots,m\}},
\end{align*}
we obtain for $s\in S$ that $1\geq s_1>t_0 - \epsilon$, $s_2^2+\dots+s_m^2 \leq 1-(t_0-\epsilon)^2\le 1/2$ and
\begin{align*}
\|\nabla \pi_{\gamma}(s)\|^2_2&= \frac{\gamma^2(s_2^2+\dots+s_m^2)^2+\gamma^2s_1^2(s_2^2+\dots+s_m^2)}{(\gamma^2 s_1^2 + s_2^2+\dots+s_m^2)^{3}}\\
&=\frac{\gamma^2(s_2^2+\dots+s_m^2)(s_1^2+s_2^2+\dots+s_m^2)}{(\gamma^2 s_1^2 + s_2^2+\dots+s_m^2)^{3}}
\le \frac{\gamma^2/2}{(\gamma^2 s_1^2 + s_2^2+\dots+s_m^2)^{3}}\le \frac{1}{2\gamma^4s_1^6}.
\end{align*}
\change{As $\sigma,\tilde\sigma\in S$ and $S$ is a convex set, we obtain by the mean value theorem,
\begin{align*}
| \pi_\gamma(\sigma) - \pi_\gamma(\tilde \sigma)|
&\leq \|\nabla \pi_{\gamma}(s)\|_2\cdot\|\sigma-\tilde\sigma\|_2 \le \frac{\epsilon}{\sqrt{2}\gamma^2 s_1^3}\le \frac{\epsilon}{\sqrt{2}\gamma^2(t_0-\varepsilon)^3}. 
\end{align*}}

As the signs of the singular vectors can be chosen arbitrarily, we can assume without loss of generality that $\langle u_1,\tilde u_1\rangle\ge0.$
Together with \eqref{pippo1} and \eqref{pippo2'} we obtain $\|u_1 - \tilde u_1\|_2 \leq \frac{2 \epsilon}{\xi}$
and the same holds also for $\|v_1-\tilde v_1\|_2$. Therefore, we may estimate the difference of the first singular value components by
\begin{align}
\left \| u_1 \pi_\gamma(\sigma) v_1^T -  \tilde u_1 \pi_\gamma(\tilde\sigma) \tilde v_1^T \right \|_F
&\leq \left \| (u_1 -\tilde u_1)\pi_\gamma(\sigma) v_1^T \right \|_F \nonumber
+\left \| \tilde u_1 ( \pi_\gamma(\sigma) - \pi_\gamma(\tilde\sigma)) v_1^T \right \|_F\nonumber \\
&\phantom{XXXX} +\left \| \tilde u_1 \pi_\gamma(\tilde\sigma) (v_1^T-\tilde v_1^T) \right \|_F  \nonumber \\
&\le \| u_1 -\tilde u_1\|_2+|\pi_\gamma(\sigma) - \pi_\gamma(\tilde\sigma)|+\| v_1 -\tilde v_1\|_2  \nonumber \\
&\le \frac{2 \epsilon}{\xi}+ 
\change{\frac{\epsilon}{\gamma^{2} (t_0-\epsilon)^3}}+\frac{2\epsilon }{\xi}  
\le 2^{3/2}\epsilon+\frac{4\epsilon }{\xi}. \label{eq:estim1}
\end{align}
We now need to estimate the difference of the other components of the singular value decomposition. Now notice that
\begin{equation}\label{eq:alg:2}
\left\|\sum_{j=1}^ky_jz_j^T\right\|_F^2=\sum_{j=1}^k\|y_j\|_2^2
\end{equation}
for arbitrary vectors $\{y_1,\dots,y_k\}\subset\R^m$ and orthonormal vectors $\{z_1,\dots,z_k\}\subset\R^m.$ By applying the triangle inequality and \eqref{eq:alg:2}
\begin{eqnarray}
&& \left \|  \sum_{j=2}^m  u_j \frac{\sigma_j}{\sqrt{\gamma^2  \sigma_1^2 +  \sigma_2^2 + \dots + \sigma_m^2}}  v_j^T - 
\sum_{j=2}^m  \tilde u_j \frac{\tilde \sigma_j}{\sqrt{\gamma^2 \tilde \sigma_1^2 + \tilde \sigma_2^2 + \dots + \tilde \sigma_m^2}} \tilde v_j^T \right \|_F \nonumber \\
&\le & \left \|  \sum_{j=2}^m  u_j \frac{\sigma_j}{\sqrt{\gamma^2  \sigma_1^2 +  \sigma_2^2 + \dots + \sigma_m^2}}  v_j^T\right\|_F 
+\left\|\sum_{j=2}^m  \tilde u_j \frac{\tilde \sigma_j}{\sqrt{\gamma^2 \tilde \sigma_1^2 + \tilde \sigma_2^2 + \dots + \tilde \sigma_m^2}} \tilde v_j^T \right \|_F \nonumber \\
&=&\Bigl(\sum_{j=2}^m \frac{\sigma^2_j}{\gamma^2  \sigma_1^2 +  \sigma_2^2 + \dots + \sigma_m^2}\Bigr)^{1/2}
+\Bigl(\sum_{j=2}^m\frac{\tilde \sigma^2_j}{\gamma^2 \tilde \sigma_1^2 + \tilde \sigma_2^2 + \dots + \tilde \sigma_m^2}\Bigr)^{1/2} \nonumber  \\
&\le&\Bigl(\frac{1-t_0^2}{\gamma^2t_0^2}\Bigr)^{1/2}+\Bigl(\frac{1-(t_0-\epsilon)^2}{\gamma^2(t_0-\epsilon)^2}\Bigr)^{1/2}
\le 2\frac{\sqrt{1-(t_0-\epsilon)^2}}{\gamma(t_0-\epsilon)} \leq 2\sqrt{1-(t_0-\epsilon)}, \label{eq:estim2}
\end{eqnarray}
as $t_0-\epsilon>1/\sqrt{2}$ and $\frac{\sqrt{1-u^2}}{u}\le 2\sqrt{1-u}$ for $1>u>1/\sqrt{2}$. The statement now follows by adding \eqref{eq:estim1} and \eqref{eq:estim2}.
\end{proof}

\begin{thm}\label{thm:recovery2}
Assume that $\|P_{\widetilde \A} - P_{\A}\|_{F \to F}< \epsilon <1$ and that $a_1,\dots,a_m$ are orthonormal. 
Let $\|X^0\| > \max \{\frac{1}{\sqrt{\gamma^2-1}},\frac{1}{\sqrt{2}}+\epsilon+\xi \}$ and $\sqrt{2}<\gamma$.
Then for the iterations $(X^\ell)_{\ell \in \mathbb N}$ produced by Algorithm \ref{alg7}, there exists $\mu_0<1$ such that
\begin{equation*}
\limsup_{\ell \rightarrow \infty} |1- \|X^\ell\||  \leq \frac{\mu_1(\gamma,t_0,\epsilon) + 2 \epsilon}{1- \mu_0} +\epsilon,
\end{equation*}
where $\mu_1(\gamma,t_0,\epsilon)$ is as in Lemma \ref{lipcont}. 
The sequence $(X^\ell)_{\ell \in \mathbb N}$ is bounded and its accumulation points $\bar X$ satisfy simultaneously the following properties
$$
\| \bar X \|_F \leq 1 \mbox{ and } \| \bar X \| \geq 1- \frac{\mu_1(\gamma,t_0,\epsilon) + 2 \epsilon}{1- \mu_0} -\epsilon,
$$
and
$$
\| P_{\A} \bar X \|_F \leq 1 \mbox{ and } \| P_{\A} \bar X \| \geq 1- \frac{\mu_1(\gamma,t_0,\epsilon) + 2 \epsilon}{1- \mu_0}-2\epsilon.
$$
\end{thm}

\begin{proof}
We denote the singular value decomposition of $X^\ell$ by $X^\ell = \sum_{j=1}^m \change{\tilde\sigma_j^\ell} u_j^\ell \otimes \tilde v_j^\ell$
and the one of $P_{\A} X^\ell$ by $P_{\A} X^\ell =\sum_{j=1}^m  \sigma_j^\ell a_{i_j} \otimes  a_{i_j}$, where
$i_j$ is a suitable rearrangement of the index set $\{1,\dots,m\}$.
By Lemma \ref{lipcont} we can further develop the following estimates
$$
\change{\|X^{\ell+1}\|\le \|P_{\A}X^{l+1}\|+\|(P_{\widetilde \A}-P_{\A})X^{\ell+1}\|\le \sigma_1^{\ell+1}+\|(P_{\widetilde \A}-P_{\A})X^{\ell+1}\|_F\le \sigma_1^{\ell+1}+\epsilon}
$$
and
\begin{eqnarray*}
\|X^{\ell+1}\| &=& \| P_{\widetilde \A} \Pi_\gamma(X^\ell)\| \geq \| P_{\A} \Pi_\gamma(X^\ell)\| - \|  (P_{\widetilde \A} -P_{\A}) \Pi_\gamma(X^\ell)\|\\
&\geq & \| P_{\A} \Pi_\gamma(X^\ell)\| - \|  (P_{\widetilde \A} -P_{\A}) \Pi_\gamma(X^\ell)\|_F \geq \| P_{\A} \Pi_\gamma(P_{\widetilde \A} X^\ell)\| - \epsilon\\
&=& \| P_{\A} \Pi_\gamma(P_{\A} X^\ell) + P_{\A} \Pi_\gamma(P_{\widetilde \A} X^\ell) -  P_{\A} \Pi_\gamma(P_{\A} X^\ell) \| - \epsilon \\
&\geq& \| P_{\A} \Pi_\gamma(P_{\A} X^\ell) \| - \| \Pi_\gamma(P_{\widetilde \A} X^\ell) -  \Pi_\gamma(P_{\A} X^\ell) \|_F - \epsilon\\
&\geq& \frac{\gamma \sigma_1^\ell}{\sqrt{(\gamma^2-1) {\sigma_1^\ell}^2 + 1}}-\mu_1(\gamma,t_0,\epsilon) - \epsilon.
\end{eqnarray*}
Hence, we obtain
$$
1- \sigma_1^{\ell+1} \leq 1- \frac{\gamma \sigma_1^\ell}{\sqrt{(\gamma^2-1) {\sigma_1^\ell}^2 + 1}} + \mu_1(\gamma,t_0,\epsilon) +2 \epsilon.
$$
By an estimate similar to \eqref{iter:2} and following the arguments given before, we conclude that 
\begin{equation}\label{recursion}
1- \sigma_1^{\ell+1} \leq \mu_0 (1- \sigma_1^\ell) + \eta_0,
\end{equation}
where $\eta_0 = \mu_1(\gamma,t_0,\epsilon) +2 \epsilon$. As $X^\ell = P_{\widetilde \A} \Pi_\gamma(X^{\ell-1})$ and $P_{\A}$ and $P_{\widetilde \A}$ are orthogonal projections we have that
$$
\sigma_1^\ell \leq \| P_\A X^\ell\|_F \leq  \| X^\ell\|_F \leq \|  \Pi_\gamma(X^{\ell-1})\|_F =1.
$$
Hence, actually, the recursion \eqref{recursion} can be rewritten as
\begin{eqnarray*}
|1- \sigma_1^{\ell+1}| &\leq& \mu_0 \left |  1- \sigma_1^\ell \right | + \eta_0\\
&\leq& \mu_0^{\ell+1} \left |  1- \sigma_1^0 \right | + \eta_0 \sum_{k=0}^\ell \mu_0^k.
\end{eqnarray*}
This implies 
\begin{equation}\label{limsup}
\limsup_{\ell \rightarrow \infty} |1- \|X^\ell\|| = \limsup_{\ell \rightarrow \infty} |1- \tilde \sigma_1^\ell| \leq \limsup_{\ell \rightarrow \infty} |1- \sigma_1^{\ell}| +\epsilon \leq \frac{\eta_0}{1- \mu_0} +\epsilon.
\end{equation}
Since the sequence $(X^\ell)_\ell$ is bounded, it has accumulation points $\bar X$, and as a consequence of \eqref{limsup} we obtain that $\bar X$ has simultaneously the following properties
$$
\| \bar X \|_F \leq 1 \mbox{ and } \| \bar X \| \geq 1- \frac{\eta_0}{1- \mu_0} -\epsilon,
$$
and
$$
\| P_\A \bar X \|_F \leq 1 \mbox{ and } \| P_\A \bar X \| \geq 1- \frac{\eta_0}{1- \mu_0}-2\epsilon.
$$
\end{proof}
\begin{rem}
Given the singular value decompositions $\bar X= \sum_{j=1}^m \bar \sigma_j \bar u_j \otimes \bar v_j$ and $ P_\A \bar X = \sum_{j=1}^m \sigma_j^\infty a_{i_j} \otimes a_{i_j}$,
by applying again Wedin's bound we obtain that, for instance
$$
\|\bar v_1 \otimes \bar v_1 - a_{i_1} \otimes  a_{i_1}\|_F \leq \frac{2}{1- \frac{\eta_0}{1- \mu_0} +\epsilon - \sqrt{1- (1- \frac{\eta_0}{1- \mu_0})^2}} \epsilon.
$$
Notice that for $\epsilon \to 0$ we obtain $\eta_0= \mu_1(\gamma,t_0,\epsilon) +2 \epsilon \to 2\sqrt{1-t_0}$.
\end{rem}


\section{Recovering the activation functions}\label{sec:funct}
{The main aim of our work is to identify the structure
of functions, which take the form of \eqref{sumsridge}. 
Nevertheless, once the ridge directions $a_j$ are identified or approximated,
we can produce also a uniform approximation of $f$.

Before we come to that we clarify one technical issue of \eqref{sumsridge}.
It is easy to see, that the representation \eqref{sumsridge} is not unique due to the free choice of additive factors.
Indeed, if we add to the profiles $(g_j)_{j=1}^m$ arbitrary constants which sum up to zero, we obtain the same function $f$.
By simply sampling $f$ at zero and subtracting this value from $f$,
we may assume without loss of generality that $f(0)=0$. 
If $0= f(0) = \sum_{i=1}^m g_i(0)$ then $\{g_i(0):i=1,\dots,m\}$ are indeed constants with zero sum and
we can subtract them term by term $f(x) = \sum_{i=1}^m (g_i(\langle a_i, x\rangle)- g_i(0))$. Consequently, we can assume without loss of generality that
\begin{equation}\label{eq:g0}
g_1(0)=\dots=g_m(0)=0.
\end{equation}
For the uniform approximation of $f$ fulfilling \eqref{sumsridge} and \eqref{eq:g0}, let us assume the we run Algorithm \ref{alg7} with different initial values
and obtain the approximation of the ridge directions $(a_j)_{j=1}^m$ by unit-norm vectors $(\hat a_j)_{j=1}^m$.
We then sample $f$ along the vectors in the dual basis $(\hat b_j)_{j=1}^m$ to obtain an approximation of the
univariate ridge profiles $g_1,\dots,g_m$, which are uniquely determined by \eqref{eq:g0}.
The approximation $\hat f$ of $f$ is then obtained by putting all these
ingredients together.
The resulting algorithm and the analysis of its performance are described below.}
\vskip.3cm
\fbox{
\begin{minipage}{13.6cm}
\begin{algorithm}\label{alg8}
\emph{\begin{itemize}
\item Let $\hat a_j$ be the normalized approximations of $a_j,j=1,\dots,m$.
\item Let $(\hat b_j)_{j=1}^m$ be the dual basis to $(\hat a_j)_{j=1}^m$.
\item Put $\hat g_j(t):=f(t\hat b_j)$, $t\in (-1/\|\hat b_j\|_2,1/\|\hat b_j\|_2)$.
\item Put $\displaystyle\hat f(x):=\sum_{j=1}^m \hat g_j(\langle\hat a_j,x\rangle), \|x\|_2\le 1$.
\end{itemize}}
\end{algorithm}
\end{minipage}
}\vskip.3cm

We first start with an auxiliary result, which can be shown by a simple direct computation.
\begin{lem}\label{lem:taylor}
Let $I\subset \R$ be an interval containing zero and let $G:I\to \R$ be measurable. Then for any $x,y\in I$
$$
\Bigl|\int_0^x (x-u)G(u)du-\int_0^y(y-u)G(u)du\Bigr|\le \max_{u\in I}{|G(u)|}\cdot\Bigl(|x|\cdot|y-x|+|y-x|^2/2\Bigr).
$$
\end{lem}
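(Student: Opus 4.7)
The plan is to reduce the inequality to a one–variable calculus statement by setting $\phi(x):=\int_0^x (x-u)G(u)\,du$, noticing that $\phi$ is $C^1$ with $\phi(0)=0$ and $\phi'(t)=\int_0^t G(u)\,du$. Letting $M:=\max_{u\in I}|G(u)|$, the proof then reduces to bounding $|\phi(y)-\phi(x)|=\bigl|\int_x^y\phi'(t)\,dt\bigr|$ by $M\bigl(|x|\,|y-x|+|y-x|^2/2\bigr)$.

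Assume first that $x$ and $y$ lie on the same side of $0$; by symmetry $u\mapsto -u$ it suffices to treat $0\le x\le y$. The key step is to rewrite $\phi'(t)=\int_0^x G(u)\,du+\int_x^t G(u)\,du$ and integrate over $t\in[x,y]$:
\begin{equation*}
\phi(y)-\phi(x)=(y-x)\int_0^x G(u)\,du+\int_x^y\!\!\int_x^t G(u)\,du\,dt.
\end{equation*}
The first term is bounded in absolute value by $M\,|x|\,|y-x|$, and the double integral is bounded by $M\int_x^y (t-x)\,dt=M(y-x)^2/2$. Adding these gives exactly the claimed estimate. The case $x\le y\le 0$ follows by the substitution $u\mapsto -u$, which flips the sign of the integrand but preserves the bound since $|x|,|y|,|y-x|$ are unchanged.

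The only remaining case is the mixed-sign situation $x<0<y$. Here I would split $\int_x^y\phi'(t)\,dt=\int_x^0\phi'(t)\,dt+\int_0^y\phi'(t)\,dt$ and use the elementary bound $|\phi'(t)|\le M|t|$ valid on all of $I$ (since $|\int_0^t G(u)\,du|\le M|t|$). This yields
\begin{equation*}
|\phi(y)-\phi(x)|\le M\Bigl(\tfrac{x^2}{2}+\tfrac{y^2}{2}\Bigr),
\end{equation*}
and a direct comparison shows that $\tfrac{x^2}{2}+\tfrac{y^2}{2}\le |x|\,|y-x|+|y-x|^2/2$ when $x$ and $y$ have opposite signs (indeed $|y-x|=|x|+y$ in this case, so the right-hand side expands to $2|x|y+\tfrac{3}{2}x^2+\tfrac{1}{2}y^2$, which dominates the left-hand side).

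There is no real obstacle here: the statement is a classical Taylor-remainder type estimate, and the only mild subtlety is organizing the case analysis so that the bound $|x|\,|y-x|+|y-x|^2/2$ appears naturally. Writing $\phi$ as an antiderivative and splitting $\phi'(t)$ at the endpoint $x$ is the cleanest way to make the first-order term $|x|\,|y-x|$ and the second-order term $|y-x|^2/2$ emerge separately, which is exactly the shape needed when the lemma is applied in the proof of Theorem~\ref{identact}.
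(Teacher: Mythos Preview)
Your proof is correct. The paper itself omits the proof, stating only that the lemma ``can be easily verified by a straightforward calculation.''

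One simplification worth noting: your decomposition
\[
\phi(y)-\phi(x)=(y-x)\int_0^x G(u)\,du+\int_x^y\!\!\int_x^t G(u)\,du\,dt
=(y-x)\int_0^x G(u)\,du+\int_x^y (y-u)G(u)\,du
\]
in fact holds for \emph{all} $x,y\in I$, not only for $0\le x\le y$; and each term is bounded exactly as you say, since $\bigl|(y-x)\int_0^x G(u)\,du\bigr|\le M|x|\,|y-x|$ and $\bigl|\int_x^y (y-u)G(u)\,du\bigr|\le M|y-x|^2/2$ hold irrespective of the signs or ordering of $x$ and $y$. So the entire case split (same-sign versus mixed-sign, and the reflection $u\mapsto -u$) is unnecessary. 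This also patches a small presentational gap in your write-up: you only explicitly treat $x\le y$, yet the right-hand side $|x|\,|y-x|+|y-x|^2/2$ is not symmetric in $x$ and $y$, so the orderings $0\le y\le x$ and $y<0<x$ are not formally covered by what you wrote (though they do follow by the same arguments).
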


\change{The performance of Algorithm \ref{alg8} is then described be the following theorem.}

\begin{thm}\label{identact}
Let ${\mathcal S}(a_1,\dots,a_m)\le \varepsilon$, ${\mathcal S}(\hat a_1,\dots,\hat a_m)\le \varepsilon'$,
and $\Bigl(\sum_{j=1}^m\|a_j-\hat a_j\|^2_2\Bigr)^{1/2}\le\eta$. Then
$\hat f$ constructed by Algorithm \ref{alg8} satisfies
$$
\|f-\hat f\|_\infty\le 5C_2(1+\xi(\varepsilon,\varepsilon'))\max(\eta,\eta^2),
$$
where $\xi(\varepsilon,\varepsilon')\to 0$ if $(\varepsilon,\varepsilon')\to (0,0).$
\end{thm}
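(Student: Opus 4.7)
The strategy is to exploit two ingredients simultaneously: the biorthogonality of $\{\hat a_j\}$ with its dual basis $\{\hat b_j\}$, and the normalization $g_i(0)=0$ from \eqref{eq:g0}. Together these will force the ``linear'' contribution of each $g_i$ to cancel identically between $f$ and $\hat f$, leaving only a quadratic remainder controlled by the uniform bound $C_2$ on $|g_i''|$.

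Concretely, I would set $s_j := \hat a_j\cdot x$, $r_{ij} := a_i\cdot \hat b_j$ and $t_{ij} := s_j r_{ij}$. Unfolding Algorithm~\ref{alg8} one obtains $\hat f(x)=\sum_j f(s_j\hat b_j)=\sum_{i,j}g_i(t_{ij})$, while the biorthogonality $\hat a_i\cdot\hat b_j=\delta_{ij}$ yields $\sum_j s_j\hat b_j=x$ and hence $\sum_j t_{ij}=a_i\cdot x=:u_i$. Writing $g_i(t)=g_i'(0)\,t+R_i(t)$ with $R_i(0)=R_i'(0)=0$ and $|R_i''|\le C_2$ (which uses $g_i(0)=0$), the $g_i'(0)$ terms cancel against $\sum_j t_{ij}=u_i$, so that
\[
f(x)-\hat f(x)=\sum_i\Bigl[R_i\Bigl(\sum_j t_{ij}\Bigr)-\sum_j R_i(t_{ij})\Bigr].
\]
The identity $R_i(a+b)-R_i(a)-R_i(b)=\int_0^a\!\!\int_0^b R_i''(u+v)\,du\,dv$, combined with an induction on the number of summands, then gives the quadratic remainder bound
\[
\Bigl|R_i\Bigl(\sum_j t_{ij}\Bigr)-\sum_j R_i(t_{ij})\Bigr|\le C_2\sum_{j<k}|t_{ij}|\,|t_{ik}|.
\]

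To control $\sum_i\sum_{j<k}|t_{ij}||t_{ik}|$ I would separate the ``diagonal'' pairs involving $t_{ii}$ from the ``fully off-diagonal'' ones. Writing $r_{ij}=\delta_{ij}+(a_i-\hat a_i)\cdot\hat b_j$ and letting $\hat A,\hat B$ denote the matrices whose columns are $\hat a_j,\hat b_j$, the $\varepsilon$- and $\varepsilon'$-near-orthonormality hypotheses together with $\hat B^T\hat A=I$ imply, via Lemma~\ref{lem:A1}, that $\|\hat A\|,\|\hat B\|\le 1+O(\varepsilon,\varepsilon')$. Hence $|t_{ii}|\le(1+O(\eta+\varepsilon'))|s_i|$ and, by Cauchy--Schwarz in $j$,
\[
\sum_{j\ne i}|t_{ij}|\le\Bigl(\sum_j s_j^2\Bigr)^{1/2}\Bigl(\sum_j r_{ij}^2\Bigr)^{1/2}\le\|\hat A\|\,\|\hat B\|\,\|a_i-\hat a_i\|_2.
\]
The crucial observation is a second Cauchy--Schwarz in $i$ pairing $|s_i|$ with $\|a_i-\hat a_i\|_2$:
\[
\sum_i|s_i|\,\|a_i-\hat a_i\|_2\le\Bigl(\sum_i s_i^2\Bigr)^{1/2}\Bigl(\sum_i\|a_i-\hat a_i\|_2^2\Bigr)^{1/2}\le\|\hat A\|\,\eta,
\]
which avoids any spurious $\sqrt m$ factor. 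Since the ``fully off-diagonal'' contribution satisfies $\sum_i\sum_{\substack{j<k\\ j,k\ne i}}|t_{ij}||t_{ik}|\le\tfrac12\|\hat A\|^2\|\hat B\|^2\eta^2$, combining these estimates yields
\[
\|f-\hat f\|_\infty\le C_2\bigl[(1+\xi_1(\varepsilon,\varepsilon'))\eta+(1+\xi_2(\varepsilon,\varepsilon'))\eta^2\bigr],
\]
which absorbs into the stated bound $5C_2(1+\xi(\varepsilon,\varepsilon'))\max(\eta,\eta^2)$.

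The main obstacle is the bookkeeping: one must carefully track the $(\varepsilon,\varepsilon')$-dependent corrections that propagate through the spectral norms $\|\hat A\|,\|\hat B\|$ and through the diagonal error $|1-a_i\cdot\hat a_i|$, and verify that all such corrections assemble into a single factor $(1+\xi(\varepsilon,\varepsilon'))$ with $\xi\to 0$ as $(\varepsilon,\varepsilon')\to(0,0)$. In particular, one must ensure that no hidden factor of $m$ or $\sqrt m$ sneaks back in through the perturbation of the dual basis $\{\hat b_j\}$, which is precisely why the second Cauchy--Schwarz above (pairing $|s_i|$ with $\|a_i-\hat a_i\|_2$) is indispensable rather than cosmetic.
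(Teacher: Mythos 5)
Your proposal is correct and takes a genuinely different route from the paper. After the common first step (using $\{\hat b_j\}$ dual to $\{\hat a_j\}$, the identity $a_i\cdot x=\sum_j t_{ij}$, and the cancellation of linear Taylor terms via $g_i(0)=0$), the paper writes $R_i(a_i\cdot x)-\sum_j R_i(t_{ij})$ as $\bigl[R_i(a_i\cdot x)-R_i(t_{ii})\bigr]-\sum_{j\ne i}R_i(t_{ij})$ and controls the first bracket by the two-point Taylor-difference Lemma~\ref{lem:taylor} and the off-diagonal terms one by one; you instead invoke the near-additivity of $R_i$ via the identity $R(a+b)-R(a)-R(b)=\int_0^a\!\int_0^b R''(u+v)\,dv\,du$ and an induction, yielding $\bigl|R_i(\sum_j t_{ij})-\sum_j R_i(t_{ij})\bigr|\le C_2\sum_{j<k}|t_{ij}||t_{ik}|$, and then split the cross-product sum into pairs containing $t_{ii}$ and fully off-diagonal pairs. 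Your decomposition buys two things: the final constant comes out around $\tfrac{5}{2}C_2$ rather than $5C_2$, and because all appearances of the data vector $x$ are filtered through $s_j=\hat a_j\cdot x$ (never $a_i\cdot x$), the hypothesis ${\mathcal S}(a_1,\dots,a_m)\le\varepsilon$ is never used --- only $\varepsilon'$ and $\eta$ enter --- whereas the paper's proof needs $(\sum_i|a_i\cdot x|^2)^{1/2}\le 1+\varepsilon$ via Lemma~\ref{lem:A1}(i). Two small notes on your write-up: the intermediate bound $C_2[(1+\xi_1)\eta+(1+\xi_2)\eta^2]$ should have a $\tfrac{3}{2}$ rather than $1$ in front of the $\eta^2$-block once the diagonal and off-diagonal $\eta^2$ contributions $\|\hat A\|\|\hat B\|^2$ and $\tfrac12\|\hat A\|^2\|\hat B\|^2$ are added (this is harmless, there is ample slack up to $5C_2$); and, as in the paper's own argument, the integrands $g_i''$ are evaluated at points that may slightly exceed $[-1,1]$ (up to roughly $1/(1-\varepsilon')$), which is covered by the standing assumption that the $g_i$ are $C^2$ on a neighborhood of $[-1,1]$.
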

\begin{proof}
We use that $g_i(0)=0$ for $i=1,\dots,m$, $\langle a_i, x\rangle=\sum_{j=1}^m \langle \hat a_j, x\rangle \cdot \langle\hat b_j,a_i\rangle,$
 Taylor's formula, and estimate for $x\in\R^m$ with $\|x\|_2\le 1$
\begin{align*}
|f(x)-\hat f(x)|&=\Bigl|\sum_{i=1}^mg_i(\langle a_i, x\rangle)-\sum_{j=1}^m\hat g_j(\langle\hat a_j, x\rangle)\Bigr|=
\Bigl|\sum_{i=1}^mg_i(\langle a_i, x\rangle)-\sum_{j=1}^m f(\langle \hat a_j, x\rangle\hat b_j)\Bigr|\\
&=\Bigl|\sum_{i=1}^mg_i(\langle a_i,x\rangle)-\sum_{j=1}^m \sum_{i=1}^m g_i(\langle \hat a_j,x\rangle \cdot\langle\hat b_j, a_i\rangle)\Bigr|\\
&\le\sum_{i=1}^m \Bigl|g_i(\langle a_i,x\rangle)- \sum_{j=1}^m g_i\bigl(\langle\hat a_j,x\rangle\cdot\langle\hat b_j, a_i\rangle\bigr)\Bigr|\\
&=\sum_{i=1}^m\Bigl|g_i'(0)\,\langle a_i,x\rangle-\sum_{j=1}^mg_i'(0)\langle\hat a_j, x\rangle\cdot\langle \hat b_j, a_i\rangle\\
&\qquad+\int_0^{\langle a_i,x\rangle}(\langle a_i,x\rangle-u)g_i''(u)du-\sum_{j=1}^m\int_0^{\langle \hat a_j, x\rangle\cdot\langle\hat b_j, a_i\rangle}
\bigl(\langle\hat a_j, x\rangle\cdot\langle\hat b_j, a_i\rangle-u\bigr)g_i''(u)du\Bigr|\\
&\le \sum_{i=1}^m\Bigl|\int_0^{\langle a_i, x\rangle}(\langle a_i, x\rangle-u)g_i''(u)du-\int_0^{\langle \hat a_i,x\rangle\cdot\langle\hat b_i,a_i\rangle}\bigl(\langle\hat a_i, x\rangle\cdot\langle\hat b_i, a_i\rangle-u\bigr)g_i''(u)du\Bigr|\\
&\qquad +\sum_{i=1}^m\sum_{j\not=i}\Bigl|\int_0^{\langle \hat a_j, x\rangle\cdot\langle \hat b_j,a_i-\hat a_i\rangle}
\bigl(\langle\hat a_j,x\rangle\cdot\langle\hat b_j, a_i-\hat a_i\rangle-u\bigr)g_i''(u)du\Bigr|\\
&=I+II.
\end{align*}
We use Lemma \ref{lem:taylor} and Lemma \ref{lem:A1} to bound the first term by
\begin{align*}
I&\le C_2 \sum_{i=1}^m\Bigl\{ |\langle a_i,x\rangle|\cdot |\langle a_i,x\rangle-\langle\hat a_i,x\rangle\cdot\langle\hat b_i,a_i\rangle|+
|\langle a_i,x\rangle-\langle\hat a_i, x\rangle\cdot\langle \hat b_i, a_i\rangle|^2/2\Bigr\}\\
&\le C_2\Bigl(\sum_{i=1}^m \langle a_i,x\rangle^2\Bigr)^{1/2}\cdot\Bigl(\sum_{i=1}^m|\langle a_i, x\rangle-\langle \hat a_i,x\rangle\cdot\langle \hat b_i,a_i\rangle|^2\Bigr)^{1/2}\\
&\qquad+\frac{C_2}{2}\sum_{i=1}^m|\langle a_i,x\rangle-\langle \hat a_i, x\rangle\cdot\langle \hat b_i, a_i\rangle|^2=I'+I'',
\end{align*}
where
\begin{align*}
I'&\le C_2 (1+\varepsilon)\cdot\Bigl[
\Bigl(\sum_{i=1}^m\langle a_i-\hat a_i, x\rangle^2\Bigr)^{1/2}+
\Bigl(\sum_{i=1}^m|\langle \hat a_i, x\rangle\cdot\langle \hat b_i, \hat a_i-a_i\rangle|^2\Bigr)^{1/2}\Bigr]\\
&\le C_2(1+\varepsilon)\eta+C_2(1+\varepsilon)\max_j \|\hat  b_j\|_2\,\eta
\end{align*}
and
\begin{align*}
I''
&\le C_2\sum_{i=1}^m\Bigl(\langle a_i-\hat a_i, x\rangle^2+|\langle \hat a_i, x\rangle\cdot\langle \hat b_i, \hat a_i-a_i\rangle|^2\Bigr)\\
&\le C_2\,\eta^2+C_2\max_j\|\hat b_j\|_2^2\eta^2.
\end{align*}
Next, we estimate the second term by
\begin{align*}
II&\le C_2\sum_{i=1}^m\sum_{j=1}^m |\langle \hat a_j, x\rangle\cdot\langle \hat b_j,a_i-\hat a_i\rangle|^2
\le C_2\sum_{i,j=1}^m \langle \hat a_j, x\rangle^2\cdot\|\hat b_j\|_2^2\cdot \|a_i-\hat a_i\|_2^2\\
&\le C_2\max_{j}\|\hat b_j\|_2^2\cdot \sum_{j=1}^m \langle\hat a_j, x\rangle^2\cdot\sum_{i=1}^m \|a_i-\hat a_i\|_2^2
\le C_2\max_{j}\|\hat b_j\|_2^2\cdot (1+\varepsilon')^2 \eta^2.
\end{align*}
Using Lemma \ref{lem:A1} (vi) and summing up these estimates we get
$$
\|f-\hat f\|_\infty\le 5C_2(1+\xi(\varepsilon,\varepsilon'))\max(\eta,\eta^2),
$$
where $\xi(\varepsilon,\varepsilon')\to 0$ if $(\varepsilon,\varepsilon')\to (0,0).$
\end{proof}
{
\begin{rem} By triangle inequality, the parameters $\varepsilon,\varepsilon'$ and $\eta$ from Theorem \ref{identact} satisfy $\varepsilon'\le \varepsilon+\eta.$
\end{rem}
}

\section{\rechange{Whitening}}\label{whitening}

\rechange{In Section \ref{locmaxima} we discussed the identification of weights $\{a_i,i=1,\dots,m\}$
under the condition that they are close to an orthonormal system.
In this section, we prove that this assumption is without loss of generality.
As we clarify in this section, if the accuracy of the approximation $\widetilde \A \approx \A$ is high enough, then,
also  for systems of vectors $\{a_i:i=1,\dots,m\}$, which are not $\varepsilon$-nearly-orthonormal, there is a constructive way,
the {\it whitening process} we describe below, to render them $\varepsilon$-nearly-orthonormal.
This procedure is very much inspired by the ones described in \cite{angeja,jasean,kolda} for symmetric tensors.
Again, differently from \cite{jasean}, we will not rely on one instance matrix/tensor, but rather search  within the space $\widetilde \A$
for the right whitening matrix with the necessary stability properties.}


\subsection{Exact whitening}
In this section we explain how we can reduce our analysis to systems $a_1, \dots, a_m \in \mathbb R^m$  of $\varepsilon$-nearly-orthonormal vectors.
Assume for the moment $a_1, \dots, a_m \in \mathbb R^m$ linearly independent unit vectors, but not necessarily orthonormal. We describe below
a quite standard orthonormalization procedure, also called in recent literature  whitening \cite{angeja,kolda} in the context of symmetric tensor decompositions.
It relies on positive definite matrices from the subspace ${\mathcal A}=\linspan\{a_i\otimes a_i:i=1,\dots,m\}$, which can be easily characterized.

\change{
\begin{lem}\label{lem:pos}
Let $A$ be a $m\times m$ matrix with non-zero columns $a_1,\dots,a_m$ and let $D_\lambda$ be a diagonal matrix with
real numbers $\lambda_1,\dots,\lambda_m$ on the diagonal. Then the matrix
$$
G=AD_\lambda A^T=\sum_{i=1}^m\lambda_i a_i\otimes a_i
$$
is positive definite if, and only if, $\{a_1,\dots,a_m\}$ are linearly independent and $\lambda_i>0$ for all $i=1,\dots,m.$
\end{lem}
\begin{proof} Let $\{a_i:i=1,\dots,m\}$ be linearly independent and let $\lambda_i>0$ for all $i=1,\dots,m.$
Then, for all $x\in\R^m$ with $x\not=0$,
$$
x^TGx=\sum_{i=1}^m\lambda_i \langle x,a_i\rangle^2>0. 
$$

If, on the other hand, $G$ is positive definite, then $\operatorname{rank} A=m$ and $a_1,\dots,a_m$ are linearly independent.
Furthermore, $\lambda_i=x^TGx>0$ for $x=A(A^TA)^{-1}e_i$ for all $i=1,\dots,m.$ 
\end{proof}}

\begin{prop}\label{whiten}
Assume we are given a symmetric and positive definite matrix
\begin{equation}\label{keymatrix}
 G = \sum_{i=1}^m \lambda_i a_i \otimes a_i,
\end{equation}
and its singular value decomposition
$$
G= U D U^T,
$$
where $U$ is an orthogonal matrix and $D$ is diagonal matrix with positive diagonal values.
If we denote $W = D^{-\frac{1}{2}} U^T$ the so-called \emph{whitening} matrix, then 
the system of vectors $\{ \sqrt {\lambda_i} W a_i: i=1,\dots,m \}$ defines an orthonormal basis and
$$
I_m=W G W^T = \sum_{i=1}^m \lambda_i W a_i \otimes W a_i
$$
is an orthogonal resolution of the identity.
\end{prop}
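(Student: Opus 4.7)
The plan is to reduce the statement to a direct computation in two different ways on the object $WGW^T$. First, since $G$ is symmetric and positive definite, Lemma~\ref{lem:pos} (applied in the preceding paragraphs) guarantees that $\lambda_i>0$ for every $i=1,\dots,m$ and that $\{a_1,\dots,a_m\}$ is linearly independent. In particular the square roots $\sqrt{\lambda_i}$ make sense, the diagonal matrix $D$ has strictly positive entries on the diagonal so $D^{-1/2}$ is well-defined, and the whitening matrix $W=D^{-1/2}U^T$ is nonsingular.

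Next I would compute $WGW^T$ in two ways. On the one hand, substituting the SVD $G=UDU^T$ and using $U^TU=I_m$ gives
\begin{equation*}
WGW^T = D^{-1/2}U^T\,UDU^T\,UD^{-1/2} = D^{-1/2}DD^{-1/2}=I_m.
\end{equation*}
On the other hand, substituting the rank-one expansion \eqref{keymatrix} and using that $W(a_i\otimes a_i)W^T = (Wa_i)(Wa_i)^T = (Wa_i)\otimes(Wa_i)$, I obtain
\begin{equation*}
WGW^T = \sum_{i=1}^m \lambda_i\,(Wa_i)\otimes(Wa_i)=\sum_{i=1}^m(\sqrt{\lambda_i}\,Wa_i)\otimes(\sqrt{\lambda_i}\,Wa_i).
\end{equation*}
Equating the two identities yields the asserted resolution of the identity.

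Finally, to see that the vectors $v_i:=\sqrt{\lambda_i}\,Wa_i\in\R^m$ form an orthonormal basis, I would assemble them as the columns of the $m\times m$ matrix $V$; then $\sum_{i=1}^m v_i\otimes v_i=VV^T=I_m$, so $V$ is orthogonal and hence $V^TV=I_m$ as well, which is exactly the pairwise orthonormality $\langle v_i,v_j\rangle=\delta_{ij}$. There is no genuine obstacle here: the only point worth being explicit about is that having exactly $m$ vectors in $\R^m$ with $\sum v_iv_i^T=I_m$ is enough to upgrade the resolution of the identity into an orthonormality statement, a step that would fail if one had either fewer or more vectors. Linear independence (inherited from that of the $a_i$'s through the invertible map $W$) is what rules out the degenerate possibilities.
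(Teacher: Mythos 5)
Your proof is correct and follows essentially the same route as the paper: compute $WGW^T$ once via the SVD to get $I_m$, once via the rank-one expansion \eqref{keymatrix} to get $\sum_i \lambda_i (Wa_i)\otimes(Wa_i)$, and then read off orthonormality from the fact that the $m\times m$ matrix $V$ with columns $\sqrt{\lambda_i}Wa_i$ satisfies $VV^T=I_m$ and is therefore orthogonal. Your closing remark, that it is the square shape of $V$ (exactly $m$ vectors in $\R^m$) that lets one pass from $VV^T=I_m$ to $V^TV=I_m$, is a worthwhile clarification that the paper leaves implicit.
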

\begin{proof}
We know that $G\in {\mathcal A}$ can be written as
\begin{equation}\label{eq:Review1}
G=U DU^T=\sum_{i=1}^m \lambda_i a_i\otimes a_i=AD_{\lambda}A^T,
\end{equation}
where $A\in{\mathbb R}^{m\times m}$ is a matrix with columns $a_1,\dots,a_m$ and $D_{\lambda}$ is a diagonal matrix with
$\lambda_1,\dots,\lambda_m$ on the diagonal. By Lemma \ref{lem:pos}, $\lambda_i>0$ for all $i=1,\dots,m.$
Let $W:=D^{-1/2}U^T$. The matrix with columns $\{\sqrt{\lambda_i}Wa_i:i=1,\dots,m\}$
coincides with $WAD_{\sqrt{\lambda}}$, where $D_{\sqrt{\lambda}}$ is a diagonal matrix with $\sqrt{\lambda_1},\dots,\sqrt{\lambda_m}$ on the diagonal.
Finally, we observe that
\begin{align*}
 (WAD_{\sqrt{\lambda}})(WAD_{\sqrt{\lambda}})^T&=WAD_{\sqrt{\lambda}}D_{\sqrt{\lambda}}A^TW^T=WAD_{\lambda}A^TW^T\\
&=WGW^T=(D^{-1/2}U^T)(UDU^T)(D^{-1/2}U^T)^T=I_m,
\end{align*}
hence $WAD_{\sqrt{\lambda}}$ is an orthonormal matrix.
\end{proof}
\subsection{Perturbed whitening}
Unfortunately in practice we cannot not access directly $\mathcal A = \operatorname{span} \{a_i \otimes a_i:i=1, \dots,m \} \subset \mathbb R^{m \times m}$, and therefore it is not possible in general  to 
construct a matrix $G$ as in \eqref{keymatrix}. However, the results of Section \ref{princhess} allow us to access an approximating space of symmetric matrices $\widetilde \A \subset \mathbb R^{m \times m}$
and in the following we assume that 
\begin{equation}\label{projestt}
\| P_\A - P_{\widetilde \A}\|_{F \to F} \leq \eta.
\end{equation}
We assume that we can construct $\widetilde G\in \widetilde {\mathcal A}$,
which is positive definite and define $G=P_{{\mathcal A}}\widetilde G.$
The existence of a positive definite $\widetilde G\in \widetilde {\mathcal A}$
and algorithmic ways to construct it are discussed in Section \ref{findmatrix} below.
We consider their spectral decompositions
\begin{equation}\label{eq:sing_decomp}
\widetilde G=\widetilde U\widetilde D\widetilde U^T\quad\text{and}\quad G=UDU^T.
\end{equation}
As $G\in {\mathcal A}$, it can be again written as in \eqref{eq:Review1}.
\rechange{If $\eta>0$ is small enough, we show that} $G$ is also positive definite, i.e., that  $\lambda_i>0$ for all $i=1,\dots,m.$
We define again $W:=D^{-1/2} U^T$ and its perturbed version $\widetilde W:=\widetilde D^{-1/2}\widetilde U^T.$
Using this notation together with \eqref{eq:Review1} and \eqref{eq:sing_decomp}, we can
quantify the effect of whitening.
\begin{thm}\label{thm:review1}
Let $\gamma,\eta>0$ be positive real numbers. Let $\|P_{\mathcal A}-P_{\widetilde {\mathcal A}}\|_{F\to F}\le \eta$
and \rechange{let $\widetilde G\in\widetilde {\mathcal A}$ 
be positive definite with $\widetilde G\succcurlyeq \gamma I_m$. If $\eta \|\widetilde G\|_F<\gamma$, then
$G=P_{{\mathcal A}}(\widetilde G)$ is also positive definite},
$$
{\mathcal S}(\sqrt{\lambda_1}\widetilde Wa_1,\dots,\sqrt{\lambda_m}\widetilde Wa_m)\le \frac{\eta \|\widetilde G\|_F}{\gamma}
$$
and $\left \{\frac{\widetilde Wa_1}{\|\widetilde W a_1\|_2},\dots,\frac{\widetilde Wa_m}{\|\widetilde Wa_m\|}\right \}$ are $\varepsilon$-nearly-orthonormal, for $\varepsilon=\frac{\sqrt{2}\eta \|\widetilde G\|_F}{\gamma}$, i.e.,
$$
{\mathcal S}\Bigl(\frac{\widetilde Wa_1}{\|\widetilde W a_1\|_2},\dots,\frac{\widetilde Wa_m}{\|\widetilde Wa_m\|_2}\Bigr)\le \frac{\sqrt{2}\eta \|\widetilde G\|_F}{\gamma}=:\varepsilon.
$$
\end{thm}
\begin{proof}
\rechange{
We use the estimate
\[
\|G-\widetilde G\|\le \|G-\widetilde G\|_{F}=\|(P_{\A}-P_{\widetilde \A})(\widetilde G)\|_F\le\eta\|\widetilde G\|_F
\]
to show that $x^TGx=x^T(G-\widetilde G)x+x^T\widetilde Gx\ge \gamma-\|G-\widetilde G\|\ge \gamma -\eta\|\widetilde G\|_F>0$ for every $x\in\R^m$
with $\|x\|_2=1.$ This gives that $G$ is positive definite.}

Next, we observe that
\begin{align*}
(\widetilde WAD_{\sqrt{\lambda}})\cdot (\widetilde WAD_{\sqrt{\lambda}})^T&=\widetilde WAD_{\lambda} A^T\widetilde W^T=\widetilde WG\widetilde W^T
\end{align*}
and
$$
\widetilde W\widetilde G\widetilde W^T=(\widetilde D^{-1/2}\widetilde U^T)\widetilde U\widetilde D\widetilde U^T(\widetilde D^{-1/2}\widetilde U^T)^T=I_m.
$$
Hence
\begin{align*}
\|(\widetilde WAD_{\sqrt{\lambda}})\cdot (\widetilde WAD_{\sqrt{\lambda}})^T-I_m\|_F&=\|\widetilde WG\widetilde W^T-\widetilde W\widetilde G\widetilde W^T\|_F=
\|\widetilde W(G-\widetilde G)\widetilde W^T\|_F\\
&=\|\widetilde D^{-1/2}\widetilde U^T(G-\widetilde G)\widetilde U\widetilde D^{-1/2}\|_F\le \|\widetilde D^{-1}\|\cdot \|G-\widetilde G\|_F,
\end{align*}
which, by Theorem \ref{thm:A1}, gives the same estimate also for ${\mathcal S}(\sqrt{\lambda_1}\widetilde Wa_1,\dots,\sqrt{\lambda_m}\widetilde W_m)$.
The second assertion then follows simply by Lemma \ref{lem:2S}.
\end{proof}
}

{\subsection{Finding positive definite matrices}\label{findmatrix}

In view of Theorem \ref{thm:review1}, we are interested in the following optimization problem.
Given an $m$-dimensional subspace $\widetilde {\mathcal A}\subset{\mathbb R}^{m\times m}$ of $m\times m$ symmetric matrices,
we would like to answer two questions:
\begin{itemize}
\item[(i)] Does $\widetilde {\mathcal A}$ contain a strictly positive matrix?
\item[(ii)] And, if this is the case, which positive definite matrix in $\widetilde {\mathcal A}$ achieves the smallest ratio between
its Frobenius norm and its smallest eigenvalue?
\end{itemize}

Both these tasks can be solved by the following max-min problem
\begin{equation}\label{eq:maxmin1}
\max_{\substack{\widetilde A\in\widetilde{\mathcal A} \\ \|\widetilde A\|_F=1}}\ \min_{\substack{x\in{\mathbb R}^m \\ \|x\|_2=1}}x^T\widetilde Ax.
\end{equation}
Indeed, the maximizer of \eqref{eq:maxmin1} is the matrix from $\widetilde{\mathcal A}$, which has the largest minimal eigenvalue
among the matrices in $\widetilde{\mathcal A}$, which have unit Frobenius norm. Furthermore, if the value of \eqref{eq:maxmin1}
is zero or negative, there are no positive definite matrices in $\widetilde{\mathcal A}.$

\begin{thm}\label{thm:convex} Let $\widetilde {\mathcal A}\subset{\mathbb R}^{m\times m}$ be a subspace of $m\times m$ symmetric matrices.
Let 
\begin{equation*}
\ell(\widetilde A):=\min_{\substack{x\in{\mathbb R}^m\\\|x\|_2=1}}x^T\widetilde Ax
\end{equation*}
denote the minimal eigenvalue of $\widetilde A\in\widetilde{\mathcal A}.$ Then $\widetilde A\to -\ell(\widetilde A)$
is a convex function. Furthermore, the solution of the convex minimization problem
\begin{equation}\label{eq:maxmin5}
\alpha:=\min_{\substack{\widetilde A\in\widetilde{\mathcal A}\\\|\widetilde A\|_F\le 1}}(-\ell)(\widetilde A)
\end{equation}
satisfies $\alpha\le 0$ with $\alpha=0$ if, and only if, $\widetilde {\mathcal A}$ does not contain any strictly positive definite matrix.
If $\alpha<0$, then the minimizer $\widetilde A_0$ of \eqref{eq:maxmin5} lies on the sphere $\{\widetilde A\in\widetilde{\mathcal A}:\|\widetilde A\|_F=1\}$
and coincides with the solution of \eqref{eq:maxmin1}.
\end{thm}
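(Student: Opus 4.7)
The proof splits naturally into four short verifications, and I would organize it as follows.

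First I would establish the convexity claim. Since $\ell(\widetilde A) = \min_{\|x\|_2=1} x^T \widetilde A x$ is the pointwise infimum of a family of linear functionals $\widetilde A \mapsto x^T \widetilde A x$ (indexed by the unit sphere in $\mathbb R^m$), it is concave in $\widetilde A$, so $-\ell$ is convex. Combined with the fact that $\widetilde{\mathcal A}$ is a linear subspace and $\{\|\widetilde A\|_F \le 1\}$ is a convex set, this shows that \eqref{eq:maxmin5} is a bona fide convex minimization problem, and existence of a minimizer $\widetilde A_0$ is immediate by continuity of $-\ell$ and compactness of the feasible set.

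Next I would observe that $\widetilde A = 0$ is feasible and gives $-\ell(0) = 0$, so $\alpha \le 0$. For the characterization $\alpha = 0$ iff $\widetilde{\mathcal A}$ contains no strictly positive definite matrix, I would argue both directions: if some $B \in \widetilde{\mathcal A}$ is strictly positive definite, then $\ell(B) > 0$, and by linearity the scaled matrix $B/\|B\|_F$ lies in the feasible set with $-\ell(B/\|B\|_F) = -\ell(B)/\|B\|_F < 0$, hence $\alpha < 0$. Conversely, if $\alpha < 0$, then any minimizer $\widetilde A_0$ satisfies $\ell(\widetilde A_0) > 0$ and is therefore strictly positive definite.

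Then I would establish the boundary attainment in the case $\alpha < 0$. The key ingredient is that $\ell$ is positively homogeneous of degree one, $\ell(t\widetilde A) = t \ell(\widetilde A)$ for $t \ge 0$. If a minimizer $\widetilde A_0$ had $\|\widetilde A_0\|_F < 1$, then for $c := 1/\|\widetilde A_0\|_F > 1$ the rescaled matrix $c\widetilde A_0$ would still belong to $\widetilde{\mathcal A}$, satisfy $\|c\widetilde A_0\|_F = 1$, and yield
\[
-\ell(c\widetilde A_0) = -c\,\ell(\widetilde A_0) = c\,\alpha < \alpha,
\]
since $\alpha < 0$ and $c > 1$. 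This would contradict the optimality of $\widetilde A_0$, so necessarily $\|\widetilde A_0\|_F = 1$.

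Finally I would connect \eqref{eq:maxmin5} with the max-min problem \eqref{eq:maxmin1}. By definition \eqref{eq:maxmin1} equals $\max_{\|\widetilde A\|_F=1} \ell(\widetilde A)$, while $-\alpha = \max_{\|\widetilde A\|_F \le 1} \ell(\widetilde A)$; the previous paragraph shows these two maxima are both attained and coincide at $\widetilde A_0$, completing the proof. I do not expect any real obstacle here; the only point worth handling with care is the positive homogeneity of $\ell$ used in the scaling argument, which is what forces the optimum onto the unit sphere precisely when $\alpha < 0$.
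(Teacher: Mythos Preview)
Your proposal is correct and follows essentially the same approach as the paper's proof: convexity via the infimum-of-linear-functionals observation (the paper verifies midpoint concavity directly, which is the same argument), $\alpha\le 0$ from feasibility of $0$, the biconditional via scaling, and boundary attainment via the positive homogeneity $\ell(t\widetilde A)=t\ell(\widetilde A)$. Your write-up is in fact slightly more complete, since you spell out both directions of the biconditional and the existence of a minimizer by compactness, whereas the paper leaves these implicit.
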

\begin{proof}
 If $x\in{\mathbb R}^m$ with $\|x\|_2=1$ is fixed, then
\begin{equation*}
x^T\Bigl(\frac{\widetilde A+\widetilde B}{2}\Bigr)x=\frac{x^T\widetilde Ax+x^T\widetilde Bx}{2}\ge \frac{\ell(\widetilde A)+\ell(\widetilde B)}{2}.
\end{equation*}
Taking the infimum over $\|x\|_2=1$, we get $\ell(\widetilde A/2+\widetilde B/2)\ge \ell(\widetilde A)/2+\ell(\widetilde B)/2.$
This implies that the function $\widetilde A\to \ell(\widetilde A)$ is concave. Hence $-\ell$ is convex.
As $(-\ell)(0)=0$, we have $\alpha\le 0.$

If $\alpha=0$, then $(-\ell)(\widetilde A)\ge 0$ or, equivalently, $\ell(\widetilde A)\le 0$ for every $\widetilde A\in\widetilde {\mathcal A}$
and $\widetilde {\mathcal A}$ does not contain any strictly positive definite matrix.

If $\alpha<0$, then the minimizer of \eqref{eq:maxmin5} lies on the boundary of the optimization domain due to
$\ell(t\widetilde A)=t\ell(\widetilde A)$ for every $t>0$. Hence, in this case,
\begin{equation*}
\max_{\substack{\widetilde A\in\widetilde{\mathcal A}\\\|\widetilde A\|_F=1}}\, \min_{\substack{x\in{\mathbb R}^m\\\|x\|_2=1}}x^TAx=
\max_{\substack{\widetilde A\in\widetilde{\mathcal A}\\\|\widetilde A\|_F=1}}\ell(\widetilde A)
=-\min_{\substack{\widetilde A\in\widetilde{\mathcal A}\\\|\widetilde A\|_F=1}}(-\ell)(\widetilde A)
=-\min_{\substack{\widetilde A\in\widetilde{\mathcal A}\\\|\widetilde A\|_F\le 1}}(-\ell)(\widetilde A)=-\alpha.
\end{equation*}
\end{proof}
}

{\begin{rem}
Theorem \ref{thm:convex} translates \eqref{eq:maxmin1} into a convex optimization problem \eqref{eq:maxmin5}, cf. \cite{BEFB,BV}.
It can be solved in two steps. First, we want to decide if ${\widetilde{\mathcal A}}$ contains a strictly positive matrix.
If $\{\widetilde {A}_1,\dots,\widetilde A_m\}\subset {\widetilde{\mathcal A}}$ is any orthonormal basis of ${\widetilde{\mathcal A}}$, we would like to know if there is
a $\xi=(\xi_1,\dots,\xi_m)\in{\mathbb R}^m$, such that
\begin{equation}\label{eq:LMI}
\widetilde A=\xi_1\widetilde A_1+\dots+\xi_m\widetilde A_m \succ 0.
\end{equation}
This question is known as feasibility problem of the \emph{linear matrix inequality} \eqref{eq:LMI} and we refer to \cite[Section 11.4]{BV}
for a detailed discussion of its solution by interior-point methods. If \eqref{eq:LMI} turns out to be feasible, then we can use (for example)
an iterative projected subgradient method very much in the spirit of Section \ref{simpalg} to find the solution of \eqref{eq:maxmin1}.
\end{rem}
}

\vskip.5cm
\fbox{
\begin{minipage}{13.6cm}
\begin{algorithm}\label{alg5}
\rechange{
\emph{\begin{itemize}
\item Fix $\eta>0$, assume $f$ of the form \eqref{eq:sec2_sum}.
\item Denote $f^{(0)}(x):=\sum_{i=1}^m g_i^{(0)}(\langle a_i^{(0)},x\rangle)$ with $g_i^{(0)}:=g_i, a_i^{(0)}:=a_i$ for $i=1,\dots,m$.
\item For $k\ge 0$, compute $\widetilde \A^{(k+1)}$ by using Algorithm \ref{alg3} with accuracy $\eta>0$ from point values of $f^{(k)}$.
\item Define  $\widetilde W^{(k+1)}$ as the whitening matrix of 
$\widetilde \A^{(k+1)}$
using the solution of the optimization problem \eqref{eq:maxmin1}.
\item Denote $f^{(k+1)}(x):=f^{(k)}((\widetilde W^{(k+1)})^T x)$;\\
observe that $f^{(k+1)}(x)=\sum_{i=1}^m g_i^{(k+1)}(\langle a_i^{(k+1)}, x\rangle)$ as in \eqref{whitenedridge}
with $a^{(k+1)}_i :=  \widetilde W^{(k+1)} a_i^{(k)}/\|\widetilde W^{(k+1)} a_i^{(k)}\|_2$, $i=1, \dots,m$.
\end{itemize}}}
\end{algorithm}
\end{minipage}
}\vskip.3cm
{\subsection{Bootstrap whitening}

In view of the simple reformulation
\begin{equation}\label{whitenedridge}
f(\widetilde W^T x)= \sum_{i=1}^m g_i(\langle a_i,\widetilde W^T x\rangle)=\sum_{i=1}^m \tilde g_i(\sqrt{\tilde \lambda_i} \langle \widetilde W a_i,x\rangle)=\tilde f(x),
\end{equation}
for $\tilde g_i(t) = g_i(t/\sqrt{\tilde \lambda_i})$ and Theorem \ref{thm:review1}, we can further assume without loss of generality that the vectors
$\{a_i : i = 1, \dots, m\}$ are $\varepsilon$-nearly-orthonormal in first place. However, \rechange{in Section \ref{locmaxima} we needed} that $\varepsilon$ is indeed quite small
(certainly smaller than $1$ to ensure that our theoretical error estimates are meaningful). In view of Theorem  \ref{thm:review1}, this requires $\eta>0$ in the approximation \eqref{projestt} also rather small
and the identification of a reasonably well-conditioned matrix $\widetilde G$.

In this section we report surprising numerical results, obtained by iterating the whitening procedure (Algorithm \ref{alg5}). So far, we have not been able
to explain this phenomenon analytically, but it is consistently verified in all numerical experiments. It is related to the increasing possibility
over the iterations of finding a well-conditioned matrix $\widetilde G$ for whitening.

By applying whitening, we can assume through \eqref{whitenedridge} that the new function
$$
\tilde f(x) = \sum_{i=1}^m \tilde g_i(\sqrt{\tilde \lambda_i} \langle \widetilde W a_i, x\rangle)
$$ has ridge directions $\sqrt{\tilde \lambda_i} \widetilde W a_i$ which are ``more orthogonal'' than the original ones $a_i$ of
$f(x)= \sum_{i=1}^m g_i(\langle a_i, x\rangle)$. Still, when the distortion parameter $\eta>0$ is not very small (e.g., $\eta=0.1$ for $m=20$),
the level of gained $\varepsilon$-near-orthonormality will become rather mild. 
However, if we apply again the whitening on the previously whitened vectors $\sqrt{\tilde \lambda_i} \widetilde W a_i$ (for fixed accuracy $\eta>0$),
we surprisingly gain further improved $\varepsilon$-near-orthonormality! We implement this bootstrap procedure in Algorithm \ref{alg5}
and we show in 
Figure \ref{espnearorth} corresponding numerical results.


\begin{figure}[h!]
  \centering
 \includegraphics[width=0.5\textwidth]{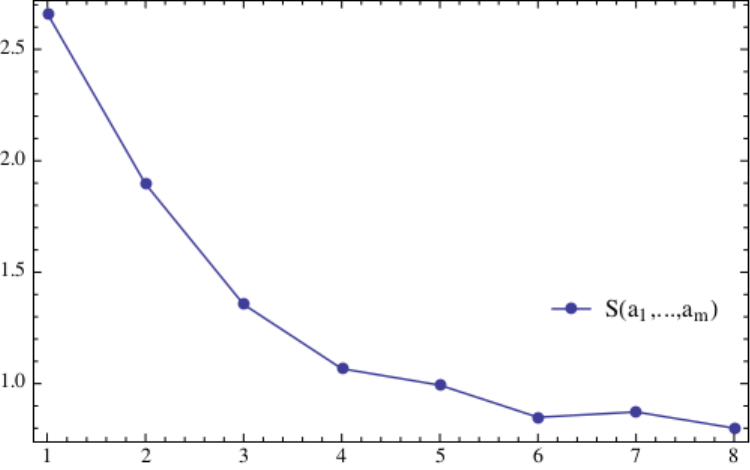}  
\caption{The values of $\mathcal S(a_1^{(k)},\dots,a_m^{(k)})$ for $m=20$ for different iterations $k=1,\dots,8$ of Algorithm \ref{alg5} for fixed $\eta=0.1$. We  observe the improved level of $\varepsilon$-near-orthonormality of the system along the iterations of Algorithm \ref{alg5}. Starting with $\mathcal S(a_1^{(0)},\dots,a_m^{(0)}) \geq \mathcal S(a_1^{(1)},\dots,a_m^{(1)})>1$, one finally obtains $\mathcal S(a_1^{(6)},\dots,a_m^{(6)})<1$ after $k=6$ iterations.}
\label{espnearorth}
\end{figure}

\begin{rem} While high accuracy of the approximation $\widetilde \A \approx \A$ is crucial, as we just pointed out, it is important to stress that in
\rechange{the analysis of Section \ref{locmaxima} the $\varepsilon$-near-orthonormality is merely an useful and technical assumption}
in order to derive in a relatively simple way theoretical error bounds on the identification of the $a_i$'s, see Theorem \ref{thm:recoveridge} and Theorem \ref{thm:recovery2}.
\rechange{Although these bounds provide robust theoretical guarantees, they seem to overestimate
the actual error committed by the reconstruction Algorithms \ref{alg6} or \ref{alg7}
if the vectors $a_i$'s are well-separated (perhaps after just a rough whitening) but not yet $\varepsilon$-near-orthonormal for $\varepsilon$ small. 
So in practice we often observe that}, as soon as the accuracy
$\widetilde \A \approx \A$ is good enough, even a simple whitening is already sufficient for our recovery algorithms Algorithm \ref{alg6} or
Algorithm \ref{alg7} to recover well the $a_i$'s.
We extensively illustrate this  practical efficiency and robustness in the numerical experiments
\rechange{in the next section}.
\end{rem}
}

{
 
\section{Numerical experiments}\label{numerics}


In this section we demonstrate numerically the efficiency of the pipeline of the algorithms we analyzed in the previous sections for the robust
identification of the weights of a network  written in compact form as $f(x) = b^T g(A^T x + \theta)$ of $m$ nodes in dimension $m$,
where $A\in \mathbb{R}^{m \times m}, b,\theta \in \mathbb{R}^m$. Throughout the experiments we use networks with random configurations. 
To generate a random network we specify $m$ and $\varepsilon$, which is the degree of near orthonormality of $A$ and sample the remaining parameters as
\begin{itemize}
    \item $A$: First we draw an orthogonal $m \times m$ matrix from the Haar distribution.
    Then we proceed iterating the following adjustments: we perturb the singular values by Gaussian noise with a given variance, then normalize the columns, and we check whether $A$ has reached the desired  degree $\varepsilon$ of near-orthonormality up to a tolerance of $\pm 0.001$; if this does not occur, then we modify the variance of the Gaussian noise in a bisection fashion to search iteratively the right level of near-orthonormality;
    \item $b=(b_1,\dots,b_m)^T$ with $b_i\sim\mathcal{N}(1, 1/5)$ selected independently;
    \item $\theta=(\theta_1,\dots,\theta_m)$ with $\theta_i \sim \mathcal{N}(0, 1/5)$ selected independently;
    \item $g(t) = \tanh(t)$.
\end{itemize}
This setting corresponds to \eqref{sumsridge} with $g_i(t)=b_i\tanh(t+\theta_i)$.

\subsection{Exact weight identification}
We will consider below trials for $m=20$. The choice of a small dimension is simply due to the necessity of running in a reasonable time a large number of trials to estimate the empirical probability of success, but the algorithms can comfortably be implemented in higher dimensions
$m \approx 10^3$ on a Laptop. In this case the memory needed for storing 64 bit floating point matrices  $Y$ as appearing in Algorithm \ref{alg1}-Algorithm \ref{alg3} is given by
$m^2\times m_{\mathcal X} \times 8$ bytes $\approx 8$GB). For much higher dimensions the use of HPC is needed, see Section \ref{sec:comptime} and Section \ref{sec:memory} below.

Denote by $a_1, \dots, a_m$ the columns of $A$. 
As clarified by Theorem \ref{identact} the fundamental issue is in fact the robust identification of the network weights $a_i$,
while the identification of the rest of the network is its direct consequence.
For the sake of simplicity, we present here results based on active sampling. Accordingly, we denote with $m_{\mathcal X}$ the number of sampled Hessians
of the function $f$, which are computed by finite difference approximations with stepsize $0.001$, cf. \eqref{constrY}.
For each pair $(m_{\mathcal X}, \varepsilon)$ of number of Hessians and near-orthonormality level, we run $60$ trials. In each of the trials we
first construct the subspace $\tilde{\mathcal A}$ by Algorithm \ref{alg3} and then try
to compute all $m = 20$ vectors $a_1, \dots, a_m$ by applying Algorithm \ref{alg7} repeatedly.
One run of Algorithm \ref{alg3} returns at random one of the $a_i$, therefore we need to run the algorithm at least $m$ times to have a chance of recovering all the vectors.
For the hyperparameter of number of repetitions we choose $n_{\text{rep}}=180$. In each of the $n_{\text{rep}}$ repetitions we carried out $100$ steps of the algorithm with $\gamma = 2$ and we used this number of steps as a stopping criterion.
For each of the $60$ trials we get $180$ vectors $V = \left\lbrace v_1, \dots, v_{\text{rep}}\right\rbrace$. In our numerical experiments $V$ always contained only approximations to (some or all) original vectors $a_i$ and no spurious cases seem ever occurring.
For a given tolerance $\delta$ we measure the number of well-approximated vectors as 
\begin{align*}
n_{\text{found}} := \# \left\lbrace i \in [m] :  \min\{ \|v + a_i \|_2 ,\|v - a_i \|_2\} \leq \delta \text{ for any } v \in V \right\rbrace,
\end{align*}
and we set $\delta = 0.05$ in our experiments. 
\begin{figure}[!b]
\begin{center}
\includegraphics[width=1.2\textwidth]{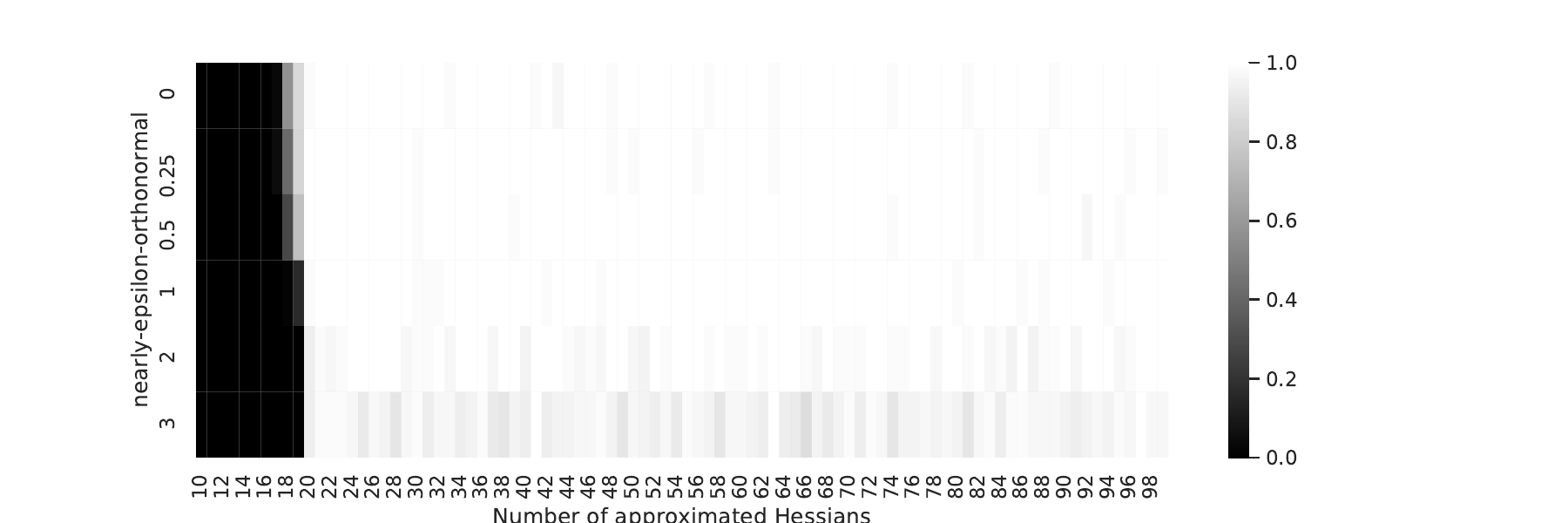}
\end{center}
\caption{Phase transition diagram displaying the empirical success probability of recovering all the vectors $a_1, \dots, a_m$ for a given number
$m_{\mathcal X}$ of sampled Hessians and near-orthonormality level $\varepsilon$ of the searched vectors.}
\label{fig:plot1}
\end{figure}
Figure \ref{fig:plot1} shows the ratio of the number of trials where $n_{\text{found}} = m$ for different degrees of near-orthonormality
and amounts of approximated Hessians. 
From the figure, it is clear that our pipeline of the algorithms is able to reconstruct exactly all $m=20$ network weights $a_1, \dots, a_m$ with high probability with a number $m_{\mathcal{X}}\approx m=20$ of sampled Hessians,
which is the information theoretical lower bound. Moreover, as shown in Figure \ref{fig:plot1}, exact recovery is obtained also for levels $\varepsilon>1$ of near-orthonormality. Even for $\varepsilon \approx 3$
the algorithm recovers all vectors with overwhelming (empirical) probability. This numerical evidence suggests that the $\varepsilon$-nearly-orthonormality is a quite conservative measure of separation of the vectors
$a_1, \dots, a_m$ and that in practice it is sufficient that they are enough separated and a near orthonormality is not necessary.

\subsection{Exploring computational time}\label{sec:comptime}

The previous section indicates that having around $m_{\mathcal{X}}\approx m$ (approximated) Hessians is sufficient for recovery.
We use the parameters above and explore how much computational time different blocks of the algorithm pipeline need for increasing dimensionalities $m=m_{\mathcal{X}}$. Additionally, we keep the deviation from an 
orthonormal system constant at $\varepsilon = 1$ (which should not have much influence on the runtime anyway).
We split the algorithm into three stages: the sampling and approximation of the Hessians, the realization of Algorithm \ref{alg3} (which below we denote PCA), and finally the search of the rank-1 matrices by Algorithm \ref{alg7}. For the last stage we only measure the time needed to find one rank-1 matrix. 
Additionally we also track one version where we replace the SVD in stage 2 (Algorithm \ref{alg3}) by a randomized SVD  \cite{hamatr11}. The computational times of the different phases are plotted in Figure \ref{fig:plotTimeSingle}.
As a clear disclaimer, let us stress that our computational time figures are relative to quite straightforward off the books implementations, with no particular tuning or optimization whatsoever. Hence, they should not be taken as a reference of the absolute performance of our algorithmic pipeline, rather as an illustration of our theoretical findings and an indication of feasibility, even with relatively modest computational resources. In fact, we expect that careful and optimized coding and parallelization will yield tremendous speed-ups and more efficient memory usage over the presented figures.

As mentioned above, we need to apply Algorithm \ref{alg7} repeatedly to find all the weights $a_i$.  An interesting question is how many times $n_{rep}$
do we need to repeat  Algorithm \ref{alg7} with random initial iteration to be able to compute all the weights. 
If we assume a uniform distribution for Algorithm \ref{alg7} to compute at random one of the weights, then our problem is equivalent to the classical
coupon collector's problem. A well known result from probability \cite[Section 8.4]{is95} tells us that we need on average 
\begin{align}\label{norep}
    \mathbb{E}[n_{rep}] = m \ln m + \gamma m + \frac{1}{2} + \mathcal{O}(1 / m)
\end{align}
repetitions to cover all vectors, where $\gamma \approx 0.57721\dots$ is the Euler-Mascheroni constant. Large deviation bounds are also available, see, e.g., \cite{erre61}.
Using this as a baseline to measure the 
cost of the algorithm yields the cumulative results in Figure \ref{fig:plotAccumlative}.
It is important to notice that the showed runtime as well as the memory consumption (next section) 
depend heavily on the implementation, which was not extensively optimized with respect to both. 
However, even with this in mind, it is painfully obvious that finding a way to avoid Algorithm \ref{alg7} picking duplicate vectors would make the algorithm much more
efficient. Nevertheless, it is also clear that the procedure can be easily parallelized, using ca. $\mathbb{E}[N_{rep}]$  \eqref{norep} processors.
\begin{figure}[!hbt]
    \includegraphics[width=1\textwidth]{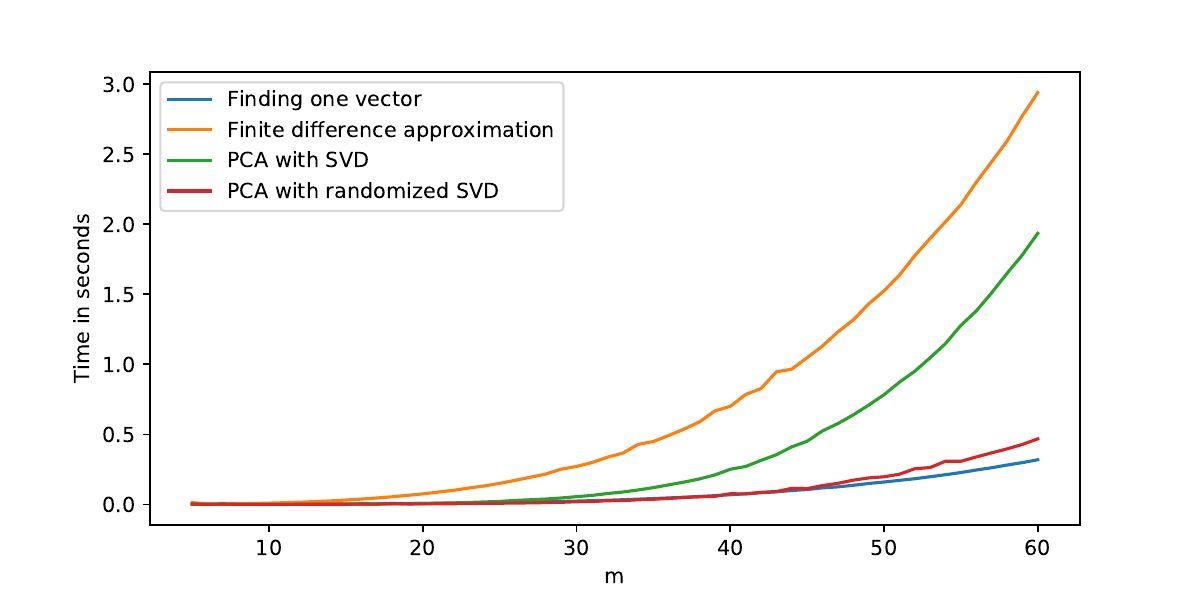}
    \caption{Time spent at the different stages of the pipeline to compute one network weight.}
    \label{fig:plotTimeSingle}
\end{figure}

\begin{figure}[!hbt]
    \includegraphics[width=1\textwidth]{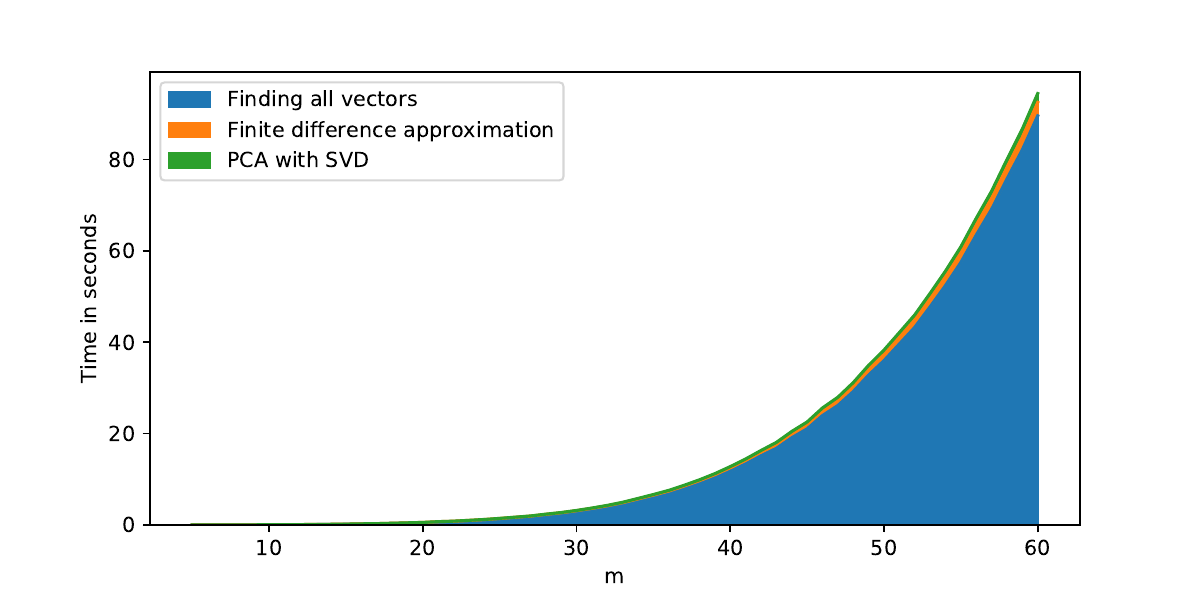}
    \caption{Cumulative time spent at the different stages of the pipeline to obtain full recovery of all vectors in a sequential implementation.}
    \label{fig:plotAccumlative}
    
\end{figure}
\subsection{Exploring memory allocation}\label{sec:memory}
As previously, we split the algorithm into three stages and measure the
 consumed memory of the process for $m=100$ and $m=125$ 
 (cf. Figure \ref{fig:memcon100}-\ref{fig:memcon125}). 
 The most expensive part is clearly the PCA (Algorithm \ref{alg3}). 
 However, choosing a randomized SVD variant required only 50\% of the memory in our example, without diminishing significantly accuracy.
 This stage needs to be further optimized with respect to memory in the future, including considering lower bit encoding etc. 
\begin{figure}[!hbt]
\centering
    \includegraphics[width=1\textwidth]{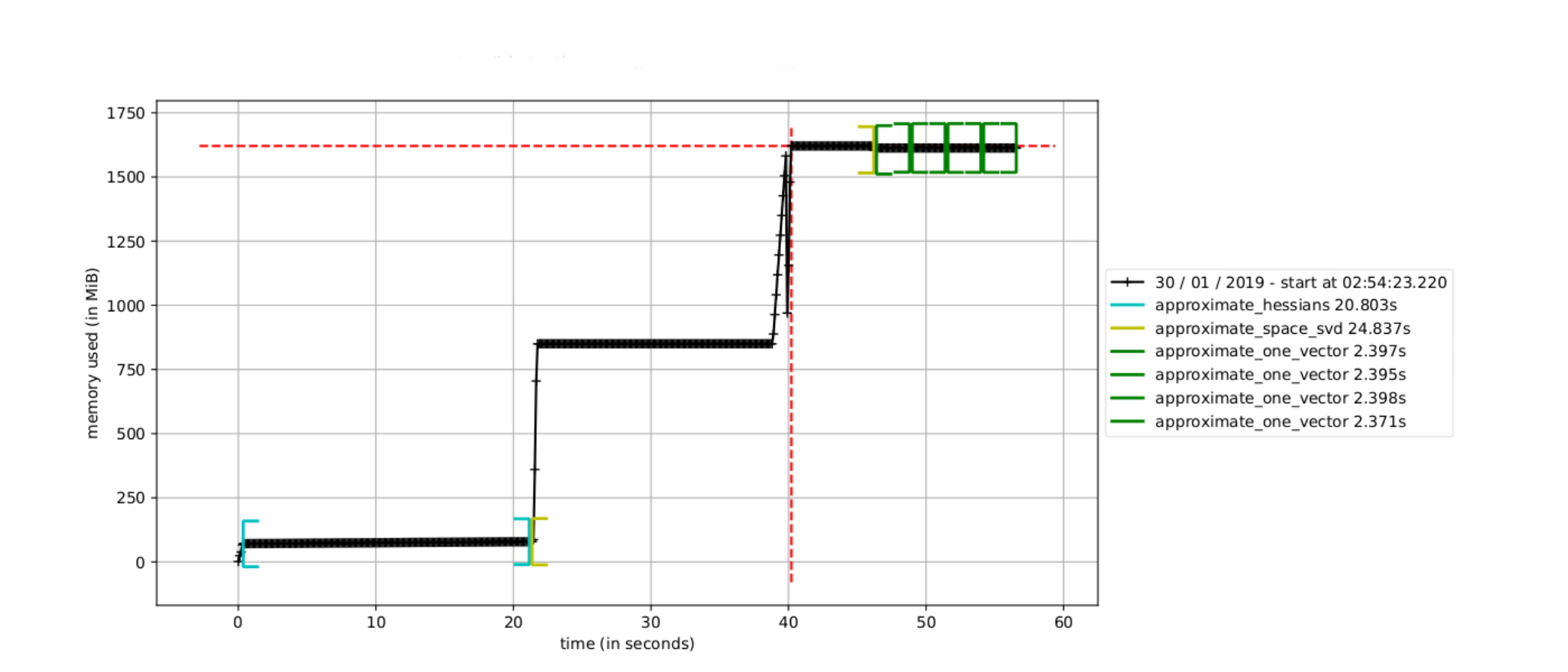}
    \includegraphics[width=1\textwidth]{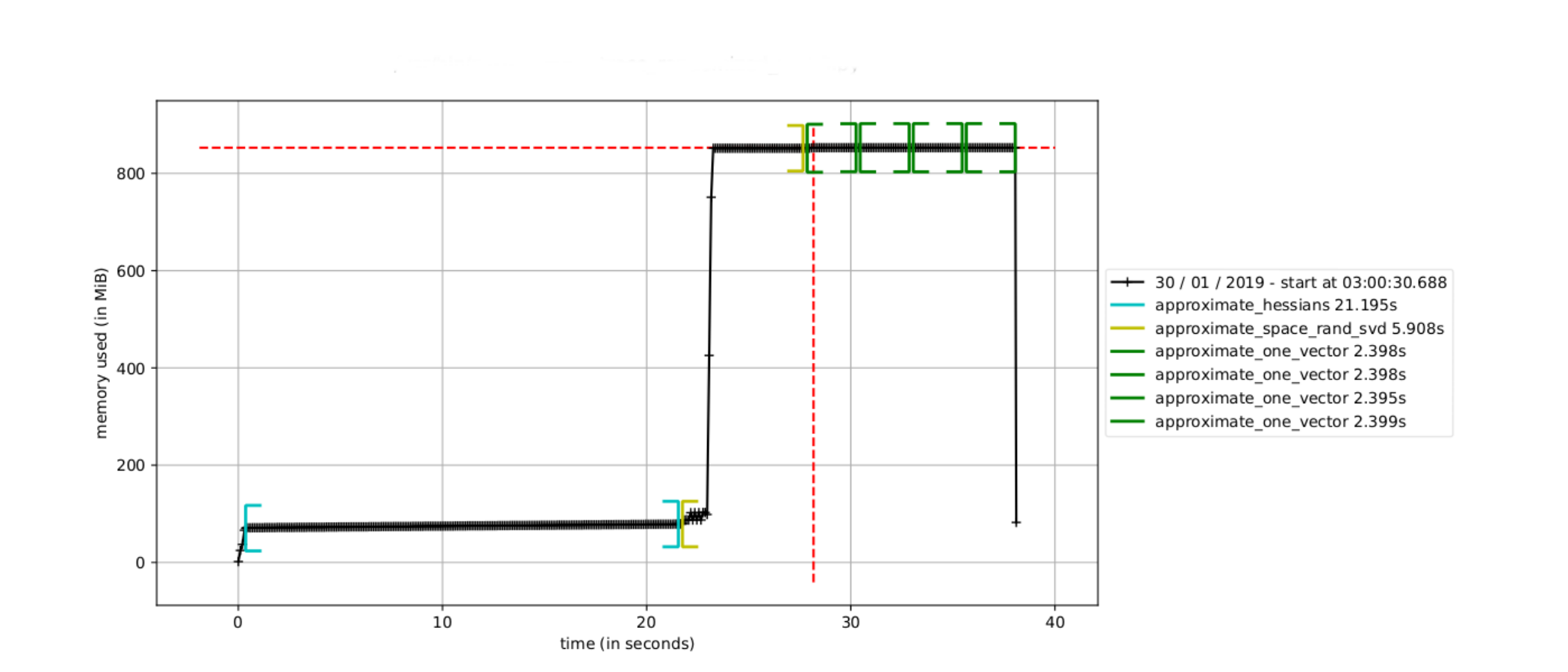}
    \caption{Memory consumption for $m=100$. Classical SVD (top), randomized SVD (bottom). The dashed red lines indicates the cartesian coordinates (time $\times$ memory) of the memory peak.}
    \label{fig:memcon100}
    
\end{figure}
\begin{figure}[!hbt]
\centering
    \includegraphics[width=1\textwidth]{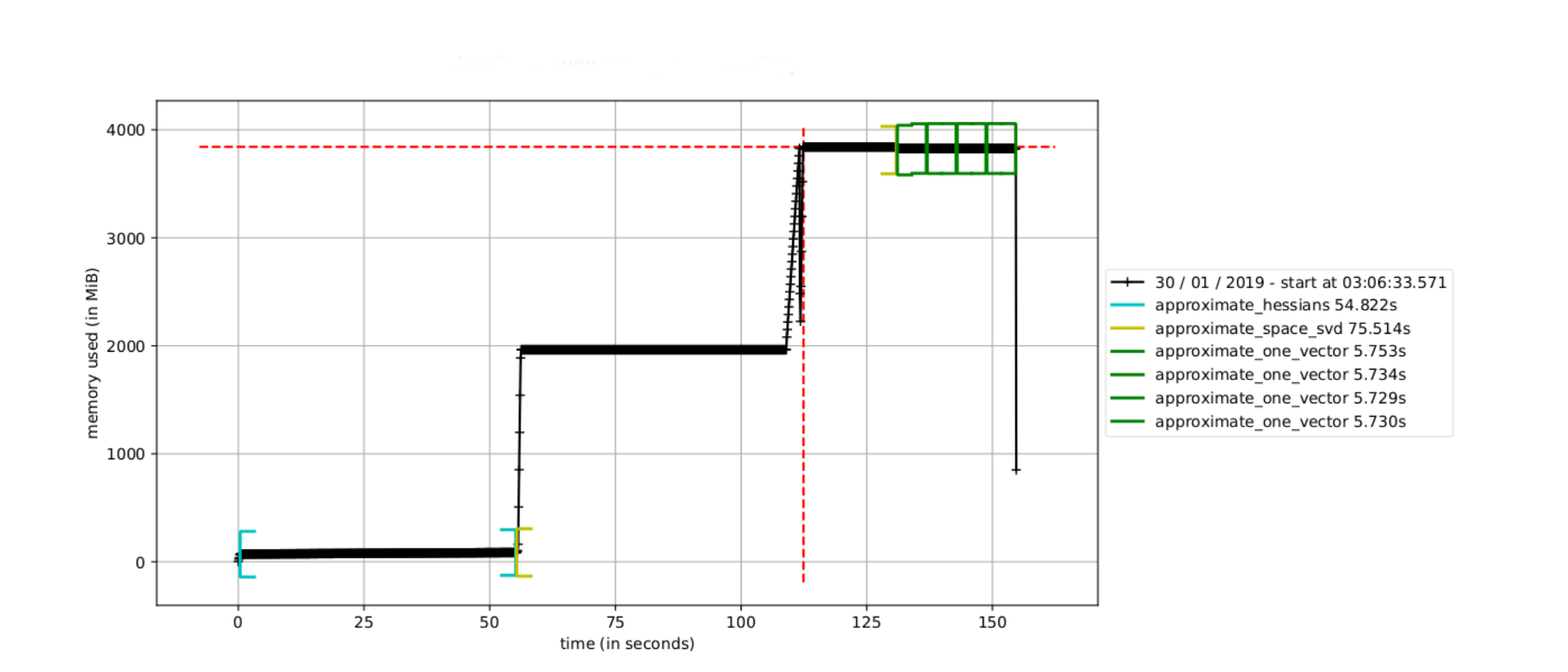}
    \includegraphics[width=1\textwidth]{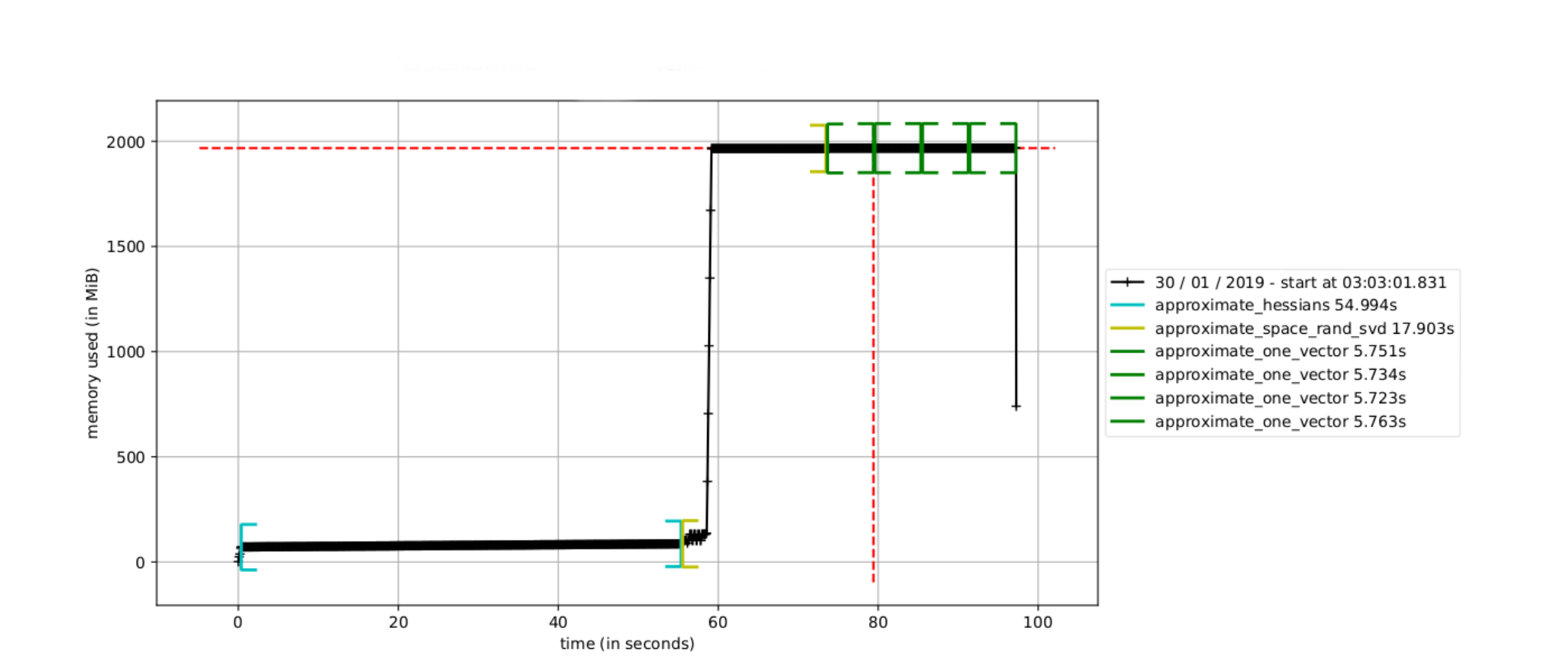}
    \caption{Memory consumption for $m=125$. Classical SVD (top), randomized SVD (bottom). The dashed red lines indicates the cartesian coordinates (time $\times$ memory) of the memory peak.}
    \label{fig:memcon125}
    
\end{figure}
\newpage
\subsection{Comparison to gradient descent}
A popular method for training a neural network is minimizing the misfit (here we consider the mean squared error) on the output of the network over a large number of inputs by means of gradient descent. In this section we would like to compare the behavior of our pipeline of algorithms with gradient descent we choose the same setting for both approaches. In particular, we assume that every other parameter of the network except the (inner) weights $a_1, \dots, a_m$ is known and fixed. To be precise:
\begin{itemize}
\item We assume $b, \theta$ are known and fixed;
\item We set $m = 20$ and $\varepsilon = 1$ (i.e. $a_1, \dots, a_m$ will be $\varepsilon$-nearly-orthonormal);
\item For each random network we run our method for gradually increasing number of sampled approximated Hessians and with $n_{rep}$ sufficiently large. 
From the resulting vectors we compute $\hat a_1, \dots, \hat a_m$ by clustering them by $k$-means. 
Finally we record the error of the new network where $A$ is substituted with our estimate $\hat A$ on a completely new set of $10^5$ datapoints generated at random uniformly on $B_1^d$ that were not used during the previous steps. The error is measured via mean square error (MSE/$L^2$-squared) / uniform norm and we 
record the distance of the estimated $\hat A$ to $A$ in the Frobenius norm.
\item We emulate the same procedure by using gradient descent to minimize the MSE misfit over the same training datapoints.
 Approximating one Hessian by finite-differences requires $1+\frac{d^2 + d}{2}$ samples. The stepsize of gradient descent method remains fixed at $0.1$ and we do exactly $1000$ steps for each trial.
\end{itemize}
We average everything over $10$ random trial networks, the results are collected in Figure \ref{fig:ApproximationError} 
and Figure \ref{fig:LastFigure}.

On the one hand, gradient descent seems to require only a very small amount of samples ($\approx 50$, see the right plot in Figure \ref{fig:LastFigure}) to converge against a very efficient local minimum representing a good approximation of the network in MSE, whereas our algorithm requires at least $m$ Hessians. However, the generated network by gradient descent is not optimal as one can observe from the non-vanishing uniform norm approximation, see the right plot in Figure \ref{fig:ApproximationError}.
On the other hand, if we have enough Hessians, then our method returns the optimal network with vanishing uniform norm discrepancy. Additionally, gradient descent will never come close to the original weight matrix $A$, while our algorithm is able to consistently recover a very good approximation of $A$, see the left plot in Figure \ref{fig:LastFigure}.
\begin{figure}[!hbt]
\centering
    \includegraphics[width=.45\textwidth]{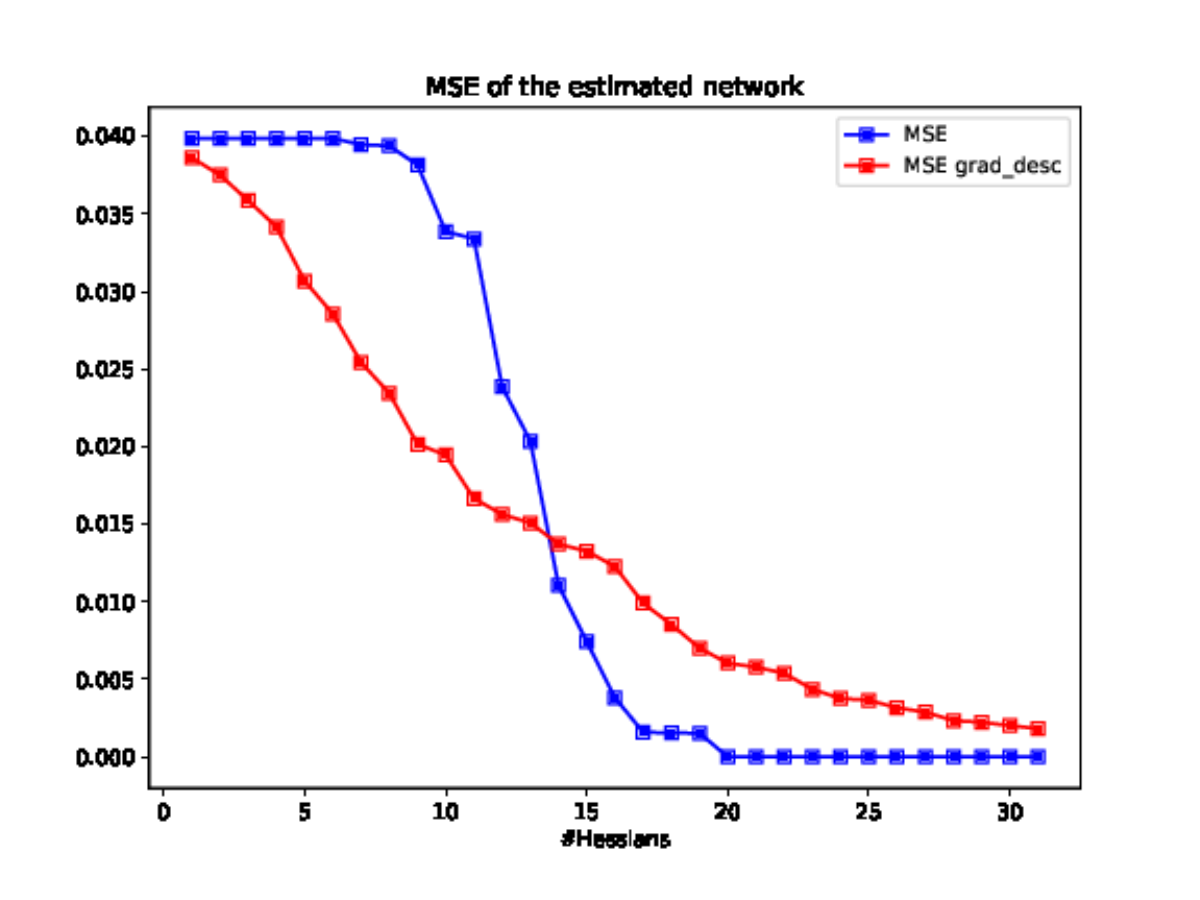}
    \includegraphics[width=.45\textwidth]{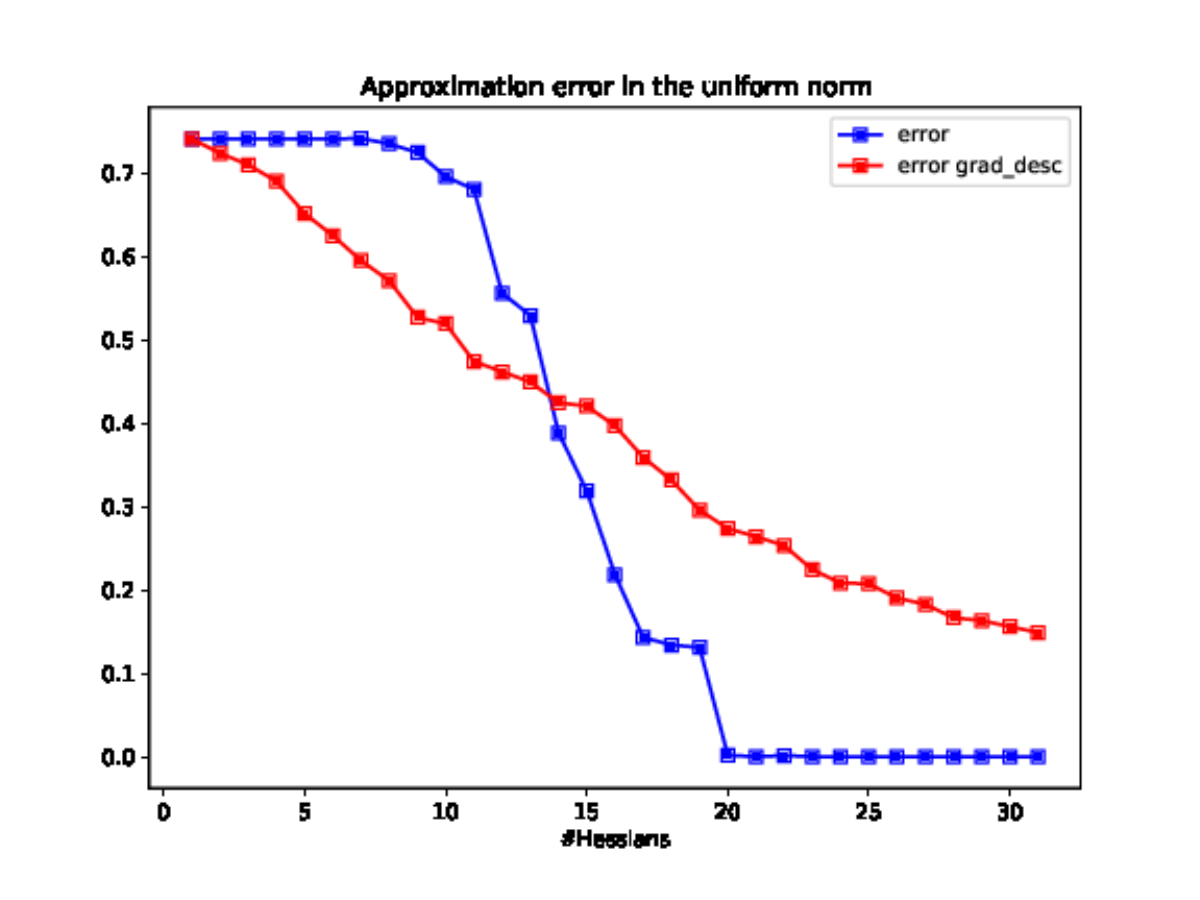}
    \caption{Average approximation error of $10$ random networks with $m=20, \varepsilon=1$ in terms of MSE (left), uniform norm (right). The errors were measured over $10^5$ datapoints generated uniform at random on the ball $B_1^d$.}
    \label{fig:ApproximationError}
\end{figure}
\begin{figure}[!hbt]
\centering
    \includegraphics[width=.45\textwidth]{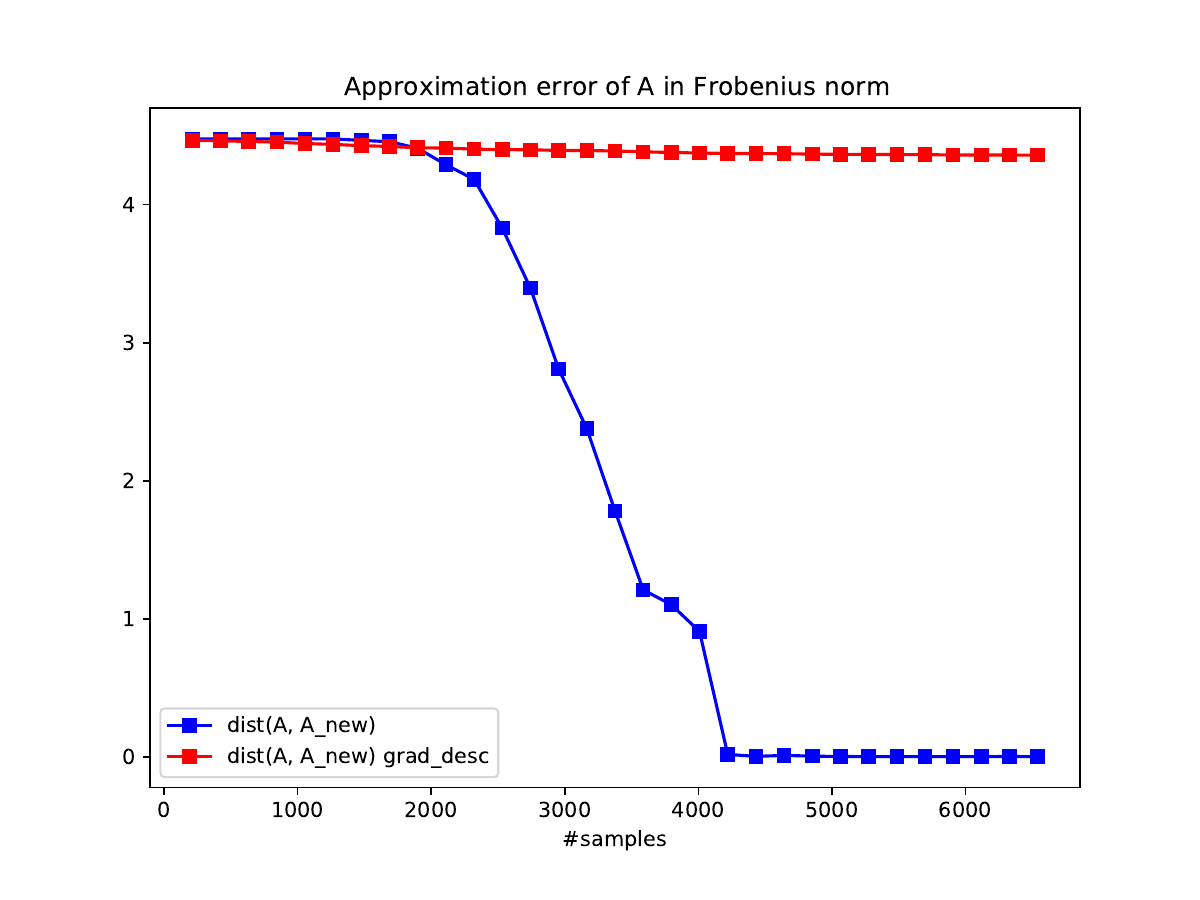}
    \includegraphics[width=.45\textwidth]{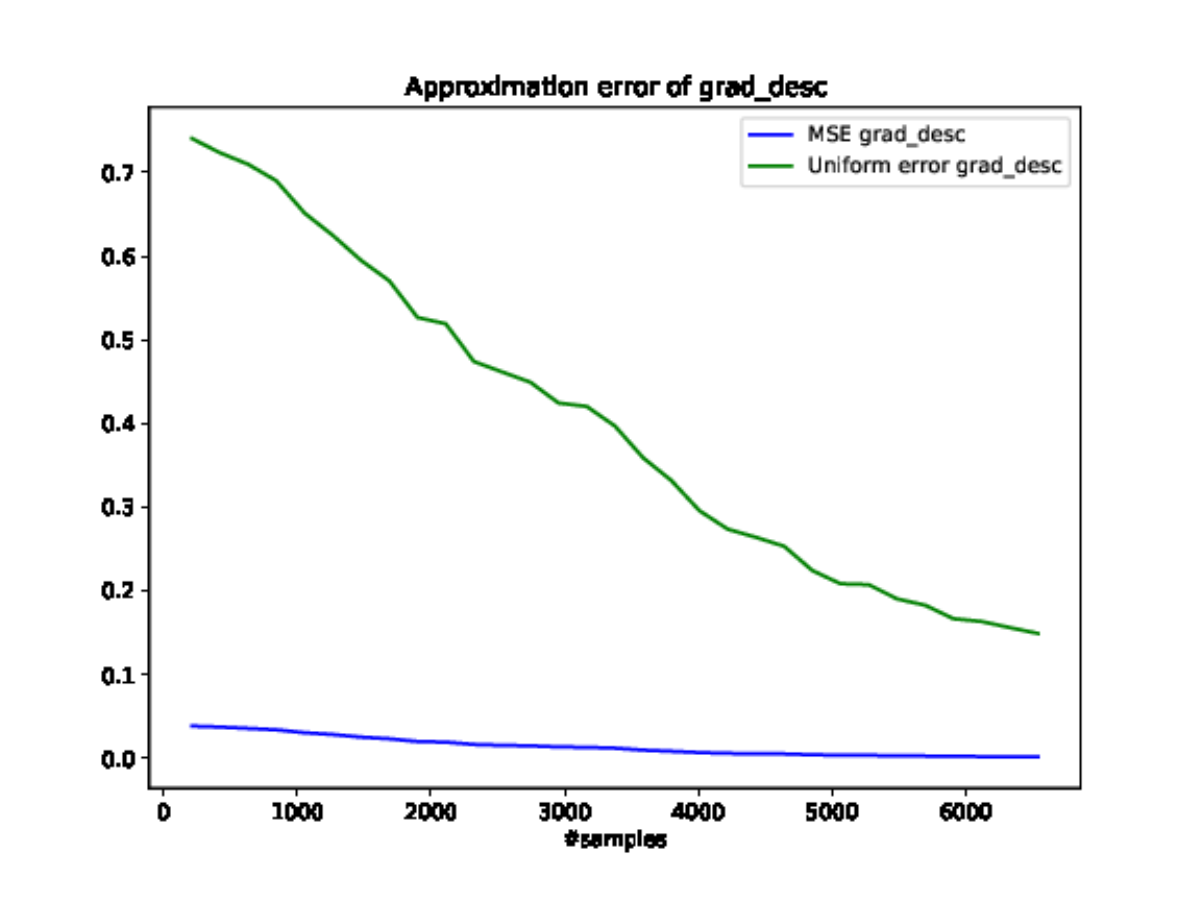}
    \caption{Approximation error of $A$ in the Frobenius norm (left). Approximation error of gradient descent for 1-200 samples (right).}
    \label{fig:LastFigure}
\end{figure}


}

\section{Appendix}

\subsection{Proofs of passive sampling results}

 With the same proof of Lemma \ref{speclem}  we can show the following result.
\begin{lem}\label{speclem2} Assume the vectors $(a_i)_{i=1}^m$ linearly independent, and $\|a_i\|_2=1$ for all $i=1,\dots,m$. Additionally assume 
\begin{align*}
C_1&:=\max_{i=1,\dots,m}\, \max_{-1\le t\le 1} |g'_i(t)| < \infty.
\end{align*}
Suppose that $\sigma_m(J_{\mathcal V}[f])\ge \alpha>0$, i.e., the $m^{th}$ singular value of the matrix $J_{\mathcal V}[f]$ is bounded away from zero. Then for any $s\in (0, 1)$  we have that
\begin{equation*}
\sigma_m(X_{\mathcal{V}}) \geq \sqrt{m_{\mathcal X} \alpha (1-s)}
\end{equation*}
with probability at least $1 - m \exp\Bigl(-\frac{m_{\mathcal X}\alpha s^2  }{2 (C_0 C_{\mathcal V} m)^2}\Bigr)$.
\end{lem}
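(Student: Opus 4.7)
The plan is to follow the template of Lemma \ref{speclem} almost verbatim, substituting gradient evaluations $\nabla f(x_k)$ by the integral vectors $X_j = \int \nabla f(x)\varphi_{\nu_j}(x)dx$, and the empirical law of the sampling points on $\mathbb S^{d-1}$ by the law $\pi$ on $\mathcal V$. First I would observe that every $X_j$ lies in $A$, since $X_j = \sum_{i=1}^m \bigl(\int g'_i(a_i\cdot x)\varphi_{\nu_j}(x)dx\bigr)a_i$, and therefore, denoting by $P^A$ the $m\times d$ matrix whose rows form any orthonormal basis of $A$, one has $\sigma_m(X_{\mathcal V}) = \sigma_m(P^A X_{\mathcal V}) = \sqrt{\sigma_m(P^A X_{\mathcal V} X_{\mathcal V}^T (P^A)^T)}$ exactly as in the active case.

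Next I would set, for $j=1,\dots,m_{\mathcal X}$, the i.i.d.\ random positive semidefinite matrices $Z_j := P^A X_j X_j^T (P^A)^T \in \R^{m\times m}$. Their expectation is $\mathbb E\, Z_j = P^A J_{\mathcal V}[f] (P^A)^T$ by the very definition \eqref{Jnuass}, so the full-rank assumption $\sigma_m(J_{\mathcal V}[f])\ge\alpha$ (which is preserved under the orthogonal projection $P^A$, because $J_{\mathcal V}[f]$ has range contained in $A$) yields $\mu_{\min}\bigl(\sum_{j=1}^{m_{\mathcal X}}\mathbb E Z_j\bigr)\ge m_{\mathcal X}\alpha$. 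What remains is the almost-sure upper bound on $\sigma_1(Z_j) = \|X_j\|_2^2$.

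The key step — and the only real point of departure from the proof of Lemma \ref{speclem} — is this uniform bound on $\|X_j\|_2$. Here I would use the integration-by-parts identity
\begin{equation*}
X_j = \int_{\R^d} \nabla f(x)\,\varphi_{\nu_j}(x)\,dx = -\int_{\R^d} f(x)\,\frac{\nabla \varphi_{\nu_j}(x)}{p(x)}\,p(x)\,dx,
\end{equation*}
valid because $\varphi_{\nu_j}$ is compactly supported inside $B_1^d$. Combining the bound $\|\nabla\varphi_{\nu_j}(x)/p(x)\|_2 \le C_{\mathcal V}$ from \eqref{nuass} with the obvious estimate $\|f\|_\infty \le m C_0$, where $C_0 := \max_i\|g_i\|_{L_\infty([-1,1])}$, gives
\begin{equation*}
\|X_j\|_2 \le C_{\mathcal V}\int_{\R^d}|f(x)|\,p(x)\,dx \le C_{\mathcal V} \, m\, C_0,
\end{equation*}
so $\sigma_1(Z_j) \le (C_0 C_{\mathcal V} m)^2$ almost surely. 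The main obstacle is really conceptual here: confirming that the constants appearing in the hypotheses \eqref{nuass} are sharp enough to absorb the implicit behaviour of $f$ on the support of $\varphi_{\nu_j}$, and checking that the integration by parts produces no boundary terms.

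Finally I would apply the matrix Chernoff bound (Theorem \ref{chernmat}) to the sequence $(Z_j)_{j=1}^{m_{\mathcal X}}$ with the upper parameter $R=(C_0 C_{\mathcal V} m)^2$ and $\mu_{\min}\ge m_{\mathcal X}\alpha$, yielding $\sigma_m(P^A X_{\mathcal V}X_{\mathcal V}^T(P^A)^T) \ge m_{\mathcal X}\alpha(1-s)$ with probability at least $1-m\exp\bigl(-\tfrac{m_{\mathcal X}\alpha s^2}{2(C_0 C_{\mathcal V} m)^2}\bigr)$. Taking the square root delivers the claimed bound on $\sigma_m(X_{\mathcal V})$.
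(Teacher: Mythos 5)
Your proposal is correct and follows essentially the same route as the paper: reduce to the proof of Lemma~\ref{speclem} with $\nabla f(x_k)$ replaced by $X_j=\int\nabla f(x)\varphi_{\nu_j}(x)\,dx$, note that $X_j\in A$, and obtain the almost-sure bound $\|X_j\|_2\le mC_0C_{\mathcal V}$ via integration by parts (using compact support of $\varphi_\nu$ inside $B_1^d$ and the assumption \eqref{nuass}), then apply the matrix Chernoff bound of Theorem~\ref{chernmat}. This is precisely the paper's argument, including the key observation $\int\nabla f\,\varphi_\nu\,dx=-\int f\,\nabla\varphi_\nu\,dx$, the estimate $\|f\|_{L_\infty(B_1^d)}\le mC_0$, and the factor $(mC_0C_{\mathcal V})^2$ replacing $C_1^2m^2$.
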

\begin{proof}
The only difference with respect to the proof of Lemma \ref{speclem} is in how we estimate the term
\begin{align}
& \notag \sigma_1 \left (W^T\left ( \int_{\mathbb R^d} \nabla f(x) \varphi_{\nu}(x) dx \right )\left ( \int_{\mathbb R^d} \nabla f(x) \varphi_{\nu}(x) dx \right )^TW \right )=\left  \|  \int_{\mathbb R^d} \nabla f(x) \varphi_{\nu}(x) dx \right \|_2^2\\ 
\label{eq:sigma_1} 
&=\left  \|  \int_{\mathbb R^d} f(x) \nabla \varphi_{\nu}(x) dx \right \|_2^2\le \biggl(\int_{\mathbb R^d}|f(x)|\cdot \left\|\frac{\nabla\varphi_{\nu}(x)}{p(x)}\right\|_2p(x)dx\biggr)^2
\le (mC_0C_{\mathcal V})^2.
\end{align}
The rest follows similarly.
\end{proof}

{\it Proof of Theorem \ref{thm3-1}.} 
As in the proof of Theorem \ref{thm3}
\begin{align}
\|P_A-P_{\tilde A}\|_F&\le \frac{2\|X_{\mathcal{V}}-Y_{\mathcal{V}}\|_F}{\sigma_{m}(X_{\mathcal{V}}^T)-\|X_{\mathcal{V}}-Y_{\mathcal{V}}\|_F}. \label{1steq}
\end{align}
By Lemma \ref{speclem2} with probability at least $1 - m \exp\Bigl(-\frac{m_{\mathcal X}\alpha s^2  }{2 (C_0 C_{\mathcal V} m)^2}\Bigr)$ we have 
\begin{equation}\label{2ndeq}
\sigma_m(X_{\mathcal{V}}) \geq \sqrt{m_{\mathcal X} \alpha (1-s)}. 
\end{equation}
\change{
In order to estimate $\|X_{\mathcal V}-Y_{\mathcal V}\|_F$ we apply Theorem \ref{thm:vecber}. Let $Z^k\in\R^{d\times m_{\mathcal X}}$ be a matrix with columns
$$
(Z^k)_j=\frac{f(x_k) +n_k}{N}\cdot \frac{\nabla \varphi_{\nu_j}(x_k)}{p(x_k)}- \frac{1}{N}\int_{\mathbb R^d} f(x) \frac{\nabla \varphi_{\nu_j}(x)}{p(x)} p(x) dx
$$
for $j=1,\dots,m_{\mathcal X}.$ Then ${\mathbb E}Z^k=0$,
\begin{align*}
\|Z^k\|_F^2=\sum_{j=1}^{m_{\mathcal X}}\|(Z^k)_j\|_2^2\le \frac{m_{\mathcal X}}{N^2}\cdot[2(mC_0+C_{\mathcal N})C_{\mathcal V}]^2=\frac{m_{\mathcal X}}{N^2}\cdot(2mQ)^2=:B^2
\end{align*}
and
\begin{align*}
\sum_{k=1}^N &{\mathbb E}\|Z^k\|_F^2=\sum_{k=1}^N \sum_{j=1}^{m_{\mathcal X}} {\mathbb E}\|(Z^k)_j\|_2^2\\
&=N m_{\mathcal X}\int_{\mathbb R} \int_{\mathbb R^d}\left \| \frac{f(y)+n}{N}\cdot \frac{\nabla \varphi_{\nu}(y)}{p(y)}- \frac{1}{N}\int_{\mathbb R^d} f(x) \frac{\nabla \varphi_{\nu}(x)}{p(x)} p(x) dx\right  \|_2^2 p(y) dy d\omega(n)\\
&\le \frac{m_{\mathcal X}[2(mC_0+C_{\mathcal N})C_{\mathcal V}]^2}{N}=\frac{m_{\mathcal X}(2mQ)^2}{N}=:\sigma^2,
\end{align*}
where $\omega$ is the probability distribution of the noise.

By Theorem \ref{thm:vecber}, we obtain for $t=\eta \sqrt{m_{\mathcal X}}$ with $0\le \eta\le 2mQ$
\begin{align}
\notag {\mathbb P}(\|X_{\mathcal V}-Y_{\mathcal V}\|_F\ge \eta \sqrt{m_{\mathcal X}})&={\mathbb P}\Bigl(\Bigl\|\sum_{k=1}^N Z^k\Bigr\|_F\ge \eta \sqrt{m_{\mathcal X}}\Bigr)\le \exp\Bigl(-\frac{m_{\mathcal X}\eta^2}{8\sigma^2}+\frac{1}{4}\Bigr)\\
\label{3rdeq'} &\le \exp\Bigl(-\frac{\eta^2N}{8(2mQ)^2}+\frac{1}{4}\Bigr).
\end{align}
The proof of \eqref{eq:estnorm1} now follows by combining \eqref{1steq}, \eqref{2ndeq} and \eqref{3rdeq'} together with
$$
\eta<\sqrt{\alpha(1-s)}\le \sqrt{\alpha}\le\sqrt{\sigma_m(J_{\mathcal V}[f])}\le\sqrt{\sigma_1(J_{\mathcal V}[f])}\le mC_0C_{\mathcal V}\le mQ,
$$
where we used also \eqref{eq:sigma_1}.}

In order to show \eqref{simplest1} and \eqref{simplest2}, let us fix $\eta>0$ such that 
$\varepsilon = \frac{2 \eta}{\sqrt{\alpha(1-s)}-\eta}$, i.e., $\eta = \frac{\varepsilon \sqrt{\alpha(1-s)}}{2+\varepsilon}$. 
We now choose $m_{\mathcal{X}}$ large enough to ensure that
\begin{equation*}
\delta/2 \geq m \exp\Bigl(-\frac{m_{\mathcal X}\alpha s^2  }{2 (mQ)^2}\Bigr),\quad \text{i.e.}\quad m_{\mathcal X} \geq \frac{2 (mQ)^2 \ln(2m/\delta)}{\alpha s^2},
\end{equation*}
and $N$ large enough for 
\begin{equation*}
\delta/2 \geq \exp\Bigl(-\frac{\eta^2N}{8(2mQ)^2}+\frac{1}{4}\Bigr).
\end{equation*}
We observe that for
\change{\begin{equation*}
N \geq \frac{32(2+\varepsilon)^2(mQ)^2\ln (3/\delta)}{\varepsilon^2\alpha (1-s)} 
\end{equation*}}
we can ensure $\|P_A-P_{\tilde A}\|_F \leq \varepsilon,$
with probability at least $ 1-  \delta$.
\qed

{\it Proof of Theorem \ref{thm:projapp2}.}
By Wedin's bound, we obtain as in the proof of Theorem \ref{thm3-1}
\begin{align}
\|P_\A-P_{\widetilde \A}\|_{F\to F}&\le \frac{2\|X_{\mathcal{V},2}-Y_{\mathcal{V},2}\|_F}{\sigma_{m}(X_{\mathcal{V},2}^T)-\|X_{\mathcal{V},2}-Y_{\mathcal{V},2}\|_F}. \label{1steq-1}
\end{align}
The analogue of \eqref{eq:est_rem} and \eqref{eq:sigma_1} now reads as
\begin{align}
& \label{eq:refX1} \sigma_1 \left (P^{\mathcal A}\left ( \int_{\mathbb R^d} f(x) \frac{\nabla^2 \varphi_{\nu}(x)}{p(x)}p(x) dx \right )\otimes_v
\left ( \int_{\mathbb R^d} f(x) \frac{\nabla^2 \varphi_{\nu}(x)}{p(x)}p(x)dx \right )^T(P^{\mathcal A})^T \right )\\
&\notag=\left  \|  \int_{\mathbb R^d} \nabla^2 f(x) \varphi_{\nu}(x) dx \right \|_F^2=\left  \|  \sum_{i=1}^m \left (\int_{\mathbb R^d} g_i''(\langle a_i,x\rangle) \varphi_{\nu}(x) dx\right)  a_i \otimes a_i \right \|_F^2\le (C_2 C_{\mathcal{V},2}m) ^2.
\end{align}
Then for any $s\in (0, 1)$  we have that
\begin{equation}\label{2ndeq-1}
\sigma_m(X_{\mathcal{V},2}) \geq \sqrt{m_{\mathcal X} \alpha_2 (1-s)}
\end{equation}
with probability at least $1 - m \exp\Bigl(-\frac{m_{\mathcal X}\alpha s^2  }{2 (C_2 C_{{\mathcal V},2} m)^2}\Bigr)$.

\change{
To estimate $\|X_{{\mathcal V},2}-Y_{{\mathcal V},2}\|_F$, we use again Theorem \ref{thm:vecber}. Let $Z^k\in\R^{m^2\times m_{\mathcal X}}$
be a matrix with columns 
$$
(Z^k)_j=\operatorname{vec}\left(\frac{f(x_k) +n_k}{N}\cdot \frac{\nabla^2 \varphi_{\nu_j}(x_k)}{p(x_k)}- \frac{1}{N}\int_{\mathbb R^d} f(x) \frac{\nabla^2 \varphi_{\nu_j}(x)}{p(x)} p(x) dx\right)
$$
for $j=1,\dots,m_{\mathcal X}$. Then ${\mathbb E} Z^k=0$,
$$
\|Z^k\|_F^2\le m_{\mathcal X}[2(mC_0+C_{\mathcal N})C_{{\mathcal V},2}]^2/N^2\le m_{{\mathcal X}}(2mQ)^2/N^2=:B^2
$$
and
\begin{align*}
\sum_{k=1}^N {\mathbb E}\|Z^k\|_F^2&=Nm_{\mathcal X}\int_{\mathbb R} \int_{\mathbb R^d}\left \| \frac{f(y)+n}{N} \frac{\nabla^2 \varphi_{\nu}(y)}{p(y)}-\frac{1}{N} \int_{\mathbb R^d} f(x) \frac{\nabla^2 \varphi_{\nu}(x)}{p(x)} p(x) dx\right  \|_F^2 p(y) dy d\omega(n)\\
&\le\frac{m_{\mathcal X}}{N}[(mC_0+C_{{\mathcal N}})C_{{\mathcal V},2}+mC_0C_{{\mathcal V},2}]^2\le m_{\mathcal X}(2mQ)^2/N=:\sigma^2,
\end{align*}
where $\omega$ is the probability distribution of the noise. By Theorem \ref{thm:vecber}, we obtain
$$
{\mathbb P}\left(\|X_{{\mathcal V},2}-Y_{{\mathcal V},2}\|_F>\eta\sqrt{m_{\mathcal X}}\right)={\mathbb P}\left(\left\|\sum_{k=1}^N Z^k\right\|_F\ge \eta\sqrt{m_{\mathcal X}}\right)\le\exp\left(-\frac{\eta^2N}{8(2mQ)^2}+\frac{1}{4}\right)
$$
for $0<\eta<2mQ$. This condition is satisfied for $\eta<\sqrt{\alpha_2(1-s)}$ by \eqref{eq:refX1}.
The proof of \eqref{eq:estnorm2-1} then follows by combining this estimate with \eqref{1steq-1} and \eqref{2ndeq-1}.

The proof of  \eqref{simplest1-1} and \eqref{simplest2-1} proceeds in the same manner as in the proof of Theorem \ref{thm3-1}.
}
\qed 

\subsection{Stability of the singular value decomposition}

Given two matrices $B$ and $\widetilde B$ with corresponding singular value decompositions
$$
B = \left (\begin{array}{lll}U_1&U_2\end{array}\right )
\left (\begin{array}{ll}\Sigma_1& 0\\0& \Sigma_2\\\end{array}\right )
\left (\begin{array}{l}V_1^T\\V_2^T\end{array}\right )
$$
and
$$
\widetilde B = \left (\begin{array}{lll}\widetilde  U_1& \widetilde U_2\end{array}\right )
\left (\begin{array}{ll}\widetilde \Sigma_1& 0\\0& \widetilde \Sigma_2\\\end{array}\right )
\left (\begin{array}{l}\widetilde V_1^T\\\widetilde  V_2^T\end{array}\right ),
$$
where it is understood that two corresponding submatrices, e.g., $U_1,\widetilde U_1$, have the same size,
we would like to bound the difference between $V_1$ and $\widetilde V_1$ by the error $\| B - \widetilde B\|_F$. 
As a consequence of Wedin's perturbation bound \cite{we72}, see also \cite[Section 7]{st90},
we have the following useful result.
\begin{thm}[Stability of subspaces - Wedin's bound]\label{wedin}
If there is an $\bar \alpha >0$ such that
\begin{equation}\label{separa1}
\min_{\ell,\tilde \ell} | \sigma_{\tilde \ell}(\widetilde \Sigma_1) - \sigma_{ \ell}( \Sigma_2) | \geq \bar \alpha,
\end{equation}
and
\begin{equation}\label{separa2}
\min_{\tilde \ell} | \sigma_{\tilde \ell}(\widetilde \Sigma_1) | \geq \bar \alpha,
\end{equation}
then {
\begin{equation*}
\max \{ \|\sin (\Theta(U_1, \widetilde U_1))\|_p , \|\sin (\Theta(V_1, \widetilde V_1))\|_p \} \leq \frac{1}{\bar \alpha} \| B - \widetilde B\|_p,
\end{equation*}
where $\Theta(V, W)$ is the vector of the principal angles between the subspaces $V$ and $W$ and $\| \cdot \|_p$ is an arbitrary $p$-norm or Schatten-$p$-norm for $1\leq p \leq \infty$. The case of $p=2$ corresponds to the Frobenius norm and the bound further specifies as follows:
\begin{equation*}
\max \{ \|U_1 U_1^T - \widetilde U_1 \widetilde U_1^T\|_F , \|V_1 V_1^T - \widetilde V_1 \widetilde V_1^T\|_F\} \leq \frac{\sqrt 2}{\bar \alpha} \| B - \widetilde B\|_F.
\end{equation*}
}
\end{thm}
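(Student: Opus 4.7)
The plan is to prove Wedin's bound by deriving Sylvester-type identities for the cross-block matrices $S=U_2^T\widetilde U_1$ and $T=V_2^T\widetilde V_1$, which encode the sines of the principal angles. First I would write $E:=B-\widetilde B$ and use the SVDs of $B$ and $\widetilde B$ to compute $U_2^T B=\Sigma_2 V_2^T$ and $\widetilde B\widetilde V_1=\widetilde U_1\widetilde\Sigma_1$, giving
\begin{equation*}
\Sigma_2 V_2^T\widetilde V_1 \;=\; U_2^T\widetilde U_1\widetilde\Sigma_1 + U_2^T E\widetilde V_1,
\end{equation*}
and analogously, working from $B^T$,
\begin{equation*}
\Sigma_2 U_2^T\widetilde U_1 \;=\; V_2^T\widetilde V_1\widetilde\Sigma_1 + V_2^T E^T\widetilde U_1.
\end{equation*}
Denoting $R:=U_2^T E\widetilde V_1$ and $Q:=V_2^T E^T\widetilde U_1$, this yields the coupled system $\Sigma_2 T - S\widetilde\Sigma_1=R$ and $\Sigma_2 S - T\widetilde\Sigma_1=Q$.

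Next I would diagonalize the action of the coefficients: in the bases where $\Sigma_2$ and $\widetilde\Sigma_1$ are diagonal, this system acts entrywise, so each pair $(S_{\ell\tilde\ell},T_{\ell\tilde\ell})$ satisfies a $2\times 2$ linear system with determinant $\sigma_\ell(\Sigma_2)^2-\sigma_{\tilde\ell}(\widetilde\Sigma_1)^2$. Hypotheses \eqref{separa1} and \eqref{separa2} guarantee that each such determinant is bounded in modulus by $\bar\alpha\cdot(\sigma_\ell(\Sigma_2)+\sigma_{\tilde\ell}(\widetilde\Sigma_1))$, and a short algebraic computation of the inverse shows that $\|S\|_p, \|T\|_p\le \bar\alpha^{-1}\max\{\|R\|_p,\|Q\|_p\}$ for every unitarily invariant norm $\|\cdot\|_p$. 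Since $R$ and $Q$ are obtained from $E$ by multiplication with partial isometries, $\max\{\|R\|_p,\|Q\|_p\}\le\|E\|_p$.

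The final step is to identify $\|S\|_p$ and $\|T\|_p$ with the sine-of-principal-angles quantities. Because $U_2^T$ projects onto the orthogonal complement of $\mathrm{range}(U_1)$, the singular values of $S=U_2^T\widetilde U_1$ are exactly the sines of the principal angles between $\mathrm{range}(U_1)$ and $\mathrm{range}(\widetilde U_1)$, and similarly for $T$; this gives $\|\sin\Theta(U_1,\widetilde U_1)\|_p=\|S\|_p$ and $\|\sin\Theta(V_1,\widetilde V_1)\|_p=\|T\|_p$, proving the first inequality. For the Frobenius-norm projection bound, I would invoke the classical identity $\|U_1U_1^T-\widetilde U_1\widetilde U_1^T\|_F=\sqrt{2}\,\|\sin\Theta(U_1,\widetilde U_1)\|_F$, which follows from expanding $\|U_1U_1^T-\widetilde U_1\widetilde U_1^T\|_F^2=2m-2\|U_1^T\widetilde U_1\|_F^2$ and using $\|\sin\Theta\|_F^2=m-\|U_1^T\widetilde U_1\|_F^2$, yielding the stated $\sqrt{2}/\bar\alpha$ constant.

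The main technical obstacle is handling the coupled system uniformly across all Schatten $p$-norms rather than just the Frobenius case: the entrywise decoupling is immediate only after passing to the singular bases, and one must then invoke the fact (essentially the Davis–Kahan–Weinberger theorem) that a Sylvester operator whose spectrum is bounded away from zero by $\bar\alpha$ is invertible with operator norm at most $\bar\alpha^{-1}$ on every unitarily invariant norm. Everything else reduces to careful bookkeeping with the SVDs and partial isometry estimates.
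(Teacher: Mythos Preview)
The paper does not actually prove this theorem: it is stated in the Appendix as a known tool and attributed to Wedin \cite{we72} (see also Stewart \cite{st90}), with no argument given. So there is no ``paper's proof'' to compare against; your sketch is essentially the classical derivation found in those references.

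Your outline is the standard one and is correct in spirit. Two small remarks. First, where you write that the entrywise determinant is ``bounded in modulus by $\bar\alpha\cdot(\sigma_\ell(\Sigma_2)+\sigma_{\tilde\ell}(\widetilde\Sigma_1))$'' you of course mean \emph{bounded below} in modulus; otherwise the inversion step makes no sense. Second, you are right that the entrywise $2\times 2$ inversion handles the Frobenius case cleanly but does not by itself give the bound for an arbitrary unitarily invariant norm, and your deferral to the Sylvester-operator bound (the operator $X\mapsto \Sigma_2 X - X\widetilde\Sigma_1$ has inverse of norm at most $1/\bar\alpha$ on every unitarily invariant norm when the spectra are $\bar\alpha$-separated) is exactly how Wedin and Stewart close that gap. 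The final step linking $\|\sin\Theta\|_F$ to $\|U_1U_1^T-\widetilde U_1\widetilde U_1^T\|_F$ via the factor $\sqrt{2}$ is standard and correctly stated.
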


\subsection{Spectral estimates and sums of random semidefinite matrices}

The value of $\sigma_m(X^T)$ can be estimated by certain matrix Chernoff bounds.
The following theorem generalizes Hoeffding's inequality to sums of random semidefinite matrices and was recently presented by 
Tropp in \cite[Corollary 5.2 and Remark 5.3]{tr10}, improving over results in \cite{ahwi02}, and using techniques from \cite{ruve07} and \cite{ol10}.
\begin{thm}[Matrix Chernoff]\label{chernmat}
Let $X_1, \dots, X_n$ be independent random, positive-semidefinite matrices of dimension $m \times m$. Moreover suppose that
\begin{equation*}
\sigma_1(X_j) \leq C
\end{equation*}
almost surely for all $j=1,\dots,n$. Let 
\begin{equation*}
\mu_{\min} = \sigma_m \Bigl ( \sum_{j=1}^n \mathbb E X_j\Bigr)
\end{equation*}
be the smallest singular value of the sum of the expectations.
Then
\begin{equation*}
\mathbb P \left \{ \sigma_m\left ( \sum_{j=1}^n X_j\right) - \mu_{\min} \leq -s \mu_{\min} \right \} 
\leq m\, \exp\Bigl(-\frac{ \mu_{\min}s^2}{2C}\Bigr),
\end{equation*}
for all $s \in (0,1)$.
\end{thm}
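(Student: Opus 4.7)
The plan is to apply the matrix Laplace transform method of Ahlswede--Winter as refined by Tropp~\cite{tr10}, a noncommutative extension of the classical scalar Chernoff argument. The starting point is the matrix Markov bound for the smallest eigenvalue: for any $\theta<0$,
$$\mathbb{P}\Bigl\{\sigma_m\Bigl(\sum_{j=1}^n X_j\Bigr)\leq t\Bigr\}=\mathbb{P}\Bigl\{\sigma_1\bigl(e^{\theta\sum_j X_j}\bigr)\geq e^{\theta t}\Bigr\}\leq e^{-\theta t}\,\mathbb{E}\operatorname{tr}\exp\Bigl(\theta\sum_{j=1}^n X_j\Bigr),$$
which follows from Markov's inequality applied to the trace exponential together with $\sigma_1(\exp Y)\leq\operatorname{tr}\exp Y$ and the fact that $x\mapsto e^{\theta x}$ reverses order when $\theta<0$. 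This reduces the problem to bounding the matrix moment generating function of $\sum_j X_j$ from above.

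The main obstacle is the subadditivity step. In the scalar case, independence of the $X_j$ yields a free factorization of the MGF into a product, but no such factorization is available in the noncommutative setting since $e^{A+B}\neq e^Ae^B$ in general. I would instead invoke Lieb's concavity theorem, which via a Jensen-type argument yields the master inequality
$$\mathbb{E}\operatorname{tr}\exp\Bigl(\theta\sum_{j=1}^n X_j\Bigr)\leq\operatorname{tr}\exp\Bigl(\sum_{j=1}^n\log\mathbb{E}\,e^{\theta X_j}\Bigr).$$
This deep operator-theoretic fact replaces a trivial scalar step and is the technical crux of the argument.

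Next, I would bound each individual log-MGF using the almost-sure hypothesis $0\preceq X_j\preceq C\cdot I$. The scalar convexity estimate $e^{\theta x}\leq 1+(e^{\theta C}-1)x/C$ valid on $[0,C]$ transfers to the Loewner order via spectral calculus, and the operator concavity of the logarithm produces $\log\mathbb{E}e^{\theta X_j}\preceq\bigl((e^{\theta C}-1)/C\bigr)\mathbb{E}X_j$. Summing over $j$, using monotonicity of the trace exponential under the Loewner order, and noting that $c:=(e^{\theta C}-1)/C<0$ for $\theta<0$ so that $\sigma_1\bigl(c\sum_j\mathbb{E}X_j\bigr)=c\,\mu_{\min}$, I obtain
$$\mathbb{E}\operatorname{tr}\exp\Bigl(\theta\sum_{j=1}^n X_j\Bigr)\leq m\exp\Bigl(\frac{e^{\theta C}-1}{C}\,\mu_{\min}\Bigr).$$

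Finally, setting $t=(1-s)\mu_{\min}$ and optimizing over $\theta<0$, the choice $\theta=(1/C)\log(1-s)$ gives the Chernoff exponent
$$\mathbb{P}\Bigl\{\sigma_m\Bigl(\sum_j X_j\Bigr)-\mu_{\min}\leq -s\mu_{\min}\Bigr\}\leq m\exp\Bigl(-\frac{\mu_{\min}}{C}\bigl[(1-s)\log(1-s)+s\bigr]\Bigr),$$
and the elementary inequality $(1-s)\log(1-s)+s\geq s^2/2$ on $s\in(0,1)$, verifiable by comparing Taylor expansions, delivers the stated bound $m\exp(-\mu_{\min}s^2/(2C))$.
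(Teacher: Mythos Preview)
The paper does not supply its own proof of this theorem: it is stated in the Appendix with a direct citation to Tropp~\cite[Corollary~5.2 and Remark~5.3]{tr10} and is used as a black box throughout. Your sketch is correct and is precisely the Lieb-concavity/matrix-Laplace-transform argument from that reference, so there is nothing to compare against.
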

Recall that for some $d_1\times d_2$-matrix $A$ its spectral norm is defined as $\max(\|AA^T\|,\|A^TA\|)^{1/2}$ (i.e. its largest singular value).
For $d_1\times 1$-matrices (i.e. vectors) this gives simply its $\ell_2$-norm.

\begin{cor}\label{chernmat2}
	Let $X_1,\ldots,X_N$ be independent, mean-zero $d_1\times d_2$-random matrices. Assume that
	\[
		\|X_j\|\leq K
	\]
	almost surely for all $1\leq j\leq N$, and denote
	\[
		\sigma^2=\max\Biggl(\biggl\|\sum_{j=1}^N\mathbb{E}(X_jX_j^T)\biggr\|,
			\biggl\|\sum_{j=1}^N\mathbb{E}(X_j^T X_j)\biggr\|\Biggr)\,.
	\]
	Then it holds
	\[
		\rechange{\mathbb P}\Biggl(\biggl\|\sum_{j=1}^NX_j\biggr\|>\eta\Biggr)
			\leq (d_1+d_2)\exp\Bigl(-\frac{\eta^2}{2(\sigma^2+K\eta/3)}\Bigr)\,.
	\]
\end{cor}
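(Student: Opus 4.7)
The plan is to reduce the rectangular case to the Hermitian matrix Bernstein inequality, which is the standard companion to the matrix Chernoff bound of Theorem \ref{chernmat} and can be found, for instance, in Tropp \cite{tr10}. The reduction is performed via the classical self-adjoint dilation trick, and the hypotheses of Theorem \ref{chernmat2} are tailor-made to apply exactly in this setting.

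First, for each $d_1\times d_2$ matrix $X_j$ I would introduce the Hermitian dilation
\[
\widetilde X_j := \begin{pmatrix} 0 & X_j \\ X_j^T & 0 \end{pmatrix} \in \R^{(d_1+d_2)\times (d_1+d_2)}.
\]
A direct computation gives $\|\widetilde X_j\| = \|X_j\| \le K$ almost surely, $\mathbb E \widetilde X_j = 0$, and
\[
\widetilde X_j^2 = \begin{pmatrix} X_j X_j^T & 0 \\ 0 & X_j^T X_j \end{pmatrix}.
\]
Moreover $\|\sum_j \widetilde X_j\| = \|\sum_j X_j\|$, so the tail probability in the statement is invariant under this dilation.

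Next I would verify the variance bound. Using the block-diagonal identity above and the fact that the spectral norm of a block-diagonal Hermitian matrix is the maximum of the norms of its blocks,
\[
\Bigl\|\sum_{j=1}^N \mathbb E \widetilde X_j^2\Bigr\| = \max\Bigl(\Bigl\|\sum_{j=1}^N \mathbb E(X_jX_j^T)\Bigr\|, \Bigl\|\sum_{j=1}^N \mathbb E(X_j^TX_j)\Bigr\|\Bigr) = \sigma^2.
\]
This matches precisely the variance parameter that appears in the standard Hermitian matrix Bernstein inequality: for independent, Hermitian, mean-zero $D\times D$ matrices $Z_j$ with $\|Z_j\|\le K$ and $\|\sum_j \mathbb E Z_j^2\|\le \sigma^2$, one has
\[
\mathbb P\Bigl(\Bigl\|\sum_{j=1}^N Z_j\Bigr\|>\eta\Bigr)\le D\exp\Bigl(-\frac{\eta^2}{2(\sigma^2+K\eta/3)}\Bigr).
\]

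Applying this inequality to $Z_j=\widetilde X_j$ with $D=d_1+d_2$ and using the identities established in the previous steps yields the stated bound. There is really no single ``hard'' step here; the only subtle point is checking the equality of the two variance proxies, but this is immediate from the block structure of $\widetilde X_j^2$ together with the invariance of the spectral norm under the dilation.
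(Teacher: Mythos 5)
Your argument is correct and is exactly the standard derivation of the rectangular matrix Bernstein inequality from the Hermitian version by self-adjoint dilation, which is what the paper implicitly invokes by citing Tropp \cite{tr10}; the paper itself states the corollary without proof. All the identities you use (spectrum of the dilation, block structure of $\widetilde X_j^2$, matching of the variance proxy) are routine and hold as stated, so there is nothing to add.
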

We apply this result for random vectors $Y_\ell=X_\ell-X$, where $X=\mathbb{E}X_\ell$, to estimate $\|\frac{1}{N}\sum_{\ell=1}^N X_\ell-X\|$.

\change{
We shall use also the vector valued analogue of Theorem \ref{chernmat} in the form presented in \cite[Proposition 7]{KG14}, see also \cite[Chapter 6.3]{LT}
and \cite[Chapter 8.9]{FR}.
\begin{thm}[Vector Bernstein inequality]\label{thm:vecber} Let $Z_1,\dots,Z_N$ be independent random vectors in $\R^d$ with ${\mathbb E} Z_k=0$ and
$\|Z_k\|_2\le B$ almost surely for all $k=1,\dots,N.$ If $\displaystyle \sigma^2\ge \sum_{k=1}^N {\mathbb E}\|Y_k\|_2^2$,
then for all $0\le t\le \sigma^2/B$
$$
{\mathbb P}\Bigl(\Bigl\|\sum_{k=1}^N Z_k\Bigr\|_2\ge t\Bigr)\le \exp\Bigl(-\frac{t^2}{8\sigma^2}+\frac{1}{4}\Bigr).
$$
\end{thm}}

\change{
\section*{Acknowledgement} 
We authors wish to thank profusely the anonymous Referee for the suggestions, which
greatly improved both results and presentation of the paper.}

\thebibliography{99}

\bibitem{ahwi02}  R. Ahlswede and A. Winter, \emph{Strong converse for identification via quantum channel}, 
IEEE Trans. Inform. Theory 48(3) (2002), 569--579.
\bibitem{angeja} A. Anandkumar, R. Ge, and M. Janzamin, \emph{Guaranteed non-orthogonal tensor decomposition via alternating rank-$1$ updates}, arXiv:1402.5180, 2014.
\bibitem{ba17} F. Bach, \emph{Breaking the curse of dimensionality with convex neural networks}, 
J. Mach. Learn. Res. 18 (2017), 1--53.
\bibitem{BR92} A. L. Blum and R.  L. Rivest,   \emph{Training a  3-node neural network is  NP-complete.} Neural Networks 5 (1) (1992), 117--127.
\bibitem{BEFB} S. Boyd, L. El Ghaoui, E. Feron, and V. Balakrishnan, Linear matrix inequalities in system and control theory,
SIAM Studies in Applied Mathematics 15, Society for Industrial and Applied Mathematics (SIAM), Philadelphia, 1994.
\bibitem{BV} S. Boyd and L. Vandenberghe, Convex optimization, Cambridge University Press, Cambridge, 2004.
\bibitem{Intro5} T. M. Breuel, A. Ul-Hasan, M. A. Al-Azawi, and F. Shafait,
\emph{High-performance OCR for printed English and Fraktur using LSTM networks},
In: 12th International Conference on Document Analysis and Recognition (2013), 683--687.
\bibitem{BP} M. D. Buhmann and A.~Pinkus, \emph{Identifying linear combinations of ridge functions},
Adv. in Appl. Math. 22 (1999), no. 1, 103--118.
\bibitem{ca03}  E. J. Cand\`es, \emph{Ridgelets: estimating with ridge functions},  Ann. Stat. 31 (5) (2003), 1561--1599.
\bibitem{Intro11} N. Carlini and D. Wagner, \emph{Towards evaluating the robustness of neural networks},
In: 2017 IEEE Symposium on Security and Privacy (SP) (2017), pp. 39--57.
\bibitem{chci02} L. Chiantini and C. Ciliberto, \emph{Weakly defective varieties}, Trans. Amer. Math. Soc. 354(1) (2002), 151--178.
\bibitem{Intro1} D.C. Ciresan, U. Meier, J. Masci, and J. Schmidhuber, \emph{Multi-column deep neural network for traffic sign classification},
Neural Networks 32 (2012), 333--338.
\bibitem{codadekepi12} A. Cohen, I. Daubechies, R. DeVore, G. Kerkyacharian, and D. Picard, \emph{Capturing ridge functions in high dimensions from point queries},
Constr. Approx. 35 (2) (2012), 225--243.
\bibitem{co15} P. Constantine, Active Subspaces: Emerging Ideas for Dimension Reduction in Parameter Studies,
 SIAM Spotlights 2., Society for Industrial and Applied Mathematics (SIAM), Philadelphia, 2015.
\bibitem{co14} P. Constantine, E. Dow, and Q. Wang, \emph{Active subspaces in theory and practice: Applications to kriging surfaces}, SIAM J. Sci. Comput. 36 (2014), pp. A1500--A1524.
\bibitem{deli08} Vi. De Silva and L.-H. Lim, \emph{Tensor rank and the ill-posedness of the best low-rank approximation
problem}, SIAM J. Matrix Anal. Appl. 30 (3) (2008), 1084--1127.
\bibitem{deospe97} R.~DeVore, K.~Oskolkov, and P.~Petrushev, \emph{Approximation of feed-forward neural networks}, Ann. Numer. Math. 4 (1997), 261--287.
\bibitem{degy85} L. Devroye and L. Gy{\"o}rfi, Nonparametric  Density  Estimation, Wiley Series in Probability and
Mathematical Statistics: Tracts on Probability and Statistics, John Wiley $\&$ Sons Inc., New York, 1985.
\bibitem{dojo89} D.~L.~Donoho and I.~M.~Johnstone, \emph{Projection-based approximation and a duality with kernel methods}, Ann. Stat. 17 (1) (1989), 58--106.
\bibitem{erre61} P. Erd{\"o}s and A. R\'enyi, \emph{On a classical problem of probability theory}, Magyar Tudom\'anyos Akad\'emia Matematikai Kutat\'o Int\'ezet\'enek K{\"o}zlem\'enyei, 6 (1961), 215--220.
\bibitem{FKRV} M. Fornasier, T. Klock, and M. Rauchensteiner, 
\emph{Robust and resource efficient identification of two hidden layer neural networks}, arXiv: 1907.00485, 2019.
\bibitem{FSV} M. Fornasier, K. Schnass, and J. Vyb\'\i ral, \emph{Learning functions of few arbitrary linear parameters in high dimensions},
Found. Comput. Math. 12 (2) (2012), 229--262.
\bibitem{FR} S. Foucart and H. Rauhut, A mathematical introduction to compressive sensing, Birkh\"auser/Springer, New York, NY, 2013.
\bibitem{Intro4} A. Graves,  A.-R. Mohamed, and G. E. Hinton, \emph{Speech recognition with deep recurrent neural networks},
In: IEEE International Conference on Acoustics, Speech and Signal Processing (ICASSP) (2013), 6645--6649.
\bibitem{HA12} W. Hackbusch, Tensor Spaces and Numerical Tensor Calculus, Springer, 2012.
\bibitem{hamatr11} N. Halko, P. G. Martinsson, and J. A. Tropp, \emph{Finding structure with randomness:
Probabilistic algorithms for constructing approximate matrix decompositions}, SIAM Rev. 53 (2) (2011), 217--288.
\bibitem{hastad} J. H\aa stad, \emph{Tensor rank is NP-complete}, J. Algorithms 11 (4) (1990), 644--654.
\bibitem{hilim} Ch. J. Hillar and L.-H. Lim, \emph{Most tensor problems are NP-hard}, J. ACM 60 (6) (2013), 1--45.
\bibitem{HOT06} G. E. Hinton, S. Osindero, and Y. W. Teh, \emph{A fast learning algorithm for deep belief nets}, Neural Comput. 18 (7) (2006), 1527--1554.
\bibitem{HS06} G. E. Hinton and R. Salakhutdinov, \emph{Reducing the dimensionality of data with neural networks}, Science 313 (5786) (2006), 504--507.
\bibitem{hu85} P.~J.~Huber, \emph{Projection pursuit}, Ann. Stat. 13 (2) (1985), 435--525.
\bibitem{is95} R. Isaac, \emph{The Pleasures of Probability}, Undergraduate Texts in Mathematics, New York: Springer-Verlag, pp. 80–82, 1995
\bibitem{jasean} M. Janzamin, H. Sedghi, and A. Anandkumar,
\emph{Beating the perils of non-convexity: guaranteed training of neural networks using tensor methods}, arXiv:1506.08473.
\bibitem{Judd} J. S. Judd, Neural network design and the complexity of learning, MIT press, 1990.
\bibitem{Kaw16} K. Kawaguchi, \emph{Deep learning without poor local minima}, Advances in Neural Information
Processing Systems (NIPS 2016). 
\bibitem{kolda} T. G. Kolda,  \emph{Symmetric orthogonal tensor decomposition is trivial}, arXiv:1503.01375, 2015
\bibitem{Intro3} A. Krizhevsky, I. Sutskever, and G. E. Hinton, \emph{Imagenet classification with deep convolutional neural networks},
In: Advances in Neural Information Processing Systems (NIPS) (2012), 1--9.
\bibitem{KG14} R. Kueng, and D. Gross, \emph{RIPless compressed sensing from anisotropic measurements}, Linear Algebra Appl. 441 (2014), 110--123.
\bibitem{LT}  M. Ledoux, M. Talagrand, Probability in Banach Spaces: Isoperimetry and Processes, Springer, Berlin, 1991.
\bibitem{kli92} K. Li, \emph{On principal hessian directions for data visualization and dimension reduction: another application of Stein's Lemma},
J. Am. Stat. Assoc. 87 (420) (1992), 1025--1039.
\bibitem{li02} X. Li, \emph{Interpolation by ridge polynomials and its application in neural networks}, J. Comput. Appl. Math. 144 (1-2) (2002), 197--209.
\bibitem{li92} W.~Light, \emph{Ridge functions, sigmoidal functions and neural networks}, Approximation theory VII, Proc. 7th Int. Symp., Austin/TX (USA) 1992, 163--206 (1993)
\bibitem{losh75} B.~F.~Logan and L.~A.~Shepp, \emph{Optimal reconstruction of a function from its projections}, Duke Math. J. 42 (1975), 645--659.
\bibitem{maulvyXX} S. Mayer, T. Ullrich, and J. Vyb\'\i ral, \emph{Entropy and sampling numbers of classes of ridge functions}, Constr. Appr. 42 (2) (2015), 231--264.
\bibitem{memimo19} S. Mei, T. Misiakiewicz, A. Montanari, \emph{Mean-field theory of two-layers neural networks: dimension-free bounds and kernel limit},  
In 
Proceedings of the 32nd Conference on Learning Theory, volume 99, pp. 2388--2464, PMLR, 2019.
\bibitem{me06} M. Mella, \emph{Singularities of linear systems and the waring problem}, Trans. Amer. Math. Soc. 358(12) (2006), 5523--5538.
\bibitem{momo} M. Mondelli and A. Montanari, \emph{On the connection between learning two-layers neural networks and tensor decomposition},
In Proceedings of the 22nd International Conference on Artificial Intelligence and Statistics, volume 89, pp. 1051--1060, PMLR, 2019.
\bibitem{Intro7} M. Morav\v{c}\'\i k, M. Schmid, N. Burch, V. Lis\'y, D. Morrill, N. Bard, T. Davis, K. Waugh, M. Johanson, and M. Bowling,
\emph{Deepstack: Expert-level artificial intelligence in heads-up no-limit poker}, Science 356, no. 6337 (2017), 508--513.
\bibitem{yusousXX}Y. Nakatsukasa, T. Soma, and A. Uschmajew, \emph{Finding a low-rank basis in a matrix subspace}, Math. Program. 162 (1-2), Ser. A (2017), 325--361.
\bibitem{oeot13} L. Oeding and G. Ottaviani, \emph{Eigenvectors of tensors and algorithms for waring decomposition},
J. Symb. Comput. 54 (2013), 9--35.
\bibitem{ol10} R.~I.~Oliveira, \emph{Sums of random Hermitian matrices and an inequality by Rudelson}, Electron. Commun. Probab. 15 (2010), 203--212.
\bibitem{pe99} P.~P.~Petrushev, \emph{Approximation by ridge functions and neural networks}, SIAM J. Math. Anal. 30 (1) (1999), 155--189.
\bibitem{pi97} A. Pinkus, Approximating by ridge functions. 
Le M\'ehaut\'e, Alain (ed.) et al., Surface fitting and multiresolution methods. Vol. 2 of the proceedings of the 3rd international conference on Curves and surfaces, held in Chamonix-Mont-Blanc, France, June 27-July 3, 1996. Nashville, TN: Vanderbilt University Press. 279--292 (1997)
\bibitem{qusuwrXX} Q. Qu, J. Sun, and J.Wright, \emph{Finding a sparse vector in a subspace: Linear sparsity using
alternating directions},  IEEE Trans. Inform. Theory 62(10) (2016), 5855--5880.
\bibitem{ruve07} M.~Rudelson and R.~Vershynin, \emph{Sampling from large matrices: An approach through geometric functional analysis},
J. ACM 54 (4), (2007), Art. 21, 19 pp.
\bibitem{Intro6} D. Silver, A. Huang, C. J. Maddison, A. Guez, L. Sifre, G. Van Den Driessche, J. Schrittwieser et al.,
\emph{Mastering the game of Go with deep neural networks and tree search}, Nature 529, no. 7587 (2016), 484--489.
\bibitem{SoCa16} D. Soudry and Y. Carmon, \emph{No bad local minima: Data independent training error
guarantees for multilayer neural networks}, arXiv:1605.08361.
\bibitem{Intro2} J. Stallkamp, M. Schlipsing, J. Salmen, and C. Igel, \emph{Man vs. computer:  Benchmarking
machine learning algorithms for traffic sign recognition}, Neural Networks 32 (2012), 323--332.
\bibitem{Stein} C. Stein, \emph{Estimation of the mean of a multivariate normal distribution}, Ann. Stat. 9 (1981), 1135--1151.
\bibitem{st90} G.~W.~Stewart, \emph{Perturbation theory for the singular value decomposition},
in SVD and Signal Processing, II, ed. R.~J.~Vacarro, Elsevier, 1991.
\bibitem{rob14} E. Robeva, \emph{Orthogonal decomposition of symmetric tensors},
SIAM J. Matrix Anal. Appl. 37 (1) (2016), 86--102.
\bibitem{rova18} G. M. Rotskoff, E. Vanden-Eijnden,  \emph{Neural Networks as Interacting Particle Systems: Asymptotic Convexity of the Loss Landscape and Universal Scaling of the Approximation Error}, arXiv:1805.00915, 2018
\bibitem{Intro12} I. Sturm, S. Lapuschkin, W. Samek, and K.-R. M\"uller,
\emph{Interpretable deep neural networks for single-trial EEG classification}, J. Neuroscience Methods 274 (2016), 141--145.
\bibitem{Tao_RM} T. Tao, \emph{Topics in random matrix theory}, Vol. 132, American Mathematical Soc., 2012.
\bibitem{tr10} J. Tropp, \emph{User-friendly tail bounds for sums of random matrices},
Found. Comput. Math. 12 (4) (2012), 389--434.
\bibitem{we72} P.-A.~Wedin, \emph{Perturbation bounds in connection with singular value decomposition}, BIT 12 (1972), 99--111.

\end{document}